\setlist[itemize]{itemsep=2pt, topsep=0pt, partopsep=0pt, parsep=0pt}
\theoremstyle{plain}
\newtheorem{theorem}{Theorem}
\newtheorem{lemma}[theorem]{Lemma}
\newtheorem{corollary}[theorem]{Corollary}
\theoremstyle{definition}
\newtheorem{assumption}[theorem]{Assumption}
\newtheorem{remark}[theorem]{Remark}
\title{Variance-Reduced Gradient Estimation via Noise-Reuse in Online Evolution Strategies} %
\author{
Oscar Li$^\S$$^1$, James Harrison$^\Diamond$, Jascha Sohl-Dickstein$^\Diamond$, Virginia Smith$^\S$, Luke Metz$^\Diamond$$^2$ \\
{\small $^\S$Machine Learning Department, School of Computer Science \;\; Carnegie Mellon University}\\
{\small $^\Diamond$Google DeepMind}
}
\begin{document}
\maketitle
\addtocounter{footnote}{1}
\footnotetext{Correspondence to: $<$oscarli@cmu.edu$>$. $^2$Now at OpenAI.}
\addtocounter{footnote}{1}

\begin{abstract}
Unrolled computation graphs are prevalent throughout machine learning but present challenges to automatic differentiation (AD) gradient estimation methods when their loss functions exhibit extreme local sensitivtiy, discontinuity, or blackbox characteristics. In such scenarios, online evolution strategies methods are a more capable alternative, while being more parallelizable than vanilla evolution strategies (ES) by interleaving partial unrolls and gradient updates.
In this work, we propose a general class of unbiased online evolution strategies methods. 
We analytically and empirically characterize the variance of this class of gradient estimators and identify the one with the least variance, which we term Noise-Reuse Evolution Strategies (NRES). Experimentally\footnote{Code available at \href{https://github.com/OscarcarLi/Noise-Reuse-Evolution-Strategies}{https://github.com/OscarcarLi/Noise-Reuse-Evolution-Strategies}.}, we show NRES results in faster convergence than existing AD and ES methods in terms of wall-clock time and number of unroll steps across a variety of applications, including learning dynamical systems, meta-training learned optimizers, and reinforcement learning.

\end{abstract}

\addtocontents{toc}{\protect\setcounter{tocdepth}{0}}

\section{Introduction}

First-order optimization methods are a foundational tool in machine learning. 
With many such methods (e.g., SGD, Adam) available in existing software, ML training often amounts to specifying a computation graph of learnable parameters and computing some notion of gradients to pass into an off-the-shelf optimizer. Here, \textit{unrolled computation graphs} (UCGs), where the same learnable parameters are repeatedly applied to transition a dynamical system’s inner state, have found their use in various applications such as recurrent neural networks \citep{hochreiter1997long, cho2014learning}, meta-training learned optimizers \citep{metz2019understanding, harrison2022a}, hyperpameter tuning \citep{maclaurin2015gradient, pmlr-v70-franceschi17a}, dataset distillation \citep{wang2018dataset, cazenavette2022dataset}, and reinforcement learning \citep{sutton1999policy, pmlr-v37-schulman15}.

While a large number of automatic differentiation (AD) methods exist to estimate gradients in UCGs~\citep{baydin2018automatic}, they often perform poorly over loss landscapes with extreme local sensitivity and cannot handle black-box computation dynamics or discontinuous losses~\citep{pmlr-v80-parmas18a, metz2019understanding, metz2021gradients}. To handle these shortcomings, evolution strategies (ES) have become a popular alternative to produce gradient estimates in UCGs \citep{salimans2017evolution}. ES methods convolve the (potentially pathological or discontinuous) loss surface with a Gaussian distribution in the learnable parameter space, making it smoother and infinitely differentiable. Unfortunately, vanilla ES methods cannot be applied online\footnote{\textit{Online} here means a method can produce gradient estimates using only \textit{a truncation window} of an unrolled computation graph \textit{instead of the full graph}, thus allowing the interleaving of partial unrolls and gradient updates.} --- the computation must reach the end of the graph to produce a gradient update, thus incurring large update latency for long UCGs.
To address this, a recently proposed approach, Persistent Evolution Strategies \citep{vicol21unbiased} ($\PES$), samples a new Gaussian noise in every truncation unroll and accumulates the past sampled noises to get rid of the estimation bias in its online application. %

In this work, we investigate the coupling of the noise sampling frequency and the gradient estimation frequency in $\PES$. By decoupling these two values, we arrive at a more general class of unbiased, online ES gradient estimators. Through a variance characterization of these estimators, we find that the one which provably has the lowest variance in fact reuses the same noise for the entire time horizon (instead of over a single truncation window as in $\PES$). We name this method \textit{Noise-Reuse Evolution Strategies} ($\NRES$). In addition to being simple to implement, $\NRES$ converges faster than $\PES$ across a wide variety of applications due to its reduced variance. Overall, we make the following contributions: 
\begin{itemize}[leftmargin=*]
    \item We propose a class of unbiased online evolution strategies gradient estimators for unrolled computation graphs that generalize Persistent Evolution Strategies~\citep{vicol21unbiased}.
    \item We analytically and empirically characterize the variance of this class of estimators and identify the lowest-variance estimator which we name Noise-Reuse Evolution Strategies ($\NRES$).
    \item We identify the connection between $\NRES$ and the existing offline ES method $\FullES$ and show that $\NRES$ is a better alternative to $\FullES$ both in terms of parallelizability and variance.
    \item We demonstrate that $\NRES$ can provide optimization convergence speedups (up to 5-60$\times$) over AD/ES baselines in terms of wall-clock time and number of unroll steps in applications of 1) learning dynamical systems, 2) meta-training learned optimizers, and 3) reinforcement learning. 
\end{itemize}

\section{Online Evolution Strategies: Background and Related Work}
\label{sec:background}

\begin{figure}[t]
\begin{minipage}{0.05\textwidth}
    ~
\end{minipage}
\begin{minipage}{0.3\textwidth}
\begin{overpic}[width=\textwidth]{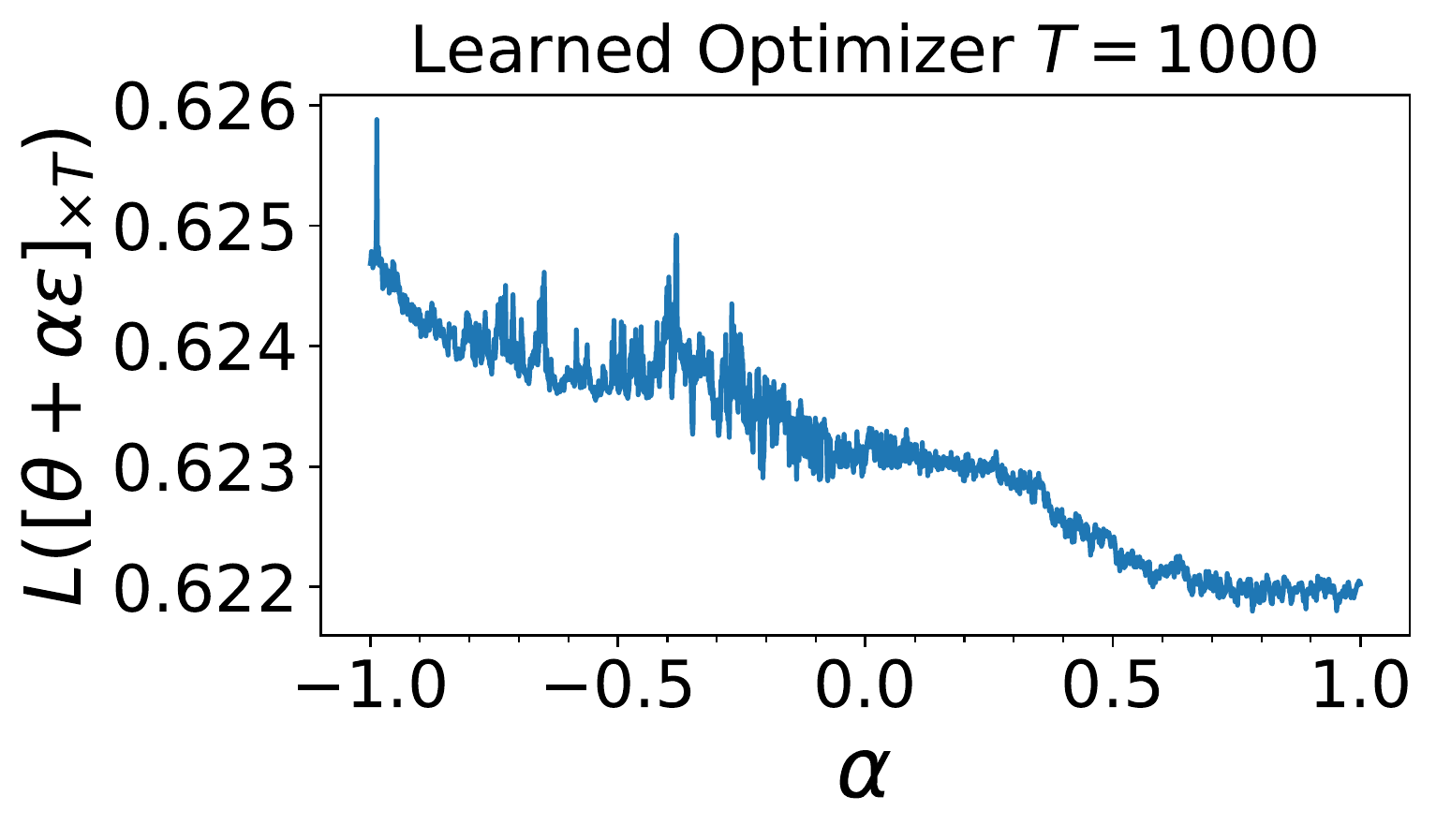}
    \put (-10,52.0) {\textbf{(a)}}
    \put (110, 52.0) {\textbf{(b)}}
\end{overpic}
\end{minipage}
\hfill
\begin{minipage}{0.6\textwidth}
    \;\;
    \small
    \begin{tabular}{l||c|c|c}
        method & online? & unbiased? & low variance?\\
     \hline

        $\FullES$ & \xmark & \cmark & \cmark \\
        $\TES$ \citep{metz2019understanding} & \cmark & \xmark & \cmark \\
        $\PES$ \citep{vicol21unbiased} & \cmark & \cmark & \xmark\\
        $\NRES$ (ours) & \cmark & \cmark & \cmark \\
    \end{tabular}
    ~

    ~

    ~
\end{minipage}
\vspace{-.05in}
\caption{(a) The pathological loss surface in the learned optimizer task (Sec.~\ref{exp:lopt}) along a random $\bepsilon$ direction; such surfaces are common in UCGs but can make automatic differentiation methods unusable, leading to the recent development of evolution strategies methods. (b) Comparison of properties of different evolution strategies methods. Unlike prior online ES methods, $\NRES$ produces both unbiased and low-variance gradient estimates.}
\label{fig:loss}
\end{figure}

\paragraph{Problem setup.} 
Unrolled computation graphs (UCGs) \cite{vicol21unbiased} are common in applications such as training recurrent neural networks, meta-training learned optimizers, and learning reinforcement learning policies, where the same set of parameters are repeatedly used to update the inner state of some system. 
We consider general UCGs where the inner state $s_t \in \Real^p$ is updated with learnable parameters $\theta \in \Real^d$ through transition functions: $\{f_t: \Real^p \times \Real^d \rightarrow \Real^p\}_{t=1}^T$, $s_{t} = f_{t}(s_{t-1}; \theta)$ for $T$ time steps starting from an initial state $s_0$.
At each time step $t \in \cbrck{1, \ldots, T}$, the state $s_t$ incurs a loss $L^s_t(s_t)$. As the loss $L^s_t$ depends on $t$ applications of $\theta$, we make this dependence more explicit with a loss function {\small $L_t: \Real^{dt} \rightarrow \Real$} and {\small $L_t([\theta]_{\times t}) \coloneqq L^s_t(s_t)$}\footnote{$[\theta]_{\times a}$, $a \in \Integer_{\ge0}$ denotes $a$ copies of $\theta$, for example: {\scriptsize $L_t(\brck{\theta}_{\times a}, \brck{\theta'}_{\times {t-a}}) \coloneqq L_t(\underbrace{\theta,\ldots, \theta}_{\textrm{$a$ times}}, \underbrace{\theta',\ldots, \theta'}_{\textrm{$(t-a)$ times}})$.}}. We aim to minimize the average loss over all $T$ time steps unrolled under the same $\theta$, {\small $\min_\theta L([\theta]_{\times{T}})$}, where
\small
\vspace{-0.5em}
\begin{align}
\label{eq:loss_definition}
    L(\theta_1, \ldots, \theta_T) \coloneqq \frac{1}{T} \sum_{t=1}^T L_t(\theta_1, \ldots, \theta_t).
\end{align}
\normalsize
\paragraph{Loss properties.} \vspace{-1.0em} 
Despite the existence of many automatic differentiation (AD) techniques 
to estimate gradients in UCGs~\citep{baydin2018automatic}, there are common scenarios where they are undesirable: 
\textbf{1)}~\textit{Loss surfaces with extreme local sensitiviy}: With large number of unrolls in UCGs, the induced loss surface is prone to high degrees of sharpness and many suboptimal local minima (see Figure~\ref{fig:loss}). This issue is particularly prevalent when the underlying dynamical system is chaotic under the parameter ($\theta$) of interest \cite{metz2021gradients}, e.g. in model-based control \citep{pmlr-v80-parmas18a} and meta-learning \cite{metz2019understanding}. In such cases, naively following the gradient may either \textit{a)} fail to converge under the normal range of learning rates (because of the conflicting gradient directions) or \textit{b)} converge to highly suboptimal solutions using a tuned, yet much smaller learning rate. \textbf{2)} \textit{Black-box or discontinuous losses}: As AD methods require defining a Jacobian-vector product (forward-mode) or a vector-Jacobian product (reverse-mode) for every elementary operation in the computation graph, they cannot be applied  when the UCG's inner dynamics are inaccessible (e.g., model-free reinforcement learning) or the loss objectives (e.g. accuracy) are piecewise constant (zero gradients).

\paragraph{Evolution Strategies.} %
Due to the issues with AD methods described above, a common alternative is to use evolution strategies (ES) to estimate gradients. Here, the original loss function is convolved with an isotropic Gaussian distribution in the space of $\theta$, resulting in an infinitely differentiable loss function with lower sharpness and fewer local minima than before ($\sigma >0 $ is a hyperparameter):
\begin{align}
    \theta \mapsto \E_{\bepsilon \sim \calN(\bzero, \sigma^2 I_{d\times d})} L([\theta + \bepsilon]_{\times T}).
    \label{eq:es_smoothing}
\end{align}
An unbiased gradient estimator of \eqref{eq:es_smoothing} is given by the likelihood ratio gradient estimator~\cite{glynn1990likelihood}: $\frac{1}{\sigma^2} L([\theta + \bepsilon]_{\times T}) \bepsilon$. This estimator only requires the loss evaluation (hence is zeroth-order) but not an explicit computation of the gradient, thus being applicable in cases when the gradients are noninformative (chaotic or piecewise constant loss) or not directly computable (black-box loss).
To reduce the variance, antithetic sampling is used and we call this estimator $\FullES$ (Algorithm~\ref{alg:fulles} in the Appendix):
{
\small
\begin{align}
    \mathrm{FullES}(\theta) \coloneqq \frac{1}{2\sigma^2} \bigg [ \frac{1}{T} \sum_{i=1}^T (L_i([\theta + \bepsilon]_{\times i}) - L_i([\theta - \bepsilon]_{\times i}) \bigg] \bepsilon.\label{eq:fulles_estimator}
\end{align}
}%
The term $\mathrm{Full}$ highlights that this estimator can only produce a gradient estimate after a \textit{full} sequential unroll from $t=0$ to $T$. We call such a full unroll an \textit{episode} following the reinforcement learning terminology. $\FullES$ can be parallelized \citep{salimans2017evolution} by averaging $N$ parallel gradient estimates using \textit{i.i.d.} $\bepsilon$'s, but is not online and can result in substantial latency between gradient updates when $T$ is large.

\paragraph{Truncated Evolution Strategies.}
To make $\mathrm{FullES}$ online, \citet{metz2019understanding} take inspiration from truncated backpropagation through time ($\TBPTT$) and propose the algorithm $\mathrm{TES}$ (see Algorithm \ref{alg:tes} in the Appendix). Unlike the stateless estimator $\FullES$, $\TES$ is stateful: $\TES$ starts from a saved state $s$ (from the previous iteration) and draws a new $\bepsilon_i$ for antithetic unrolling. To make itself online, $\TES$ only unrolls for a truncation window of $W$ steps for every gradient estimate, thus reducing the latency from $O(T)$ to $O(W)$. Analytically,
\small
\begin{align}
    \TES(\theta) \coloneqq \frac{1}{2\sigma^2W} \sum_{i=1}^W \big [L_{\btau + i}([\textcolor{gray}{\theta}]_{\scaleto{\times \btau}{4pt}}, [\orange{\theta} + \bepsilon_{(\btau / W) + 1}]_{\scaleto{\times i}{4pt}}) - L_{\btau + i}([\textcolor{gray}{\theta}]_{\scaleto{\times \btau}{4pt}}, [\orange{\theta} - \bepsilon_{(\btau / W) + 1}]_{\scaleto{\times i}{4pt}}) \big] \bepsilon_{(\btau / W) + 1}. \label{eq:tes_estimator}
\end{align}
\normalsize
Here, besides the Gaussian random variables {\small $\bepsilon_i \big \vert_{i=1}^{T/W} \iid \calN(\bzero, \sigma^2 I_{d\times d})$}, the time step $\btau$ which $\TES$ starts from is also a random variable drawn from the uniform distribution $\btau \sim \Unif\{0, W, \ldots, T - W\}$\footnote{The random variable $\btau$ will always be sampled from this uniform distribution. In addition, we will assume the time horizon $T$ can be evenly divided into truncation windows of length $W$, i.e. $T \equiv 0\; (\mathrm{mod} \;W)$.}. It is worth noting that \citet{vicol21unbiased} who also analyze online ES estimators do not take the view that the time step $\btau$ an online ES estimator starts from is a random variable; as such their analyses do not fully reflect the ``online'' nature captured in our work. When multiple online ES Workers (e.g., $\TES$ workers) run in parallel, different workers will work at different \textit{i.i.d.} time steps $\btau_i$ (which we call \textit{step-unlocked} workers) (see Figure \ref{fig:sharingstrategies}(a)). We provide the pseudocode for creating step-unlocked workers and for general online ES learning in Algorithm \ref{alg:oes} and \ref{alg:oes_training_algorithm} in the Appendix.

\begin{figure*}[t]
     \centering
     \begin{overpic}[width=0.48\textwidth]{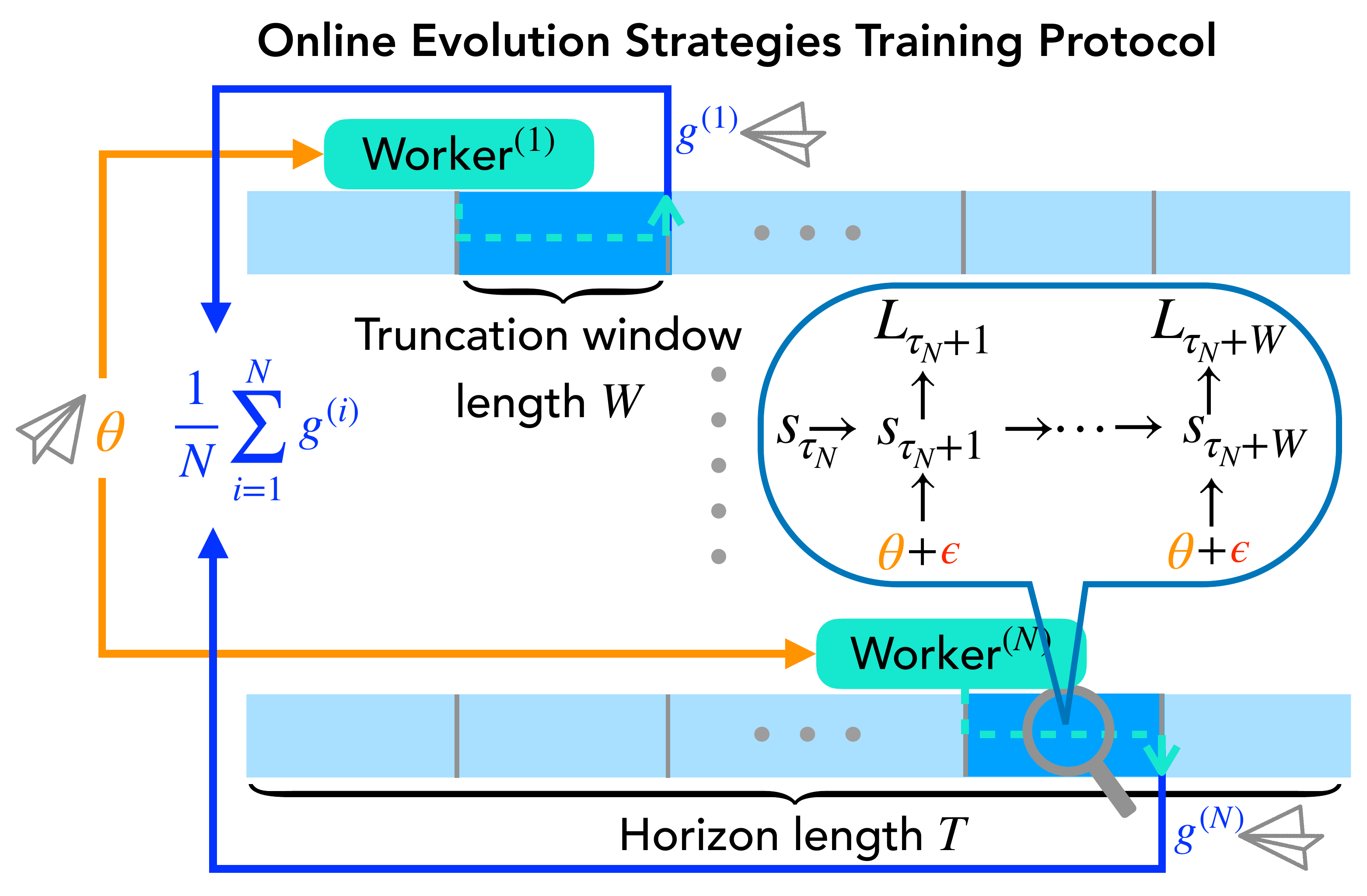}
     \put (3,61.0) {\textbf{(a)}}
    \end{overpic}
    \begin{overpic}[width=0.48\textwidth]{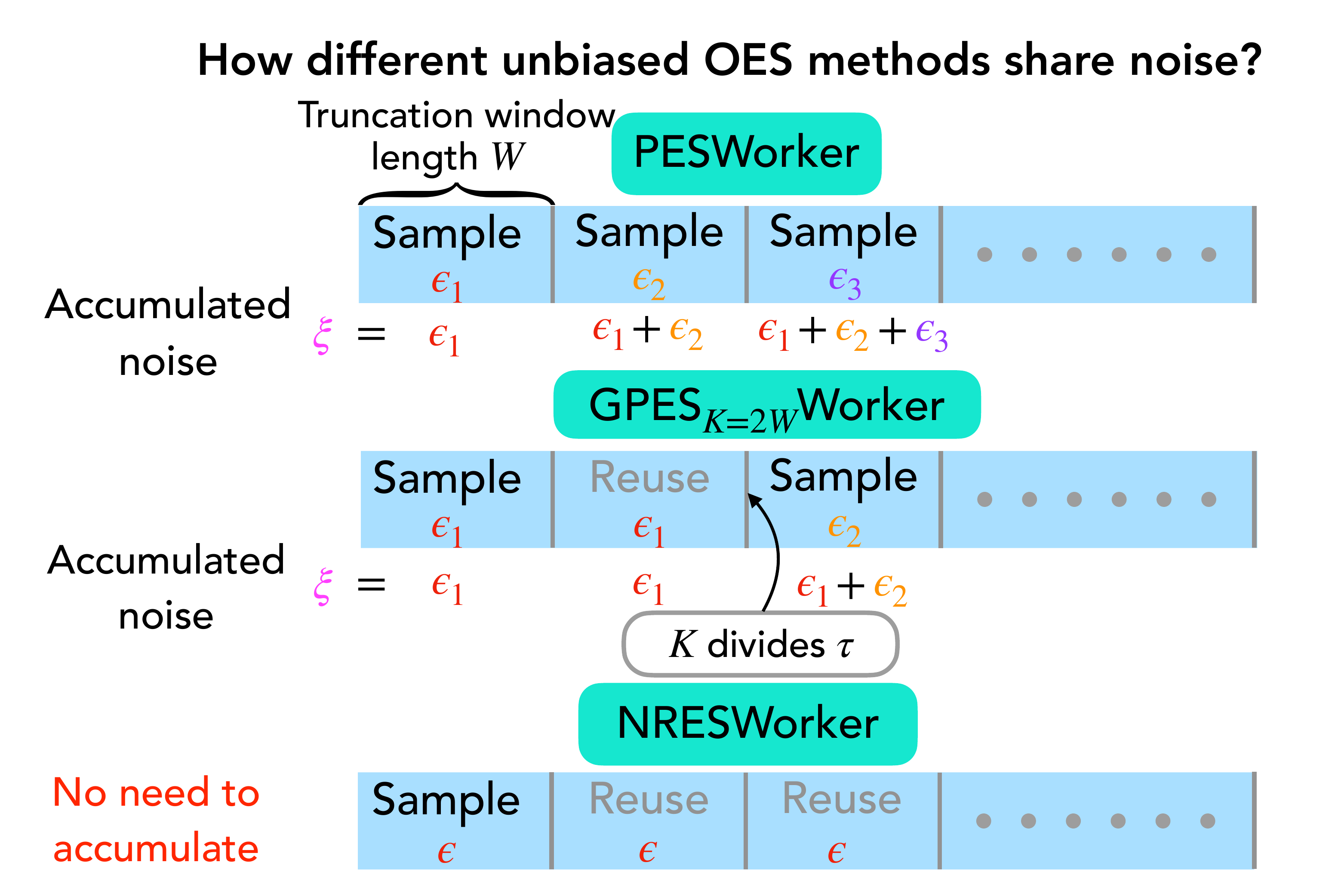}
     \put (3,61.0) {\textbf{(b)}}
    \end{overpic}
    \caption{(a) Illustration of \textit{step-unlocked} online ES workers working independently at different truncation windows. Here a central server sends \orange{$\theta$} (whose gradient to be estimated) to each worker and receives the estimates over partial unrolls from each. The \blue{averaged gradient} can then be used in a first-order optimization algorithm. (b) Comparison of the noise sharing mechanisms of $\PES$, $\GPES_{K}$, and $\NRES$ (ours). Unlike $\PES$ (and $\GPES_{K\neq T}$) which samples a new noise in every (some) truncation window and needs to accumulate the noise, $\NRES$ only samples noise once at the beginning of an episode and reuses the noise for the full episode.}
    \label{fig:protocol}
    \label{fig:sharingstrategies}
\end{figure*}
\paragraph{TES is a biased gradient estimator of \eqref{eq:es_smoothing}.} Note that in \eqref{eq:tes_estimator}, only the $\orange{\theta}$'s in the current length-$W$ truncation window receive antithetic perturbations, thus ignoring the impact of the earlier $\textcolor{gray}{\theta}$'s up to time step $\btau$. Due to this bias, optimization using $\TES$ 
typically doesn't converge to optimal solutions.

\paragraph{Persistent Evolution Strategies.}
To resolve the bias of $\TES$, \citet{vicol21unbiased} recognize that $\TES$ samples a new noise in every truncation window and modifies the smoothing objective into:
\small
\begin{align}
    \theta \mapsto \E_{\{\bepsilon_i\}}  L([\theta + \bepsilon_1]_{\scaleto{\times W}{4pt}}, \ldots, [\theta + \bepsilon_{T/W}]_{\scaleto{\times W}{4pt}}) \label{eq:pes_smoothing}.
\end{align}
\normalsize
They show that an unbiased gradient estimator of \ref{eq:pes_smoothing} is given by (see Algorithm \ref{alg:pes}):
\small
{
\setlength{\jot}{-3.0ex}
\begin{align}
    \PES(\theta) \coloneqq \frac{1}{2\sigma^2W}  \sum_{i=1}^W \bigg [ L_{\btau+i}&([\theta + \bepsilon_1]_{\scaleto{\times W}{4pt}}, \ldots, [\theta + \textcolor{violet}{\bepsilon_{(\btau/W) + 1}}]_{\scaleto{\times i}{4pt}}) \nonumber \\
    -&  L_{\btau+i}([\theta - \bepsilon_1]_{\scaleto{\times W}{4pt}}, \ldots, [\theta - \textcolor{violet}{\bepsilon_{(\btau/W) + 1}}]_{\scaleto{\times i}{4pt}}) \bigg ] \paren{(\sum_{j=1}^{\btau/W} \epsilon_j) + \textcolor{violet}{\epsilon_{\btau/W + 1}}} \label{eq:pes_estimator},
\end{align}
}%
\normalsize
with randomness in both $\{\bepsilon_i\}$ and $\btau$.  To eliminate the bias of $\TES$, instead of multiplying only with the current epsilon $\textcolor{violet}{\epsilon_{\btau/W + 1}}$, $\PES$
multiplies it with the cumulative sum of all the different \textit{iid} noise sampled so far (\self.$\boldxi$ in PESWorker). As we shall see in the next section, this accumulation of noise terms provably results in higher variance, making $\PES$ less desirable in practice. We contrast the noise sampling properties of our proposed methods with $\PES$ in Figure~\ref{fig:loss}(b).

\paragraph{Hysteresis.} 
When online gradient estimators are used in training, they often suffer from \textit{hysteresis}, or history dependence, as a result of the parameters $\theta$ changing between adjacent unrolls. 
That is, the parameter value $\theta_{0}$ that a worker uses in the current truncation window is not the same parameter $\theta_{-1}$ that was used in the previous window.
This effect is often neglected \cite{vicol21unbiased}, under an assumption that $\theta$ is updated slowly. 
To the best of our knowledge, \citep{masse2020convergence} is the only work to analyze the convergence of an online gradient estimator under hysteresis.
In the following theoretical analysis, we assume all online gradient estimates are computed without hysteresis in order to isolate the problem. However, in Section~\ref{sec:exps}, we show empirically that even under the impact of hysteresis, our proposed online estimator $\NRES$ can outperform non-online methods (e.g., $\FullES$) which don't suffer from hysteresis.

\begin{figure}[t]
    \centering
    \begin{minipage}[t]{0.48\linewidth}
    \begin{algorithm}[H]
\small
\caption{Persistent Evolution Strategies \cite{vicol21unbiased}}
\label{alg:pes}
\textbf{class} PESWorker(OnlineESWorker):

\quad \deff ~ \init(\self, $W$):

\qquad \self.$\btau = 0$; \; \self.$s^+ = s_0$; \; \self.$s^- = s_0$

\begin{tikzpicture}[remember picture, overlay]
        \draw[line width=0pt, draw=orange!30, rounded corners=2pt, fill=orange!30, fill opacity=0.3]
            (2.6, -0.12) rectangle (4.8, 0.28);
\end{tikzpicture}
\;\;\;\;\; \self.$W = W$; \; \self.$\boldxi = \bzero \in \Real^d$
\smallskip

\quad \deff gradient\_estimate(\self, $\; \theta$):

\begin{tikzpicture}[remember picture, overlay]
        \draw[line width=0pt, draw=green!30, rounded corners=2pt, fill=green!30, fill opacity=0.3]
            (0.6, -0.03) rectangle (5.2, 0.26);
\end{tikzpicture}
\;\;\;\;\; \texttt{\scriptsize \# sample at every truncation window}

\qquad $\bepsilon \sim \calN(\bzero, \sigma^2I_{d\times d})$ \texttt{\scriptsize \; \# this is \textcolor{violet}{$\epsilon_{(\btau/W)+1}$}}

\begin{tikzpicture}[remember picture, overlay]
        \draw[line width=0pt, draw=orange!30, rounded corners=2pt, fill=orange!30, fill opacity=0.3]
            (0.52, -0.12) rectangle (6.7, 0.28);
\end{tikzpicture}
\;\;\;\;\; \self.$\boldxi$ $\mathrel{+}= ~\bepsilon$ \texttt{\scriptsize \;\; \# now \self.$\boldxi = \sum_{i=1}^{\btau/W} \epsilon_i + \textcolor{violet}{\epsilon_{\btau/W + 1}}$}

~

\qquad ($s^+, s^{-}$) $=$ (\self.$s^+$, \self.$s^-$)

\qquad $L_{\mathrm{sum}}^+ = 0$; \;\; $L_{\mathrm{sum}}^- = 0$

\qquad \for $i$ \inn range(1, \self.$W$+1):

\qquad \quad $s^+ = f_{\textrm{\self}.\btau + i}(s^+, \theta + \bepsilon)$

\qquad \quad $s^- = f_{\textrm{\self}.\btau + i}(s^-, \theta - \bepsilon)$

\qquad \quad $L_{\mathrm{sum}}^+ \mathrel{+}= L^s_{\self.\btau + i}(s^+)$

\qquad \quad $L_{\mathrm{sum}}^- \mathrel{+}= L^s_{\self.\btau + i}(s^-)$

~

\begin{tikzpicture}[remember picture, overlay]
        \draw[line width=0pt, draw=orange!30, rounded corners=2pt, fill=orange!30, fill opacity=0.3]
            (5.6, -0.12) rectangle (6.4, 0.3);
\end{tikzpicture}
\;\;\;\;\; $g = (L_{\mathrm{sum}}^+ - L_{\mathrm{sum}}^-) / (2\sigma^2 \cdot \textrm{\self}.W) \cdot ~$\self.$\boldxi$

\qquad \self.$s^+ = s^+$; \self.$s^- = s^-$

\qquad \self.$\btau$ = \self.$\btau + W$

\qquad \ifff \self.$\btau \ge T$: \texttt{\scriptsize \;\# reset at the end}

\qquad \quad \self.$\btau = 0$; \self.$s^+ = s_0$; \self.$s^- = s_0$

\begin{tikzpicture}[remember picture, overlay]
        \draw[line width=0pt, draw=orange!30, rounded corners=2pt, fill=orange!30, fill opacity=0.3]
            (0.9, -0.12) rectangle (2.3, 0.28);
\end{tikzpicture}
\;\;\;\;\; \quad \self.$\boldxi = \bzero$

\qquad \return $g$

\end{algorithm}
    \end{minipage}
    \begin{minipage}[t]{0.48\linewidth}
    \input{sections/nres_algorithm.tex}
    \end{minipage}
\end{figure}
\section{A New Class of Unbiased Online Evolution Strategies Methods}
\label{sec:gpes}
As shown in Section~\ref{sec:background}, $\TES$ and $\PES$ both sample a new noise perturbation $\bepsilon$ for every truncation window to produce gradient estimates. Here we note that the \textit{frequency of noise-sharing} (new noise every truncation window of size $W$) is fixed to the \textit{frequency of gradient estimates} (a gradient estimate every truncation window of size $W$). However, the former is a choice of the smoothing objective \eqref{eq:pes_smoothing}, while the latter is often a choice of how much gradient update latency the user can tolerate. In this section \textit{we break this coupling} by introducing a general class of gradient estimators that encompass $\PES$. We then analyze these estimators' variance to identify the one with the least variance.

\paragraph{Generalized Persistent Evolution Strategies (GPES).}
For a given fixed truncation window size $W$, we consider all
\textit{noise-sharing periods} $K$ that are multiples of $W$, 
$K=cW$ for
{\small $c \in \Integer^+,\; c \le T/W $}.
$K$ being a
multiple of $W$ ensures that within each truncation window, only a single $\bepsilon$ is used. When $K=W$, we recover the $\PES$ algorithm. However, when $K$ is larger than $W$, the same noise will 
be used across adjacent truncation windows (Figure \ref{fig:sharingstrategies}(b)). With a new noise sampled every $K$ unroll steps, we define the \textit{$K$-smoothed loss objective} as the function:
{
\small
\begin{align}
\label{eq:k-smooth-objective}
\theta \mapsto \E_{\{\bepsilon_i\}}  L([\theta + \bepsilon_1]_{\scaleto{\times K}{4pt}}, \ldots, [\theta + \bepsilon_{\lceil T/K \rceil}]_{\times \remainder(T, K)}),
\footnotemark
\end{align}
\normalsize
}

where {\small $\remainder: (\Integer^+)^2 \rightarrow \Integer^+$} is the modified remainder function such that $\remainder(x, y)$ is the unique integer $n \in [1, y]$ where $x = qy + n$ for some integer $q$. This extra notation allows for the possibility that $T$ is not divisible by $K$ and the last noise $\bepsilon_{\lceil T/K \rceil}$ is used for only $\remainder(T, K) < K$ steps.\footnotetext{$\lceil x \rceil$ is smallest integer $\ge x$; $\lfloor x \rfloor$ is the largest integer $\le x$.}

We now give the analytic form of an unbiased gradient estimator of the resulting smoothed loss.\footnote{Proofs for all the Lemmas and Theorems are in Appendix \ref{app_sec:theory}.}
\begin{lemma}
\label{lemma:gpesk_form}
An unbiased gradient estimator for the $K$-smoothed loss is given by
\small
{
\setlength{\jot}{-2.0ex}
\addtolength{\belowdisplayskip}{-3.0ex}
\begin{align*}
    \GPES_{K}(\theta) \coloneqq \frac{1}{2\sigma^2W} \sum_{j=1}^W \bigg [L_{\btau + j}&([\theta + \bepsilon_1]_{\scaleto{\times K}{4pt}}, \ldots, [\theta + \darkgreen{\bepsilon_{\lfloor \btau/K \rfloor + 1}}]_{\times \scaleto{\remainder(\btau + j, K)}{5pt}})   \\
    - & L_{\btau + j}([\theta - \bepsilon_1]_{\scaleto{\times K}{4pt}}, \ldots, [\theta - \darkgreen{\bepsilon_{\lfloor \btau/K \rfloor + 1}}]_{\times \scaleto{\remainder(\btau + j, K)}{5pt}}) \bigg ] \cdot \scaleto{\paren{\sum_{i=1}^{\lfloor \btau/K \rfloor + 1} \bepsilon_i}}{25pt},
\end{align*}
}

\normalsize
with randomness in $\btau$ and $\{\bepsilon_i\}_{i=1}^{\lceil T/K \rceil}$.
\end{lemma}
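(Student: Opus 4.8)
The plan is to compute $\E_{\btau, \{\bepsilon_i\}}[\GPES_K(\theta)]$ in closed form and check it equals the gradient of the $K$-smoothed loss in \eqref{eq:k-smooth-objective}. First I would reduce to a per-timestep statement: writing the $K$-smoothed loss as $\widetilde L(\theta) \coloneqq \frac1T \sum_{t=1}^T \widetilde L_t(\theta)$ with $\widetilde L_t(\theta) \coloneqq \E_{\{\bepsilon_i\}} L_t([\theta+\bepsilon_1]_{\times K}, \ldots, [\theta + \bepsilon_{\lceil t/K\rceil}]_{\times \remainder(t,K)})$, and noting that $\widetilde L_t$ depends only on $\bepsilon_1, \ldots, \bepsilon_{\lceil t/K\rceil}$ (the trailing one repeated $\remainder(t,K)$ times, which accounts for exactly $t$ arguments). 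By linearity of expectation and of the gradient it then suffices to differentiate each $\widetilde L_t$ separately.

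Second, for each $t$ I would apply the standard reparametrization / score-function argument: substituting $\phi_j = \theta + \bepsilon_j$ rewrites $\widetilde L_t(\theta)$ as an integral of $L_t([\phi_1]_{\times K},\ldots)$ against $\prod_{j=1}^{\lceil t/K\rceil} p_\sigma(\phi_j - \theta)$, where $p_\sigma$ is the $\calN(\bzero, \sigma^2 I_{d\times d})$ density; differentiating under the integral (justified by dominated convergence under mild growth assumptions on the $L_t$) and using $\nabla_\theta \log p_\sigma(\phi_j - \theta) = (\phi_j - \theta)/\sigma^2 = \bepsilon_j/\sigma^2$ gives $\nabla_\theta \widetilde L_t(\theta) = \frac{1}{\sigma^2}\E_{\{\bepsilon_i\}}\!\big[L_t([\theta+\bepsilon_1]_{\times K},\ldots)\sum_{i=1}^{\lceil t/K\rceil}\bepsilon_i\big]$. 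Since each $\bepsilon_i$ is symmetric, negating the entire noise tuple shows this expectation equals $\tfrac12 \E_{\{\bepsilon_i\}}[(L_t^+ - L_t^-)\sum_{i=1}^{\lceil t/K\rceil}\bepsilon_i]$, where $L_t^{\pm}$ denotes $L_t$ evaluated with $\theta \pm \bepsilon_i$ in place of $\theta + \bepsilon_i$ in every slot. Summing over $t$ yields $\nabla_\theta \widetilde L(\theta) = \frac{1}{2\sigma^2 T}\sum_{t=1}^T \E_{\{\bepsilon_i\}}[(L_t^+ - L_t^-)\sum_{i=1}^{\lceil t/K\rceil}\bepsilon_i]$.

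Third, I would expand $\E_{\btau}[\GPES_K(\theta)]$. Since $\btau \sim \Unif\{0, W, \ldots, T-W\}$ puts mass $W/T$ on each of its $T/W$ atoms, $\E_{\btau,\{\bepsilon_i\}}[\GPES_K(\theta)] = \frac{1}{2\sigma^2 T}\sum_{\btau} \sum_{j=1}^W \E_{\{\bepsilon_i\}}\big[(L_{\btau+j}^+ - L_{\btau+j}^-)\sum_{i=1}^{\lfloor \btau/K\rfloor+1}\bepsilon_i\big]$, and the reindexing $t = \btau + j$ (with $\btau$ over multiples of $W$ and $j \in \{1,\ldots,W\}$) is a bijection onto $\{1,\ldots,T\}$. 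It then remains only to check that the $(\btau,j)$-summand matches the $t$-th term of $\nabla_\theta \widetilde L(\theta)$ above. This is the one place needing care: I must verify the index identity $\lfloor \btau/K\rfloor + 1 = \lceil (\btau+j)/K\rceil = \lceil t/K\rceil$ for every $j \in \{1,\ldots,W\}$, and that $\remainder(\btau+j, K)$ equals the number of copies of the last perturbed parameter in the $K$-smoothed evaluation of $L_t$. Both hold precisely because $K$ and $\btau$ are multiples of $W$: writing $\btau = kW$, $K = cW$, and $k = qc + r$ with $0 \le r < c$, one gets $\btau + j = qK + (rW + j)$ with $1 \le rW + j \le K$, hence $\lceil t/K\rceil = q+1 = \lfloor \btau/K\rfloor + 1$ and $\remainder(t,K) = rW+j$. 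With this identity the argument tuple $[\theta \pm \bepsilon_1]_{\times K}, \ldots, [\theta \pm \bepsilon_{\lfloor \btau/K\rfloor+1}]_{\times \remainder(\btau+j,K)}$ in $\GPES_K$ is exactly the one defining $L_t^{\pm}$, and $\sum_{i=1}^{\lfloor \btau/K\rfloor+1}\bepsilon_i = \sum_{i=1}^{\lceil t/K\rceil}\bepsilon_i$; summing gives $\E[\GPES_K(\theta)] = \nabla_\theta \widetilde L(\theta)$.

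The main obstacle is purely this bookkeeping step in the third paragraph — showing that within a truncation window the floor-based index used by the estimator coincides with the ceiling-based index arising from the smoothed objective, and that $\remainder$ correctly matches the trailing block of repeated parameters — whereas the analytic content (differentiating a Gaussian convolution and exploiting antithetic symmetry) is routine and reduces, at $K = W$, to the $\PES$ derivation of \citet{vicol21unbiased}.
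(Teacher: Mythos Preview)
Your proposal is correct and uses the same core ingredients as the paper's proof: the score-function identity for the Gaussian smoothing, antithetic symmetrization, the reindexing $t=\btau+j$ to pass between the uniform-$\btau$ expectation and the average over $t\in[T]$, and the key index identity $\lfloor \btau/K\rfloor+1=\lceil(\btau+j)/K\rceil$ valid because $K$ and $\btau$ are multiples of $W$.

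The only organizational difference is the order in which you decompose. The paper first writes the $K$-smoothed loss as $\widehat L^K([\theta]_{\times q})$ with $q=\lceil T/K\rceil$, applies the score-function estimator \emph{globally} to obtain the full noise sum $\sum_{i=1}^{q}\bepsilon_i$, and only then breaks $L$ into time steps; this leaves extra terms $\bepsilon_i$ with $i>\lfloor \btau/K\rfloor+1$ in the sum, which are then removed by a separate independence argument ($\bepsilon_i$ independent of $L_{\btau+j}$, mean zero). You instead decompose $\widetilde L=\frac1T\sum_t\widetilde L_t$ \emph{first} and apply the score-function identity per timestep, so that only $\bepsilon_1,\ldots,\bepsilon_{\lceil t/K\rceil}$ ever enter and no truncation step is needed. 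This makes your route slightly more economical, at the cost of having to verify the index bookkeeping a bit earlier; either way the substantive analytic content is identical.
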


\paragraph{GPES$_K$ algorithm.} Here, for the truncation window starting at step $\btau$, the noise $\darkgreen{\bepsilon_{\lfloor \btau/K \rfloor + 1}}$ is used as the antithetic perturbation to unroll the system. If $\btau$ is not divisible by $K$, then this noise has already been sampled at time step $t = \lfloor \btau/K \rfloor \cdot K$ in an earlier truncation window. Therefore, to know what noise to apply at this truncation window, we need to remember the last used $\bepsilon$ and update it when $\btau$ becomes divisble by $K$. We provide the algorithm for the $\GPES_K$ gradient estimator in Algorithm \ref{alg:gpes} in the Appendix. Note that $\GPES_{K=W}$ is the same as the $\PES$ algorithm.

\paragraph{Variance Characterization of GPES$_K$.} With this generalized class of gradient estimators $\GPES_K$, one might wonder how to choose the value of $K$. Since each estimator is an unbiased gradient estiamtor with respect to its smoothed objective, we compare the variance of these estimators as a function of $K$. To do this analytically, we make some simplifying assumptions: 

\begin{assumption}
\label{assumption:linearity}
For a given 
$\theta \in \Real^d$ and $t \in [T] \coloneqq \{1, \ldots, T\}$, there exists a set of vectors $\{g^t_i \in \Real^d\}_{i=1}^t$, such that for any $\{v_i \in \Real^d\}_{i=1}^t$, the following equality holds:
\small
\begin{align}
    L_t(\theta + v_1, \theta + v_2, \ldots, \theta + v_t) - L_t(\theta - v_1, \theta - v_2, \ldots, \theta - v_t) = 2\sum_{i=1}^t (v_i)^\top (g^t_i)
\end{align}
\normalsize
\end{assumption}
\begin{remark}
This assumption is more general than the quadratic $L_t$ assumption made in \citep{vicol21unbiased} (explanation see Appendix~\ref{app_sec:theory}). Here one can roughly understand $g^t_i$ as time step $t$'s smoothed loss's partial derivative with respect to the $i$-th application of $\theta$. For notational convenience, we let $g^t \coloneqq \sum_{i=1}^t g^t_i$ (roughly the total derivative of smoothed step-$t$ loss with respect to $\theta$) and $g^t_{K, j} \coloneqq \sum_{i=K\cdot(j-1) + 1}^{\min\{t,\; K\cdot j\}} g^t_i$ for $j \in \{1, \ldots, \lceil t/K \rceil \}$ (roughly the sum of partial derivatives of smoothed step-$t$ loss with respect to all $\theta$'s in the $j$-th noise-sharing window of size $K$ (the last window might be shorter)).
\end{remark}

With this assumption in place, we first consider the case when $W=1$ and $K=cW = c$ for $c \in [T]$. In this case, the $\GPES_K$ estimator can be simplified into the following form:
\begin{lemma}
\label{lemma:gpesk_form_under_assumption}
Under Assumption \ref{assumption:linearity}, when $W=1$, {\small $\GPES_{K=c}(\theta) = \frac{1}{\sigma^2}\sum_{j=1}^{\lfloor \btau/c \rfloor + 1} \paren{\sum_{i=1}^{\lfloor \btau/c \rfloor + 1} \bepsilon_i} \bepsilon_j^\top g_{c,j}^{\btau + 1}$}.
\end{lemma}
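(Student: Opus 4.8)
The plan is to start from the general expression for $\GPES_K$ given in Lemma~\ref{lemma:gpesk_form}, specialize it to $W=1$, and then replace the antithetic loss difference by the bilinear form guaranteed by Assumption~\ref{assumption:linearity}. When $W=1$ the outer sum $\sum_{j=1}^W$ collapses to the single term $j=1$, so the only surviving loss term is $L_{\btau+1}$ evaluated at the perturbed parameter sequences $[\theta \pm \bepsilon_1]_{\times c}, \ldots, [\theta \pm \bepsilon_{\lfloor \btau/c\rfloor + 1}]_{\times \remainder(\btau+1,c)}$, multiplied by the accumulated noise $\sum_{i=1}^{\lfloor\btau/c\rfloor+1}\bepsilon_i$ and scaled by $\frac{1}{2\sigma^2}$.

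Next I would apply Assumption~\ref{assumption:linearity} with $t = \btau+1$ to the difference $L_{\btau+1}([\theta+v_1],\ldots,[\theta+v_{\btau+1}]) - L_{\btau+1}([\theta-v_1],\ldots,[\theta-v_{\btau+1}])$, where the perturbation attached to the $i$-th copy of $\theta$ is $v_i=\bepsilon_{\lceil i/c\rceil}$ (all copies in the $j$-th noise-sharing block share the noise $\bepsilon_j$). The assumption then yields $2\sum_{i=1}^{\btau+1}\bepsilon_{\lceil i/c\rceil}^\top g_i^{\btau+1}$. Grouping the index $i$ according to which block it falls in and invoking the definition $g^t_{K,j} = \sum_{i=K(j-1)+1}^{\min\{t,Kj\}} g^t_i$ rewrites this as $2\sum_{j=1}^{\lfloor\btau/c\rfloor+1}\bepsilon_j^\top g^{\btau+1}_{c,j}$. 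Substituting back, cancelling the $2$ against the $\frac{1}{2\sigma^2}$ prefactor, and noting that $\bepsilon_j^\top g^{\btau+1}_{c,j}$ is a scalar that commutes with the vector $\sum_i\bepsilon_i$, gives exactly the claimed identity.

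The only step requiring care — and the one I would flag as the main bookkeeping obstacle — is verifying that the indexing is consistent: that the number of distinct noise vectors used up through step $\btau+1$ is $\lfloor\btau/c\rfloor+1$ (equivalently $\lceil(\btau+1)/c\rceil$), that the first $\lfloor\btau/c\rfloor$ blocks each contain exactly $c$ copies of $\theta$ while the last block contains $\remainder(\btau+1,c)$ of them, and that these blocks coincide with the index ranges $\{c(j-1)+1,\dots,\min\{\btau+1,cj\}\}$ appearing in the definition of $g^{\btau+1}_{c,j}$. Writing $\btau = mc + r$ with $0\le r<c$ makes all of these identities immediate: $\lfloor\btau/c\rfloor=m$, $\remainder(\btau+1,c)=r+1\le c$, and $\min\{\btau+1,c(m+1)\}=\btau+1$. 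Once this alignment is established, no further computation is needed.
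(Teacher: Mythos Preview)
Your proposal is correct and follows essentially the same approach as the paper: specialize Lemma~\ref{lemma:gpesk_form} to $W=1$, apply Assumption~\ref{assumption:linearity} to the antithetic difference at $t=\btau+1$, group the resulting sum $\sum_{i=1}^{\btau+1}\bepsilon_{\lceil i/c\rceil}^\top g_i^{\btau+1}$ by noise-sharing block to recover the $g^{\btau+1}_{c,j}$, and substitute back. Your explicit verification of the index alignment via $\btau=mc+r$ is a nice addition that the paper leaves implicit.
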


With this simplified form, we can now characterize the variance of the estimator $\GPES_{K=c}(\theta)$. Since it's a random vector, we analytically derive its total variance (trace of covariance matrix) $\tr(\Cov[\GPES_{K=c}(\theta)])$.

\begin{theorem}
\label{thm:variance}
When $W=1$ and under Assumption \ref{assumption:linearity}, for integer $\magenta{c} \in [T]$,
\small
\begin{align}
\tr(\Cov[\GPES_{K=c}(\theta)]) = \scaleto{\frac{(d+2)}{T} \sum_{t=1}^T \paren{ \|g^t\|_2^2} - \norm{\frac{1}{T} \sum_{t=1}^T g^t}_2^2 + \frac{1}{T} \sum_{t=1}^T \paren{\frac{d}{2}\sum_{j=1, j'=1}^{\lceil t/\magenta{c} \rceil} \norm{g^t_{\magenta{c},j} - g^t_{\magenta{c},j'}}_2^2}}{30pt}. \label{eq:variance}
\end{align}
\normalsize
\end{theorem}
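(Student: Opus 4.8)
The plan is to begin from the simplified expression in Lemma~\ref{lemma:gpesk_form_under_assumption} and use the identity $\tr(\Cov[\GPES_{K=c}(\theta)]) = \E[\norm{\GPES_{K=c}(\theta)}_2^2] - \norm{\E[\GPES_{K=c}(\theta)]}_2^2$, evaluating each expectation by first conditioning on $\btau$ and then averaging over $\btau \sim \Unif\{0,1,\ldots,T-1\}$. Fix $\btau$ and write $t \coloneqq \btau+1 \in \{1,\ldots,T\}$ and $m \coloneqq \lfloor \btau/c\rfloor + 1$; note $m = \lceil t/c\rceil$, so the noise-sharing windows $\{c(j-1)+1,\ldots,\min\{t,cj\}\}_{j=1}^{m}$ partition $\{1,\ldots,t\}$ and hence $\sum_{j=1}^m g^t_{c,j} = g^t$. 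By Lemma~\ref{lemma:gpesk_form_under_assumption}, conditioned on $\btau$,
\begin{align*}
X \;\coloneqq\; \GPES_{K=c}(\theta) \;=\; \frac{1}{\sigma^2}\Big(\textstyle\sum_{i=1}^m \bepsilon_i\Big)\Big(\textstyle\sum_{j=1}^m \bepsilon_j^\top g^t_{c,j}\Big),
\end{align*}
with $\bepsilon_1,\ldots,\bepsilon_m$ i.i.d.\ $\calN(\bzero,\sigma^2 I_d)$. The conditional first moment is immediate: since the $\bepsilon_i$ are independent with mean zero, only the diagonal $i=j$ terms survive, giving $\E[X\mid\btau] = \sum_{j=1}^m g^t_{c,j} = g^t$, so $\E[\GPES_{K=c}(\theta)] = \frac1T\sum_{t=1}^T g^t$, which produces the $-\norm{\frac1T\sum_t g^t}_2^2$ term of \eqref{eq:variance}.

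The crux is the conditional second moment $\E[\norm{X}_2^2\mid\btau] = \frac{1}{\sigma^4}\E[\norm{\sum_i\bepsilon_i}_2^2(\sum_j \bepsilon_j^\top g^t_{c,j})^2]$. Expanding both squares yields a quadruple sum $\sum_{i,i',j,j'}\E[(\bepsilon_i^\top\bepsilon_{i'})(\bepsilon_j^\top g^t_{c,j})(\bepsilon_{j'}^\top g^t_{c,j'})]$; by independence and zero mean a summand vanishes unless every distinct index in $\{i,i',j,j'\}$ occurs with even multiplicity, leaving exactly four patterns: the ``all equal'' pattern $i=i'=j=j'$ ($m$ tuples), and the three ``$2{+}2$'' pairings $i=i'\neq j=j'$, $i=j\neq i'=j'$, and $i=j'\neq i'=j$ (each with $m(m-1)$ ordered tuples). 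Each is evaluated with the elementary Gaussian identities $\E[\norm{\bepsilon}_2^2(\bepsilon^\top a)^2] = \sigma^4(d+2)\norm{a}_2^2$, $\E[\norm{\bepsilon}_2^2]\,\E[(\bepsilon^\top a)^2] = d\sigma^4\norm{a}_2^2$, and $\E[(\bepsilon^\top\bepsilon')(\bepsilon^\top a)(\bepsilon'^\top b)] = \sigma^4 a^\top b$ for independent $\bepsilon,\bepsilon'$. Summing the contributions, the coefficient of $\sum_{j=1}^m\norm{g^t_{c,j}}_2^2$ collapses as $(d+2) + d(m-1) - 2 = dm$, and the remaining cross terms leave $2\norm{g^t}_2^2$, so $\E[\norm{X}_2^2\mid\btau] = dm\sum_{j=1}^m\norm{g^t_{c,j}}_2^2 + 2\norm{g^t}_2^2$.

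Finally, I apply the algebraic identity $\sum_{j,j'=1}^m\norm{a_j-a_{j'}}_2^2 = 2m\sum_j\norm{a_j}_2^2 - 2\norm{\sum_j a_j}_2^2$ with $a_j = g^t_{c,j}$ and $\sum_j a_j = g^t$ to rewrite $dm\sum_j\norm{g^t_{c,j}}_2^2 = \frac d2\sum_{j,j'=1}^{\lceil t/c\rceil}\norm{g^t_{c,j}-g^t_{c,j'}}_2^2 + d\norm{g^t}_2^2$, hence $\E[\norm{X}_2^2\mid\btau] = (d+2)\norm{g^t}_2^2 + \frac d2\sum_{j,j'=1}^{\lceil t/c\rceil}\norm{g^t_{c,j}-g^t_{c,j'}}_2^2$. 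Averaging this over $\btau\sim\Unif\{0,\ldots,T-1\}$ (equivalently $t$ uniform over $[T]$) to get $\E[\norm{\GPES_{K=c}(\theta)}_2^2]$ and subtracting $\norm{\E[\GPES_{K=c}(\theta)]}_2^2 = \norm{\frac1T\sum_t g^t}_2^2$ gives exactly \eqref{eq:variance}. The only nonroutine step is the fourth-moment expansion: the care needed is in (i) recognizing which index-coincidence patterns survive, (ii) counting the ordered tuples in each pattern, and (iii) tracking the $d+2$ factor so that the coefficient of $\sum_j\norm{g^t_{c,j}}_2^2$ collapses cleanly to $dm$; everything else is bookkeeping.
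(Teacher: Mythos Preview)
Your proposal is correct and follows essentially the same approach as the paper: both use the decomposition $\tr(\Cov[X]) = \E\|X\|_2^2 - \|\E X\|_2^2$, condition on $\btau$, expand the conditional second moment into a quadruple sum over noise indices, do a case analysis on index coincidences using the same Gaussian moment identities, and then algebraically rearrange into the pairwise-difference form. The paper's write-up is more exhaustive (explicitly listing and zeroing out all vanishing cases before summing), while you jump directly to the four surviving patterns and collapse the coefficients more compactly, but the argument is the same.
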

To understand how the value of $K=\magenta{c}$ changes the total variance, we notice that only the nonnegative third term in \eqref{eq:variance} depends on it. This term measures the pairwise squared distance between non-overlapping partial sums $g^t_{\magenta{c}, j}$ for all $j$. When $c=T$, for every $t \in [T]$, there is only a single such partial sum as $\lceil t/c \rceil = 1$. In this case, this third term reduces to its smallest value of $0$.
Thus:
\begin{corollary}
\label{corollary:w=1}
Under Assumption \ref{assumption:linearity}, when $W=1$, the gradient estimator $\GPES_{K=T}(\theta)$ has the smallest total variance among all $\{\GPES_K: K \in [T]\}$ estimators.
\end{corollary}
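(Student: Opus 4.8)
The plan is to read the corollary straight off the variance formula of Theorem~\ref{thm:variance}. First I would isolate the dependence on the noise-sharing period $c = K$: in \eqref{eq:variance} the first two summands, $\frac{(d+2)}{T}\sum_{t=1}^T \|g^t\|_2^2$ and $-\norm{\frac{1}{T}\sum_{t=1}^T g^t}_2^2$, involve only the totals $g^t = \sum_{i=1}^t g^t_i$ and hence are constant in $c$, whereas all $c$-dependence is confined to the third summand
\[
R(c) \;\coloneqq\; \frac{1}{T} \sum_{t=1}^T \paren{\frac{d}{2}\sum_{j=1,\, j'=1}^{\lceil t/c \rceil} \norm{g^t_{c,j} - g^t_{c,j'}}_2^2}.
\]
Therefore minimizing $\tr(\Cov[\GPES_{K=c}(\theta)])$ over $c \in [T]$ is the same as minimizing $R(c)$ over $c \in [T]$.

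Next I would observe that $R(c) \ge 0$ for every $c$, since it is a nonnegative linear combination of squared Euclidean norms, so it suffices to exhibit one value of $c$ achieving $R(c)=0$. Taking $c = T$: for each $t \in [T]$ we have $1 \le t \le T$, hence $\lceil t/T \rceil = 1$, so the inner double sum over $j,j' \in \{1,\dots,\lceil t/c\rceil\}$ collapses to the single term $j=j'=1$, namely $\norm{g^t_{T,1}-g^t_{T,1}}_2^2 = 0$; summing over $t$ gives $R(T)=0 \le R(c)$ for all $c \in [T]$. Consequently $\GPES_{K=T}(\theta)$ attains the smallest total variance among $\{\GPES_K : K\in[T]\}$. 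Along the way I would also check that $K=T$ is an admissible choice in the $W=1$ regime (we need $K=cW$ with $c \le T/W$, i.e.\ $c \le T$, which holds with equality, and $T \equiv 0 \pmod{W}$ trivially for $W=1$), so the corollary's claim is not vacuous.

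There is essentially no hard step here once Theorem~\ref{thm:variance} is available — the entire argument is identifying which term carries the $c$-dependence and recognizing it as a nonnegative quantity that vanishes at $c=T$. The only points warranting care are bookkeeping: that the partial-sum notation $g^t_{c,j}$ is well defined at $c=T$ (only $j=1$ survives, with $g^t_{T,1} = \sum_{i=1}^{\min\{t,T\}} g^t_i = g^t$, consistent with $R(T)=0$), and that the statement asserts only weak minimality (``smallest'', permitting ties among different $c$ when the $g^t_{c,j}$'s happen to coincide), so no strict-inequality analysis is needed.
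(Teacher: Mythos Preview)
Your proposal is correct and follows essentially the same argument as the paper's proof: isolate the $c$-dependence to the nonnegative third term in \eqref{eq:variance}, and observe that it vanishes at $c=T$ because $\lceil t/T\rceil=1$ for all $t\in[T]$. Your additional bookkeeping checks (admissibility of $K=T$ when $W=1$, well-definedness of $g^t_{T,1}$, and the weak-minimality reading) are sound and add a touch of rigor beyond what the paper writes out.
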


\begin{remark} To understand Corollary~\ref{corollary:w=1} intuitively,  notice that at a given time step $t$ (i.e., a length-1 truncation window), any $\GPES_{K=c}$ gradient estimator ($c \in [T]$) aims to unbiasedly estimate the total derivative of the smoothed loss at this step with respect to $\theta$, which we have denoted by $g^t$. By applying a new Gaussian noise perturbation every $c < T$ steps, the $\GPES_{K=c}$ estimators \textit{indirectly estimate $g^t$} by first unbiasedly estimating the gradients inside each size-$c$ noise-sharing window: {\small $\{g^t_{c, j}\}_{j=1}^{\lceil t / c\rceil}$} and then summing up the result (notice {\small $g^t = \sum_{j=1}^{\lceil t / c\rceil} g^t_{c, j}$}). To obtain this extra (yet unused) information about the intermediate partial derivatives, these estimators require more randomness and thus suffer from a larger total variance than the $\GPES_{K=T}$ estimator which directly estimates $g^t$.
\end{remark}

\sidecaptionvpos{figure}{c}
\begin{SCfigure}
\includegraphics[width=0.4\textwidth]{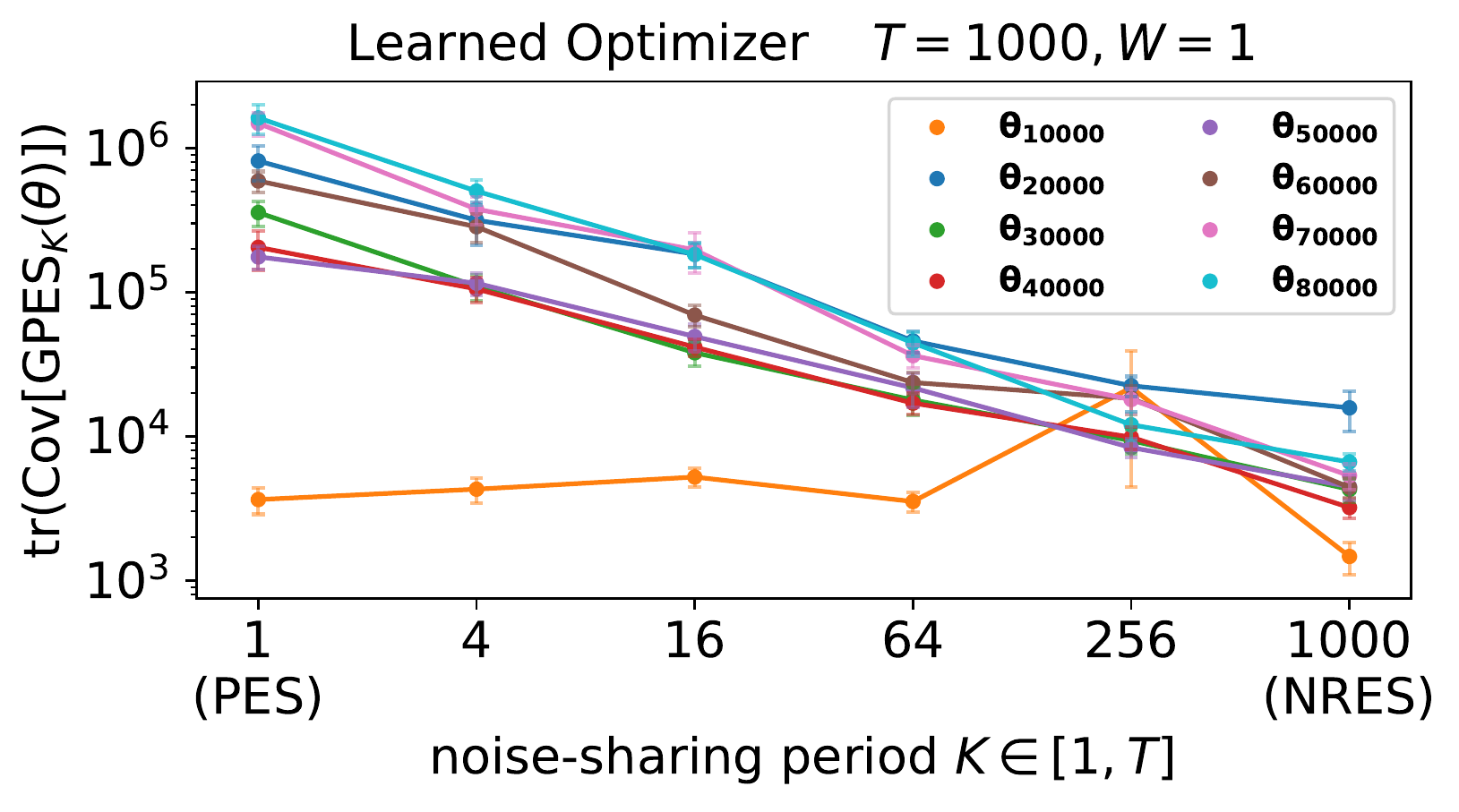}
\caption{Total variance of $\GPES_K$ vs. noise-sharing period $K$ for different $\theta_i$'s from the learned trajectory of $\PES$. $\GPES_{K=T}$ ($\NRES$) has the lowest total variance among estimators of its class (including $\PES$) for each $\theta_i$.}
\vspace{-.1in}
\label{fig:variance}
\end{SCfigure}
\paragraph{Experimental Verification of Corollary \ref{corollary:w=1}.}
We empirically verify Corollary \ref{corollary:w=1} on a meta-training learned optimizer task ($T=1000$; see additional details in Section \ref{exp:lopt}). Here we save a trajectory of $\theta_i$ learned by $\PES$ ($i$ denotes training iteration) and compute the total variance of the estimated gradients (without hysteresis) by $\GPES$ with different values of $K$ in Figure \ref{fig:variance}(a) ($W=1$). 
In agreement with theory, $K=T$ has the lowest variance.

\section{Noise-Reuse Evolution Strategies}
\label{sec:nres}
\paragraph{NRES has lower variance than PES.} As variance reduction is desirable in stochastic optimization \citep{wang2013variance}, by Corollary \ref{corollary:w=1}, the gradient estimator $\GPES_{K=T}$ is particularly attractive and \textit{can serve as a variance-reduced replacement for $\PES$}. When $K=T$, we only need to sample a single $\bepsilon$ once at the beginning of an episode (when $\btau=0$) and reuse the same noise for the entirety of that episode before it resets. This removes the need to keep track of the cumulative applied noise ($\boldxi$) (Figure~\ref{fig:sharingstrategies}(b)), 
making the algorithm simpler and more memory efficient 
than $\PES$. Due to its noise-reuse property, we name this gradient estimator $\GPES_{K=T}$ the \textit{Noise-Reuse Evolution Strategies} ($\NRES$) (pseudocode in Algorithm~\ref{alg:nres}). Concurrent with our work, \citet{vicol2023low} independently proposes a similar algorithm with different analyses. We discuss in detail how our work differs from \citep{vicol2023low} in Appendix~\ref{app_sec:related_work}. %
Despite Theorem \ref{thm:variance} assuming $W=1$, one can relax this assumption to any $W$ that divides the horizon length $T$. By defining a ``mega'' UCG whose single transition step is equivalent to $W$ steps in the original UCG, we can apply Corollary \ref{corollary:w=1} to this mega UCG and arrive at the following result.
\begin{corollary}
\label{corollary:w>1}
    Under Assumption \ref{assumption:linearity}, when $W$ divides $T$, the NRES gradient estimator has the smallest total variance among all $\GPES_{K=cW}$ estimators $c \in [T/W]$.
\end{corollary}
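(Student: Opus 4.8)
The plan is to reduce Corollary~\ref{corollary:w>1} to the already-proved Corollary~\ref{corollary:w=1} by rescaling time. Assuming $W \mid T$, I would build a \emph{mega} UCG of horizon $\tilde T \coloneqq T/W$ whose $\ell$-th transition is the composition $\tilde f_\ell \coloneqq f_{\ell W} \circ \cdots \circ f_{(\ell-1)W + 1}$, so that its state obeys $\tilde s_\ell = s_{\ell W}$, and whose $\ell$-th per-step loss is the block average $\tilde L^s_\ell \coloneqq \frac{1}{W}\sum_{j=1}^W L^s_{(\ell-1)W+j}(s_{(\ell-1)W+j})$ --- equivalently, written as a function of mega-applications of $\theta$, $\tilde L_\ell([\theta]_{\times \ell}) = \frac{1}{W}\sum_{j=1}^W L_{(\ell-1)W+j}([\theta]_{\times((\ell-1)W+j)})$. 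Two facts make this the right object: (i) the mega-objective $\frac{1}{\tilde T}\sum_{\ell=1}^{\tilde T}\tilde L_\ell$ equals the original objective $L$ of \eqref{eq:loss_definition}; and (ii) since a $\GPES_{K=cW}$ estimator uses a single noise per length-$W$ truncation window, the mega UCG has truncation window $W' = 1$, putting us in the regime of Corollary~\ref{corollary:w=1}.

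Next I would verify that the GPES machinery transports cleanly. Assumption~\ref{assumption:linearity} is inherited: applying it to each $L_{(\ell-1)W+j}$ and summing over $j$ shows that the antithetic difference of $\tilde L_\ell$, when each of the $W$ copies inside mega-block $i$ carries the common perturbation $v_i$, is linear in the $v_i$ with coefficient vectors $\tilde g^\ell_i$ obtained by summing the original $g^t_i$ vectors over the indices lying in block $i$; hence the mega UCG satisfies Assumption~\ref{assumption:linearity} with horizon $\tilde T$. Then, comparing Lemma~\ref{lemma:gpesk_form} for $K = cW$ on the original UCG against the same lemma for $K' = c$ (with $W' = 1$) on the mega UCG, the two estimators are \emph{the same random variable}: the start $\btau \sim \Unif\{0, W, \ldots, T-W\}$ corresponds to $\tilde\btau = \btau/W \sim \Unif\{0,1,\ldots,\tilde T - 1\}$; the noise index satisfies $\lfloor \btau/(cW)\rfloor = \lfloor \tilde\btau/c\rfloor$ and every copy of $\theta$ receives the same $\bepsilon$ on both sides; the $\remainder$ bookkeeping for the last (possibly short) block agrees; and the prefactor $\frac{1}{2\sigma^2 W}\sum_{j=1}^W$ of the original matches the $\frac{1}{2\sigma^2 W'}\sum_{j=1}^{W'}$ of the mega estimator precisely because of the $1/W$ in $\tilde L_\ell$.

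With these identifications the conclusion is immediate: Corollary~\ref{corollary:w=1}, applied to the mega UCG (its ``$T$'' being $\tilde T = T/W$ and its Assumption~\ref{assumption:linearity} having been verified), states that $\GPES_{K' = \tilde T}$ has the smallest total variance among $\{\GPES_{K'} : K' \in [\tilde T]\}$; translating back via $K = K'W$, this says $\NRES = \GPES_{K=T}$ has the smallest total variance among $\{\GPES_{K = cW} : c \in [T/W]\}$ for the original UCG. I expect the main obstacle to be the second step --- checking that the mega-UCG reparametrization yields literally the original $\GPES_{K=cW}$ estimator (matching the start-time law, the floor-based noise indexing, the $\remainder$ edge cases, and the normalization), and that Assumption~\ref{assumption:linearity} genuinely survives block-averaging; everything past that is a verbatim invocation of Corollary~\ref{corollary:w=1}. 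If one prefers to avoid the reduction, an equivalent route is to re-run the proof of Theorem~\ref{thm:variance} with the $g^t_i$ pre-grouped into length-$W$ blocks, which again isolates a single nonnegative, $K$-dependent term (a sum of pairwise squared differences of block sums) that vanishes exactly when $K = T$.
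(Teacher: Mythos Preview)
Your proposal is correct and matches the paper's own proof: both reduce to Corollary~\ref{corollary:w=1} by constructing a ``mega'' UCG of horizon $T/W$ whose single step bundles $W$ original steps, verifying that Assumption~\ref{assumption:linearity} survives the block-averaging, and identifying $\GPES_{K=cW}$ on the original graph with $\GPES_{K'=c}$ on the mega graph. Your write-up is in fact more explicit than the paper's about the estimator-level identification (matching the start-time law, noise indexing, and normalization), and your mention of an alternative route via re-running Theorem~\ref{thm:variance} with pre-grouped blocks is also sound.
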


\paragraph{NRES is a replacement for FullES.}
By sharing the same noise over the entire horizon, the smoothing objective of $\NRES$ is the same as $\FullES$'s. Thus, we can think of $\NRES$ as the 
online counterpart to the offline algorithm $\FullES$. 
Hence $\NRES$ can act as a drop-in replacement to $\FullES$ in UCGs.
A single $\FullES$ worker runs $2T$ unroll steps for each gradient estimate, while a single $\NRES$ runs only $2W$ steps. 
Motivated by this, 
we compare the average of $T/W$ \textit{i.i.d.} $\NRES$ gradient estimates with 1 $\FullES$ gradient estimate as they require the same amount of compute.

\paragraph{NRES is more parallelizable than FullES.} Because the $T/W$ $\NRES$ gradient estimators are independent of each other, we can run them in parallel. Under perfect parallelization, the entire $\NRES$ gradient estimation would require $O(W)$ time to complete. In contrast, the single $\FullES$ gradient estimate has to traverse the UCG from start to finish, thus requiring $O(T)$ time. Hence, $\NRES$ is $T/W$ times more parallelizable than $\FullES$ under the same compute budget (Figure~\ref{fig:nres_vs_fulles}(a)).

\begin{figure}[t]
\centering
\begin{minipage}{0.52\textwidth}
    \begin{overpic}[width=\textwidth]{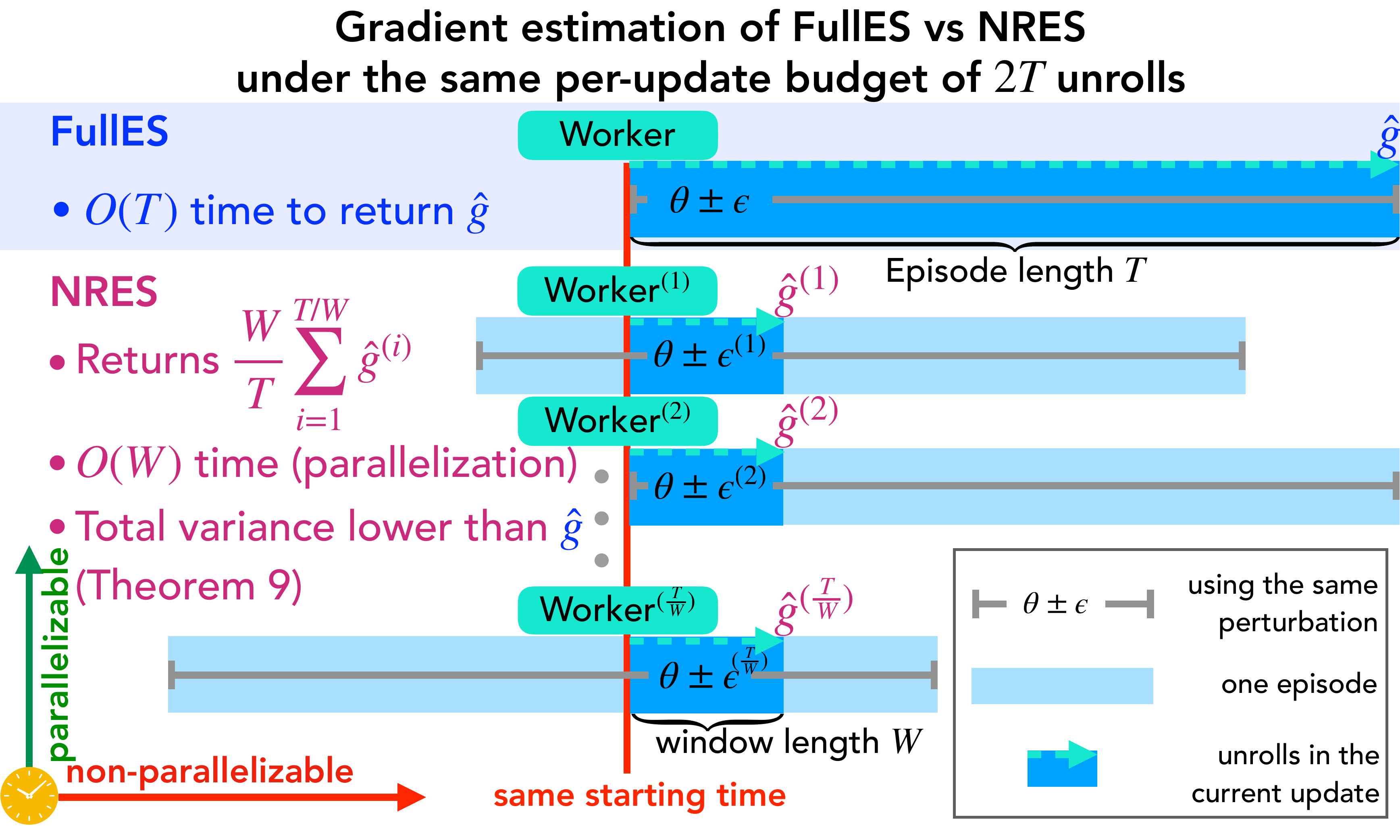}
     \put (17.0,57.0) {\textbf{(a)}}
    \end{overpic}
\end{minipage}
\begin{minipage}{0.46\textwidth}
\fontsize{7pt}{8pt}\selectfont
\begin{tabular}{c|c|c}
    \multicolumn{3}{c}{\scaleto{\textbf{(b)}}{8pt} \quad $T=1000$, $W=1$ \; \; ($\pm 95\%$ ci)} \\
    \hline
     & \multicolumn{2}{c}{total variance $\tr(\Cov[\;\cdot\;])$} \\
    \cline{2-3}
    \fontsize{6.5pt}{15pt}\selectfont
    iter. $j$ & \multirow{3}{*}{$\frac{W}{T} \scaleto{\sum\limits_{i=1}^{T/W} \NRES_i(\theta_j)}{19pt}$} & \multirow{3}{*}{$\scaleto{\FullES(\theta_j)}{7pt}$} \\
    of $\theta_j$ & & \\
   ($\scaleto{\times}{4pt}$10$\scaleto{^4}{6pt}$) & & \\
\hline
\hline
    \fontsize{6.5pt}{10pt}\selectfont
     1 & $\mathbf{1.47} \pm 0.38$ & $864.34 \pm 385.01$\\
     2 & $\mathbf{15.74} \pm 4.90$ & $513.50 \pm 74.71$ \\
     3 & $\mathbf{4.28} \pm 0.78$ & $201.38 \pm 30.18$ \\
     4 & $\mathbf{3.20} \pm 0.50$ & $684.85 \pm 86.90$ \\
     5 & $\mathbf{4.47} \pm 0.88$ & $181.06 \pm 24.97$ \\
     6 & $\mathbf{4.42} \pm 0.68$ & $288.61 \pm 46.77$ \\
     7 & $\mathbf{5.33} \pm 1.08$ & $448.30 \pm 62.44$ \\
     8 & $\mathbf{6.62} \pm 0.89$ & $154.86 \pm 28.21$ \\
\end{tabular}
\end{minipage}
\caption{
(a) Comparison of $\FullES$ and $\NRES$ gradient estimation under the same unroll budget. Unlike $\FullES$ which can only use a single noise perturbation $\bepsilon$ to unroll sequentially for an entire episode of length $T$, $\NRES$ can use $T/W$ parallel step-unlocked workers each unrolling inside its random truncation windows of length $W$ with independent perturbations {\scriptsize $\bepsilon^{(i)}$}. This results in a $T/W\times$ speed-up and variance reduction (Theorem~\ref{thm:nres_fulles_comparison}) over $\FullES$. (b) The total variance of $\NRES$ and $\FullES$ estimators under the same compute budget at the same set of $\theta_i$ checkpoints in Figure~\ref{fig:variance}(a). $\NRES$ achieves significantly lower total covariance.}
\label{fig:nres_vs_fulles}
\end{figure}

\paragraph{NRES can often have lower variance than FullES.} We next compare the variance of 
the average of $T/W$ \textit{i.i.d.} $\NRES$ gradient estimates with the variance of 1 $\FullES$ gradient estimate:
\begin{theorem}
\label{thm:nres_fulles_comparison}
Under Assumption \ref{assumption:linearity}, for any $W$ that divides $T$, if
{
\small
\begin{align}
\label{eq:direction_assumption}
    \scaleto{\sum_{k=1}^{T/W} \norm{\sum_{t=W\cdot (k-1) + 1}^{W\cdot k} g^t}_2^2 \le \frac{d+1}{d+2}\norm{\sum_{j=1}^{T/W} \sum_{t=W\cdot (k-1) + 1}^{W\cdot k} g^t}_2^2}{30pt},
\end{align}
}
then for iid $\{\NRES_i(\theta)\}_{i=1}^{T/W}$  estimators,\;\; {\small $\tr(\Cov(\frac{1}{T/W}  \sum_{i=1}^{T/W} \NRES_i(\theta)) \le \tr(\Cov(\FullES(\theta))$}.
\end{theorem}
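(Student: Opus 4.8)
The plan is to use Assumption~\ref{assumption:linearity} to collapse both estimators into the elementary form $\frac{1}{\sigma^2}(\bepsilon^\top v)\bepsilon$ for a deterministic vector $v$ built from the $\{g^t\}$, and then to read off both total variances from a single Gaussian fourth-moment identity combined with the law of total variance.

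First I would simplify $\FullES$. Assumption~\ref{assumption:linearity} turns each antithetic difference in \eqref{eq:fulles_estimator} into $2\bepsilon^\top g^i$ (with $g^i = \sum_{j=1}^i g^i_j$), so $\FullES(\theta) = \frac{1}{\sigma^2}(\bepsilon^\top G)\bepsilon$ where $G := \frac{1}{T}\sum_{t=1}^T g^t$. For fixed $v$ and $\bepsilon \sim \calN(\bzero,\sigma^2 I_d)$ one has $\E[(\bepsilon^\top v)\bepsilon] = \sigma^2 v$ and $\E[(\bepsilon^\top v)^2\norm{\bepsilon}_2^2] = (d+2)\sigma^4\norm{v}_2^2$ (rotate so $v$ lies along a coordinate axis and use $\E[\epsilon_1^4] = 3\sigma^4$), hence $\tr(\Cov[\tfrac{1}{\sigma^2}(\bepsilon^\top v)\bepsilon]) = (d+1)\norm{v}_2^2$. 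Applying this to $v = G$ gives $\tr(\Cov[\FullES(\theta)]) = (d+1)\norm{G}_2^2$.

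Next I would do the same for $\NRES = \GPES_{K=T}$. Because $K=T$ and $\btau \le T-W < T$, only the single noise $\bepsilon_1 = \bepsilon$ is ever used, the cumulative-noise factor in Lemma~\ref{lemma:gpesk_form} reduces to $\bepsilon$, and each antithetic difference becomes $2\bepsilon^\top g^{\btau+j}$; thus $\NRES(\theta) = \frac{1}{\sigma^2}(\bepsilon^\top h_{\btau})\bepsilon$ with $h_{\btau} := \frac{1}{W}\sum_{j=1}^W g^{\btau+j}$ and $\btau \sim \Unif\{0,W,\dots,T-W\}$ independent of $\bepsilon$. Writing $M := T/W$ and $S_k := \sum_{t=W(k-1)+1}^{Wk} g^t$ (so $h_\btau = S_{\btau/W+1}/W$ and $G = \frac{1}{MW}\sum_{k=1}^M S_k$), I would condition on $\btau$ and reuse the moment identity to get $\E[\NRES(\theta)] = \E_{\btau}[h_{\btau}] = G$ and $\E[\norm{\NRES(\theta)}_2^2] = (d+2)\,\E_{\btau}[\norm{h_{\btau}}_2^2]$, hence $\tr(\Cov[\NRES(\theta)]) = \frac{d+2}{M W^2}\sum_{k=1}^M \norm{S_k}_2^2 - \norm{G}_2^2$. (Equivalently, this follows by applying Theorem~\ref{thm:variance} to the ``mega-UCG'' with $M$ steps and $c = M$, where the third term vanishes, exactly as in the derivation of Corollary~\ref{corollary:w>1}.) Since the $T/W$ copies are i.i.d., the averaged estimator has total variance $\frac{1}{M}$ times this.

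Finally, substituting the two closed forms, the claimed inequality becomes, after multiplying through and using $\norm{G}_2^2 = \frac{1}{M^2W^2}\norm{\sum_k S_k}_2^2$, the scalar inequality $\sum_{k=1}^M \norm{S_k}_2^2 \le \frac{(d+1)+1/M}{d+2}\,\norm{\sum_{k=1}^M S_k}_2^2$, which is implied by hypothesis~\eqref{eq:direction_assumption} because $\frac{d+1}{d+2} \le \frac{(d+1)+1/M}{d+2}$. The only real care is in the $\NRES$ step: one must thread the law of total variance correctly through the two independent randomness sources (the uniform window index $\btau$ and the $M$ perturbations) and keep track of the $1/W$ and $1/M$ normalizations so that the $\sum_k\norm{S_k}_2^2$ and $\norm{\sum_k S_k}_2^2$ terms come out with exactly the right constants. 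It is also worth noting in the write-up that \eqref{eq:direction_assumption} uses the slightly conservative constant $\frac{d+1}{d+2}$ in place of the exactly-needed $\frac{(d+1)+1/M}{d+2}$, which makes the final implication immediate.
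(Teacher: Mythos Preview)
Your proposal is correct and follows essentially the same approach as the paper: both arguments reduce $\FullES$ and $\NRES$ to the form $\tfrac{1}{\sigma^2}(\bepsilon^\top v)\bepsilon$ via Assumption~\ref{assumption:linearity}, evaluate the total variances using the Gaussian fourth-moment identity (yielding $(d+1)\norm{G}_2^2$ for $\FullES$ and $\tfrac{d+2}{MW^2}\sum_k\norm{S_k}_2^2-\norm{G}_2^2$ for a single $\NRES$ worker), scale by $1/M$, and compare. The only minor difference is that the paper simply drops the negative term $-\tfrac{1}{M}\norm{G}_2^2$ before invoking \eqref{eq:direction_assumption}, whereas you carry it through to identify the sharp threshold $\tfrac{(d+1)+1/M}{d+2}$ and then observe that \eqref{eq:direction_assumption} is slightly stronger than needed; this is a nice refinement but not a substantively different route.
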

\begin{remark} To understand the inequality assumption in \eqref{eq:direction_assumption}, we notice that it relates the sum of the squared $2$-norm of vectors {\small $\{\sum_{t=W\cdot (k-1) + 1}^{W\cdot k} g^t\}_{k=1}^{T/W}$} with the squared $2$-norm of their sum. When these vectors are pointing in similar directions, this inequality would hold (to see this intuitively, consider the more extreme case when all these vectors are exactly in the same direction). Because each term {\small $\sum_{t=W\cdot (k-1) + 1}^{W\cdot k} g^t$} can be understood as the total derivative of the sum of smoothed losses in the $k$-th truncation window with respect to $\theta$, we see that inequality \eqref{eq:direction_assumption} is satisfied when, roughly speaking, different truncation windows' gradient contributions are
pointing in similar directions. This is often the case for real-world applications because if we can decrease the losses within a truncation window by changing the parameter $\theta$, we likely will also decrease other truncation windows' losses.
At a high-level, Theorem \ref{thm:nres_fulles_comparison} shows that for many practical unrolled computation graphs, $\NRES$ is not only \textit{better than $\FullES$ due to its better parallelizability} but also \textit{better due to its lower variance} given the same computation budget.
\end{remark}

\paragraph{Empirical Verification of Theorem \ref{thm:nres_fulles_comparison}.}
We empirically verify Theorem \ref{thm:nres_fulles_comparison} in Figure \ref{fig:nres_vs_fulles}(b) using the same set up of the meta-training learned optimizer task used in Figure \ref{fig:variance}(a). Here we compare the total variance of averaging $T/W=1000$ \textit{i.i.d.} $\NRES$ estimators versus using $1$ $\FullES$ gradient estimator (same total amount of compute). We see that $\NRES$ has a significantly lower total variance than $\FullES$ while also allowing $T/W=1000$ times wall-clock speed up due to its parallelizability.

\section{Experiments}
\label{sec:exps}
$\NRES$ is particularly suitable for optimization in UCGs in two scenarios: 1) when the loss surface exhibits extreme local sensitivity; 2) when automatic differentiation of the loss is not possible/gives noninformative (e.g., zero) gradients. In this section, we focus on three applications exhibiting these properties: a) learning Lorenz system's parameters (sensitive), b) meta-training learned optimizers (sensitive), and c) reinforcement learning (nondifferentiable), and show that $\NRES$ outperforms existing AD and ES methods for these applications. When comparing online gradient estimation methods, we keep the number of workers $N$ used by all methods the same for a fair comparison. For the offline method $\FullES$, we choose its number of workers to be $W/T \times$ the number of $\NRES$ workers on all tasks (in order to keep the same number of unroll steps per-update) except for the learned optimizer task in Section~\ref{exp:lopt} where we show that $\NRES$ can solve the task faster while using much fewer per-update steps than $\FullES$.

\vspace{-0.75em}
\subsection{Learning dynamical system parameters}
\label{subsec:lorenz}
\vspace{-0.5em}

In this application, we consider learning the parameters of a Lorenz system, a canonical chaotic dynamical system. Here the state $s_t = (x_t, y_t, z_t) \in \Real^3$ is unrolled with two learnable parameters $a, r$\footnote{We don't learn the third parameter, fixed at $8/3$, so that we can easily visualize a 2-d loss surface.} with the discretized transitions ($dt = 0.005$) starting at $s_0 = (x_0, y_0, z_0) = (1.2, 1.3, 1.6)$:
\small
\begin{align}
    x_{t+1} = x_{t} + a(y_t - x_t)dt; \;\;
    y_{t+1} = y_{t} + [x_t \cdot (r - z_t) - y_t]dt; \;\;
    z_{t+1} = z_t + [x_t \cdot y_t - 8/3 \cdot z_t]dt. \nonumber 
\end{align}
\normalsize
Due to the positive constraint on $r >0$ and $a>0$, we parameterize them 
as
$\theta = (\ln(r), \ln(a)) \in \Real^2$ and exponentiate the values in each application. 
We assume we observe the
\underline{g}round \underline{t}ruth $z$-coordinate {\small $z^{\textrm{gt}}_t$} for {\small $t\in [T], T=2000$} steps unrolled by the default parameters {\small $(r^{\textrm{gt}}, a^{\textrm{gt}})=(28, 10)$}. For each step $t$, we measure the 
squared loss {\small $L^s_t(s_t) \coloneqq (z_t - z^{\textrm{gt}}_t)^2$}. Our goal is to recover the ground truth parameters {\small$\theta_{\mathrm{gt}} = (\ln(28), \ln(10))$} by optimizing the average loss over all time steps using vanilla SGD. We first visualize the training loss surface in the left panel of Figure \ref{fig:lorenz}(a) (also see Figure~\ref{app_fig:lorenz_loss_surface} in the Appendix) and notice that it has extreme sensitivity to small changes in the parameter $\theta$.

\begin{figure*}[t]
    \centering
    \begin{overpic}[width=0.6\textwidth,percent]{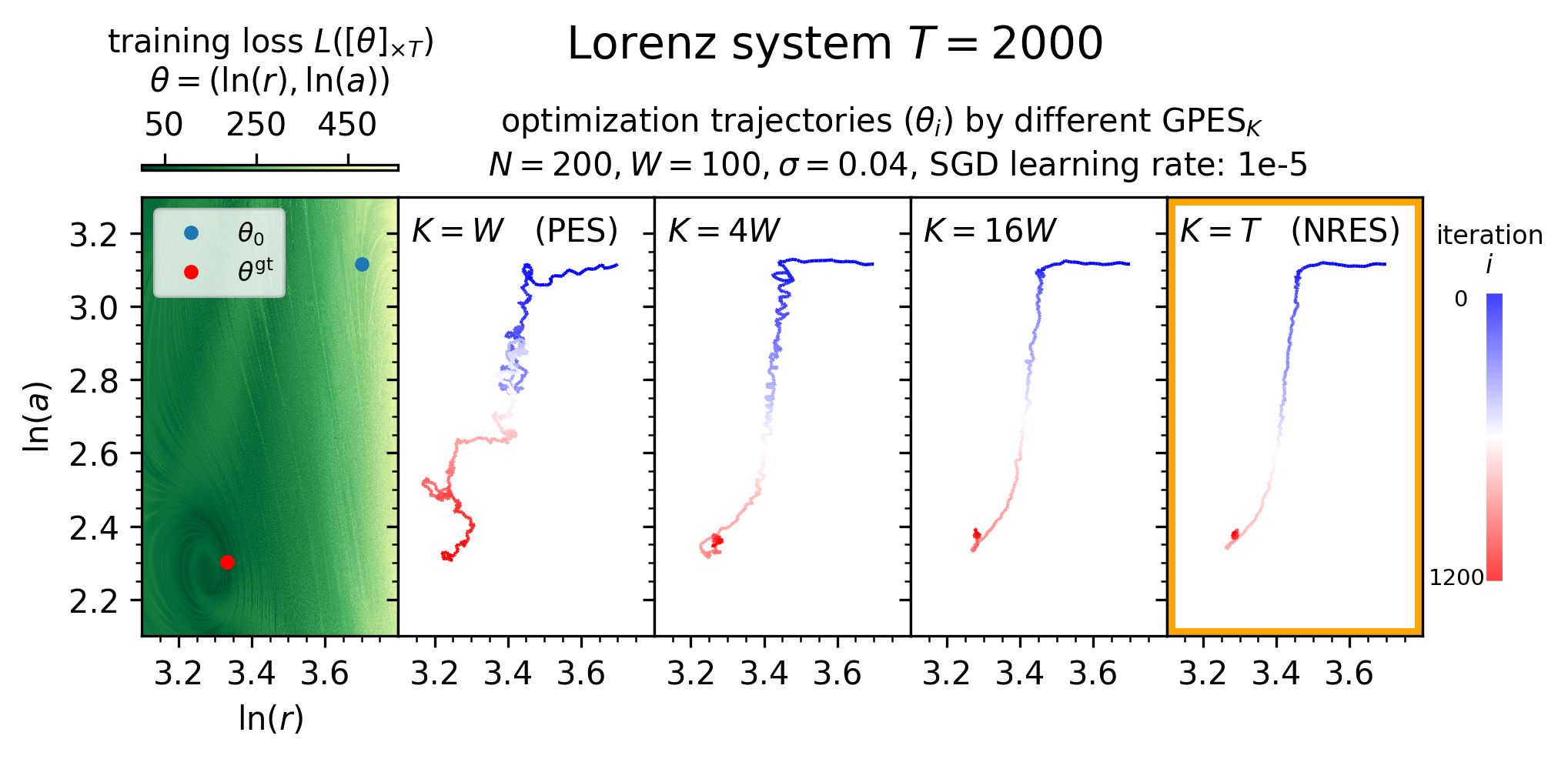}
        \put(0.0,47){\textbf{(a)}}
    \end{overpic}
    \begin{overpic}[width=0.38\textwidth,percent]{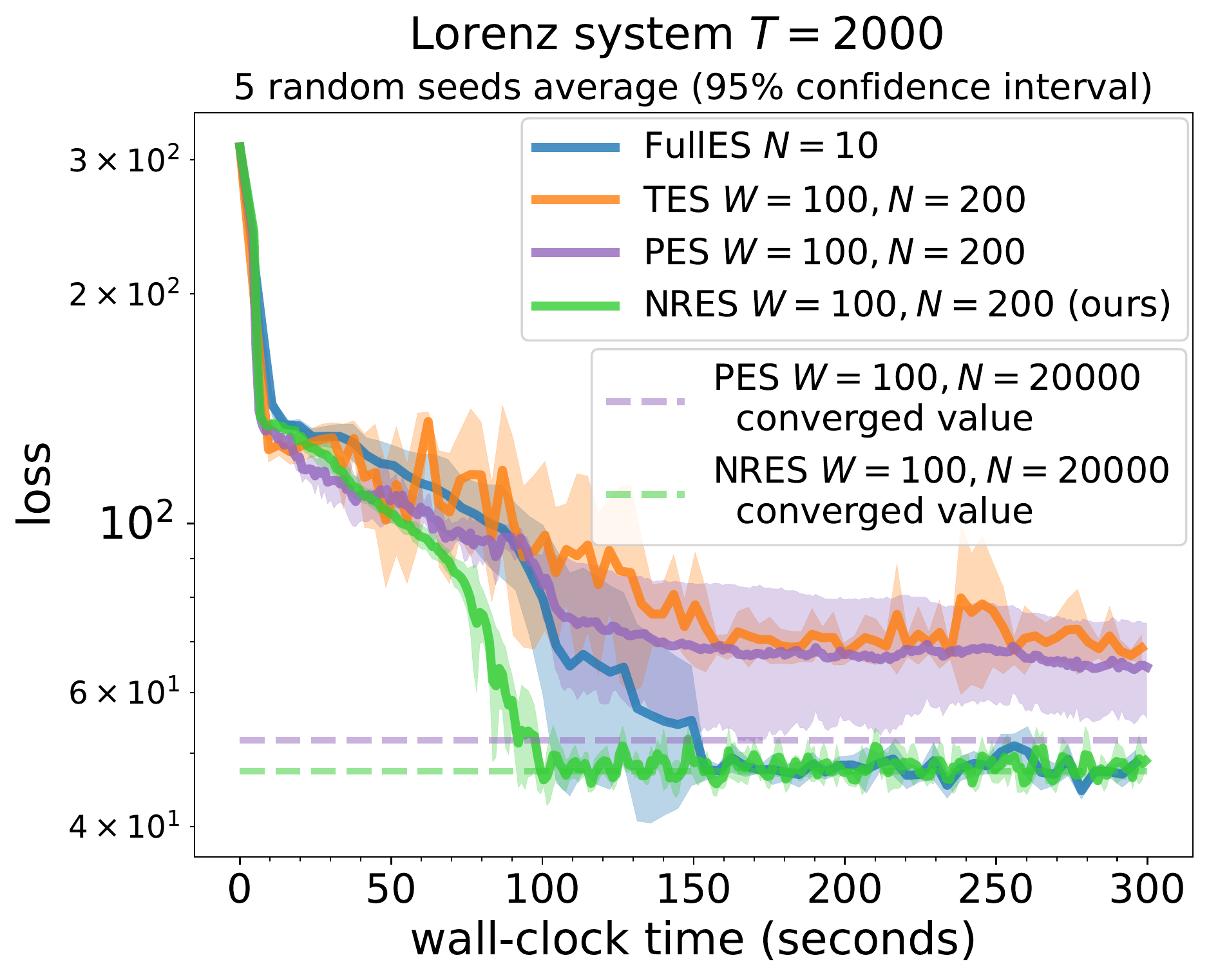}
        \put(5.0,75){\textbf{(b)}}
    \end{overpic}
    \caption{(a) The pathological training loss surface of the Lorenz system problem (left) and the optimization trajectory of different $\GPES_K$ gradient estimators (right). $\NRES$'s trajectory is the smoothest because of its lowest variance. (b) Different ES methods' loss convergence on the same problem. $\NRES$ converges the fastest.}
    \label{fig:lorenz}
\end{figure*}

To illustrate the superior variance of $\NRES$ over other $\GPES_K$ estimators, we plot in the right panels of Figure \ref{fig:lorenz}(a) the optimization trajectory of $\theta$ using gradient estimator $\GPES_K$ with different values of $K$ under the same SGD learning rate. We see that $\NRES$'s trajectory exhibits the least amount of oscillation due to its lowest variance.
In contrast, we notice that $\PES$'s trajectory is highly unstable, thus requiring a smaller learning rate than $\NRES$ to achieve a possibly slower convergence. Hence, we take extra care in tuning each method's constant learning rate and additionally allow $\PES$ to have a decay schedule. We plot the convergence of different ES gradient estimators in wall-clock time using the same hardware in Figure \ref{fig:lorenz}(b). (We additionally compare against automatic differentiation methods in Figure~\ref{app_fig:lorenz_ad_es_comparison} in the Appendix; they all perform worse than the ES methods shown here.) 

In terms of the result, we see that $\NRES$ outperforms \textbf{1)} $\TES$, as $\NRES$ is unbiased and can better capture long-term dependencies; \textbf{2)} $\PES$, as $\NRES$ has provably lower variance, which aids convergence in stochastic optimization; \textbf{3)} $\FullES$, as $\NRES$ can produce more gradient updates in the same amount of wall clock time than $\FullES$ (with parallelization, each $\NRES$ update takes $O(W)$ time instead of $\FullES$'s $O(T)$ time). Additionally, we plot the asymptotically converged loss value when we train with a significantly larger number of particles ($N=20000$) for $\PES$ and $\NRES$. We see that by only using $N=200$ particles, $\NRES$ can already converge around its asymptotic limit, while $\PES$ is still far from reaching its limit within our experiment time.

\subsection{Meta-training learned optimizers}
\label{exp:lopt}

In this application \cite{metz2019understanding}, the meta-parameters $\theta$ of a learned optimizer control the gradient-based updates of an inner model's parameters. The inner state $s_t$ is the optimizer state which consists of both the inner model's parameters and its current gradient momentum statistics. The transition function $f_t$ computes an additive update vector to the inner parameters using $\theta$ and a random training batch and outputs the next optimizer state $s_{t+1}$.
Each time step $t$'s meta-loss $L_t^s$ evaluates the updated inner parameters' generalization performance using a sampled validation batch.

We consider meta-training the learned optimizer model given in \citep{metz2019understanding} ($d=1762$) to optimize a 3-layer MLP on the Fashion MNIST dataset for $T=1000$ steps. (We show results on the same task with higher-dimension and longer horizon in Appendix \ref{app_subsubsec:lopt}.) This task is used in the training task distribution of the state of the art learned optimizer VeLO \citep{metz2022velo}\footnote{We show the performance of ES methods on another task from this distribution in Appendix~\ref{app_subsubsec:lopt}.}.
The loss surface for this problem has high sharpness and many suboptimal minima as previously shown in Figure \ref{fig:loss}(a). We meta-train with Adam using different gradient estimation methods with the same hardware and tune each gradient estimation method's meta learning rate individually.
Because AD methods all perform worse than the ES methods, we defer their results to Figure~\ref{app_fig:lopt_ad_es_comparison_fashion_mnist} in the Appendix and 
only plot the convergence of the ES methods in wall-clock time in Figure \ref{fig:lopt_full_comparison}(a).

\begin{figure*}[t]
     \centering
    \begin{overpic}[width=0.4\textwidth, percent]{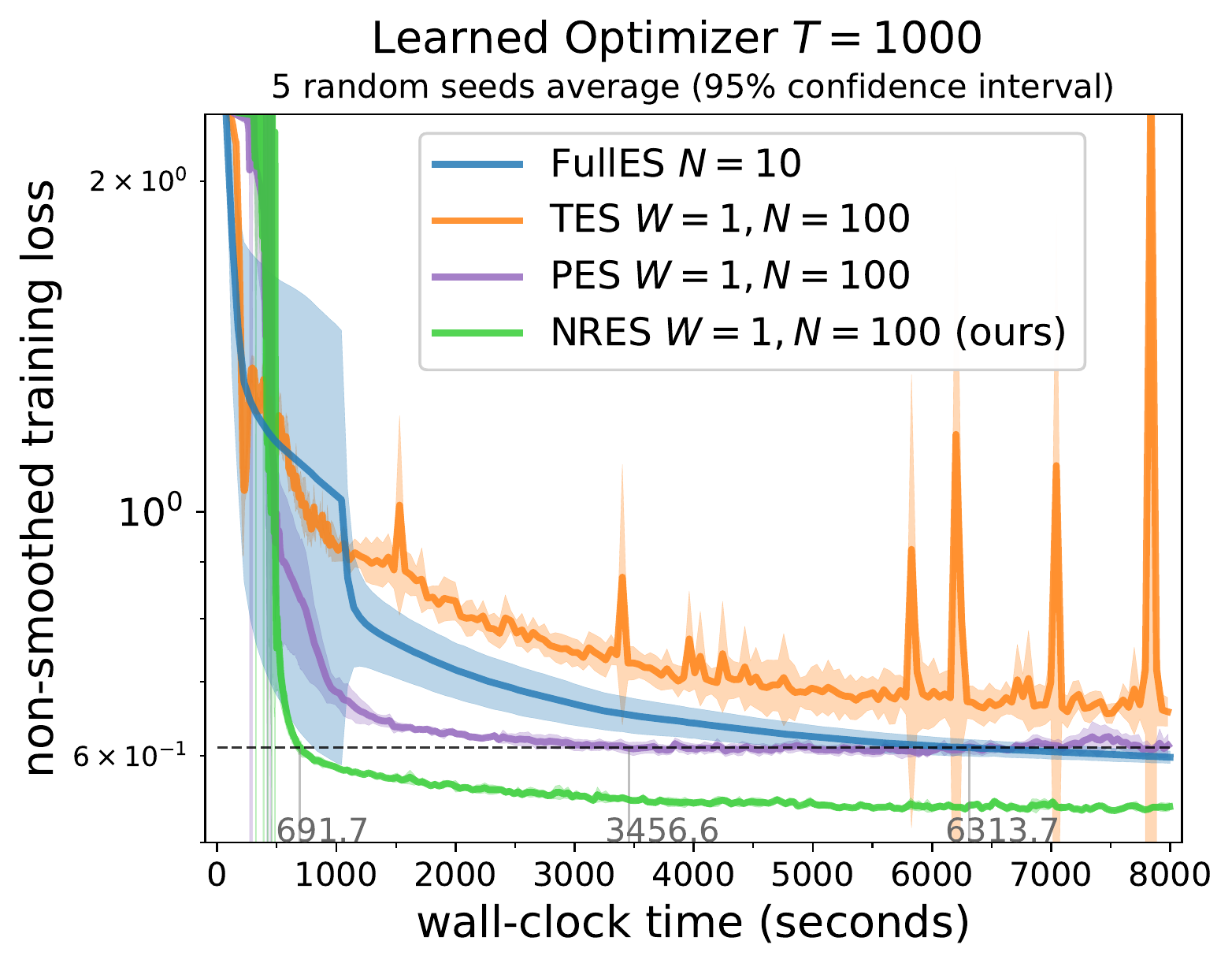}
     \put(7.0,74){\textbf{(a)}}
    \end{overpic}
    \begin{overpic}[width=0.4\textwidth, percent]{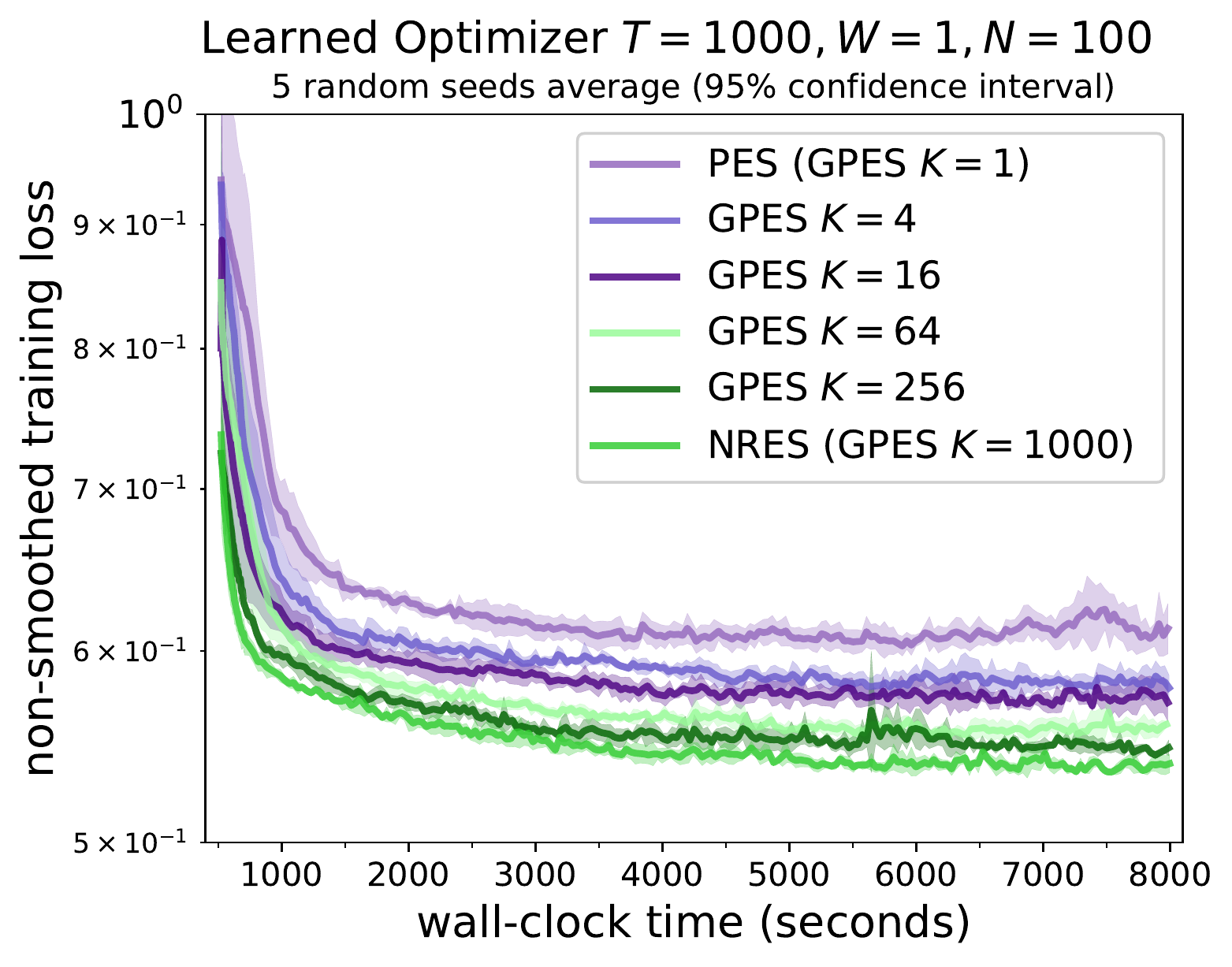}
     \put(7.0,74){\textbf{(b)}}
    \end{overpic}
    \caption{(a) Different ES gradient estimators' training loss convergence on the learned optimizer task in wall-clock time. $\NRES$ reaches the lowest loss fastest. (b) The loss convergence of $\GPES_K$ gradient estimators with difference $K$ values on the same task. $\NRES$ converges the fastest due to its reduced variance.}
    \label{fig:lopt}
    \label{fig:lopt_gpes_comparison}      \label{fig:lopt_full_comparison}
\end{figure*}

Here we see that $\NRES$ reaches the lowest loss value in the same amount of time.
In fact, $\PES$ and $\FullES$ would require $5$ and $9\times$ (respectively) longer than $\NRES$ to reach a loss $\NRES$ reaches early on during its training, while $\TES$ couldn't even reach that loss within our experiment time. It is worth noting that, for this task, $\NRES$ only require $2 \cdot N \cdot W = 200$ unrolls to produce an unbiased, low-variance gradient estimate, which is even smaller than the length of a single episode ($T=1000)$. In addition, we situate $\NRES$'s performance within our proposed class of $\GPES_K$ estimators in Figure \ref{fig:lopt_gpes_comparison}(b).
In accordance with Corollary \ref{corollary:w>1}, $\NRES$ converges fastest due to its reduced variance.
\subsection{Reinforcement Learning}
\label{exp:rl}

It has been shown that ES is a scalable alternative to policy gradient and value function methods for solving reinforcement learning tasks \citep{salimans2017evolution}. In this application, we learn a linear policy \footnote{We additionally compare the ES methods on learning a non-linear ($d=726$) policy on the Half-Cheetah task in Appendix~\ref{app_subsubsec:rl}.} (following \citep{mania2018simple}) using different ES methods on the Mujoco \citep{todorov2012mujoco} Swimmer ($d=16$) and Half Cheetah task ($d=102$). We minimize the average of negative per-step rewards over the horizon length of $T=1000$, which is equivalent to maximizing the undiscounted sum rewards. Unlike \citep{salimans2017evolution,mania2018simple,vicol21unbiased}, we don't use additional heuristic tricks such as \textbf{1)} rank conversion of rewards, \textbf{2)} scaling by loss standard deviation, or \textbf{3)} state normalization. Instead, we aim to compare the pure performance of different ES methods assuming perfect parallel implementations. To do this, we measure a method's performance as a function of the number of \textit{sequential environment steps} it used. Sequential environment steps are steps that \textit{have to} happen one after another (e.g., the environment steps within the same truncation window). However, steps that are parallelizable don't count additionally in the sequential steps. Hence, the wall-clock time under perfect parallel implementation is linear with respect to the number of sequential environment steps used. As all the methods we compare are iterative update methods, we additionally require that each method use the same number of environment steps per update when measuring each method’s required number of sequential steps to solve a task. We tune the SGD learning rate individually for each method and plot their total rewards progression on both Mujoco tasks in Figure \ref{fig:rl_reward_progression}.

Here we see that $\TES$ fails to solve both tasks due to the short horizon bias \cite{wu2018understanding}, making it unable to capture the long term dependencies necessary to solve the tasks. On the other hand, $\PES$, despite being unbiased, suffers from high variance, making it take longer (or unable in the case of Half Cheetah) to solve the task than $\NRES$. As for $\FullES$, despite using the same amount of compute per gradient update as $\NRES$, it's much less parallelizable as discussed in Section \ref{sec:nres} -- it takes much longer time ($10\times$ and $60\times$) than $\NRES$ assuming perfect parallelization. In addition to the number of sequential steps, we additionally show the total number of environment steps used by each method in Table \ref{app_tab:rl_total_env_steps} in the Appendix -- $\NRES$ \textit{also uses the least total number of steps} to solve both tasks, making it the most sample efficient.

\begin{figure*}[t]
    \centering
    \begin{overpic}[width=0.38\textwidth, percent]{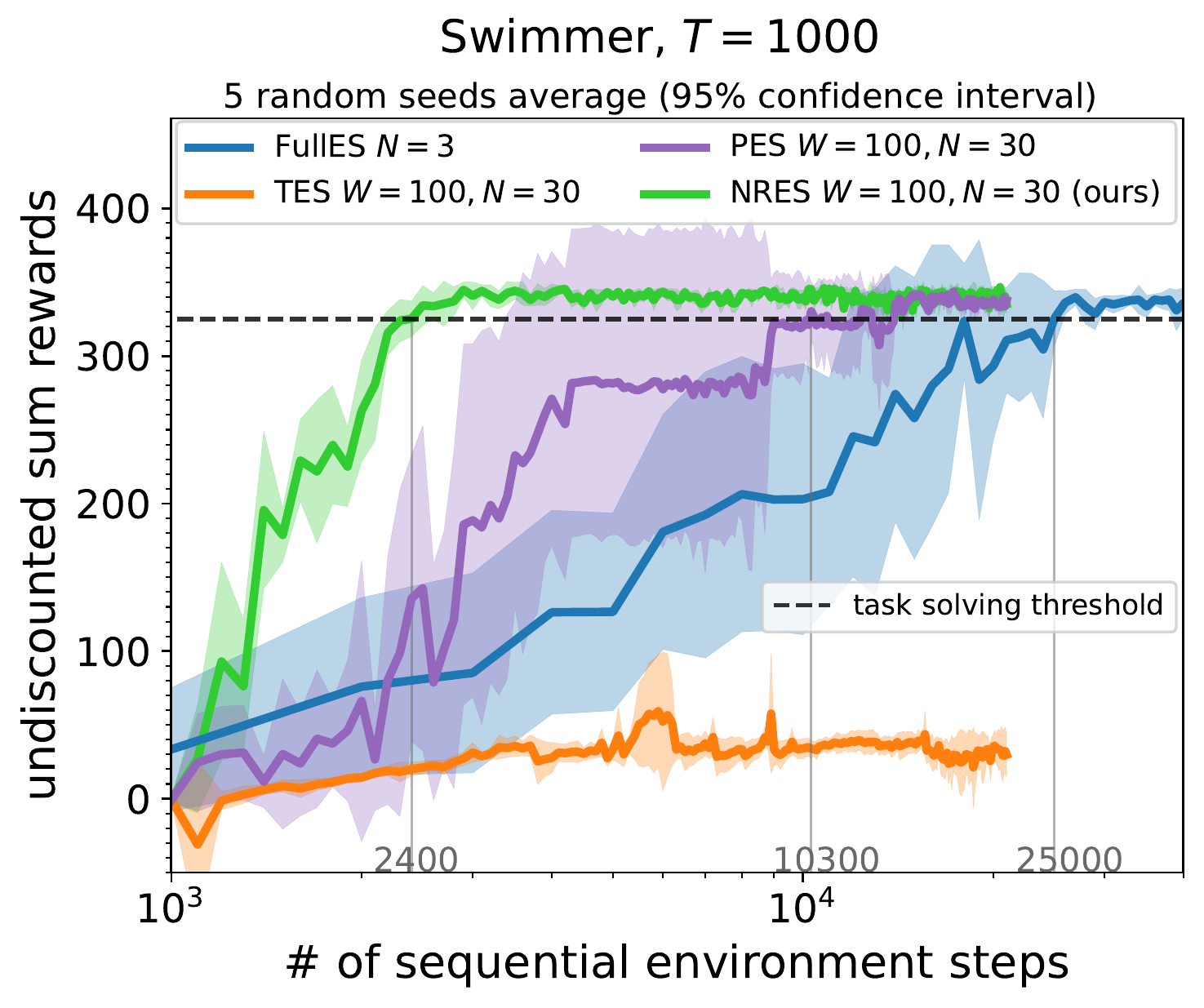}
        \put(27.0, 79){\textbf{(a)}}
    \end{overpic}
    \begin{overpic}[width=0.41\textwidth, percent]{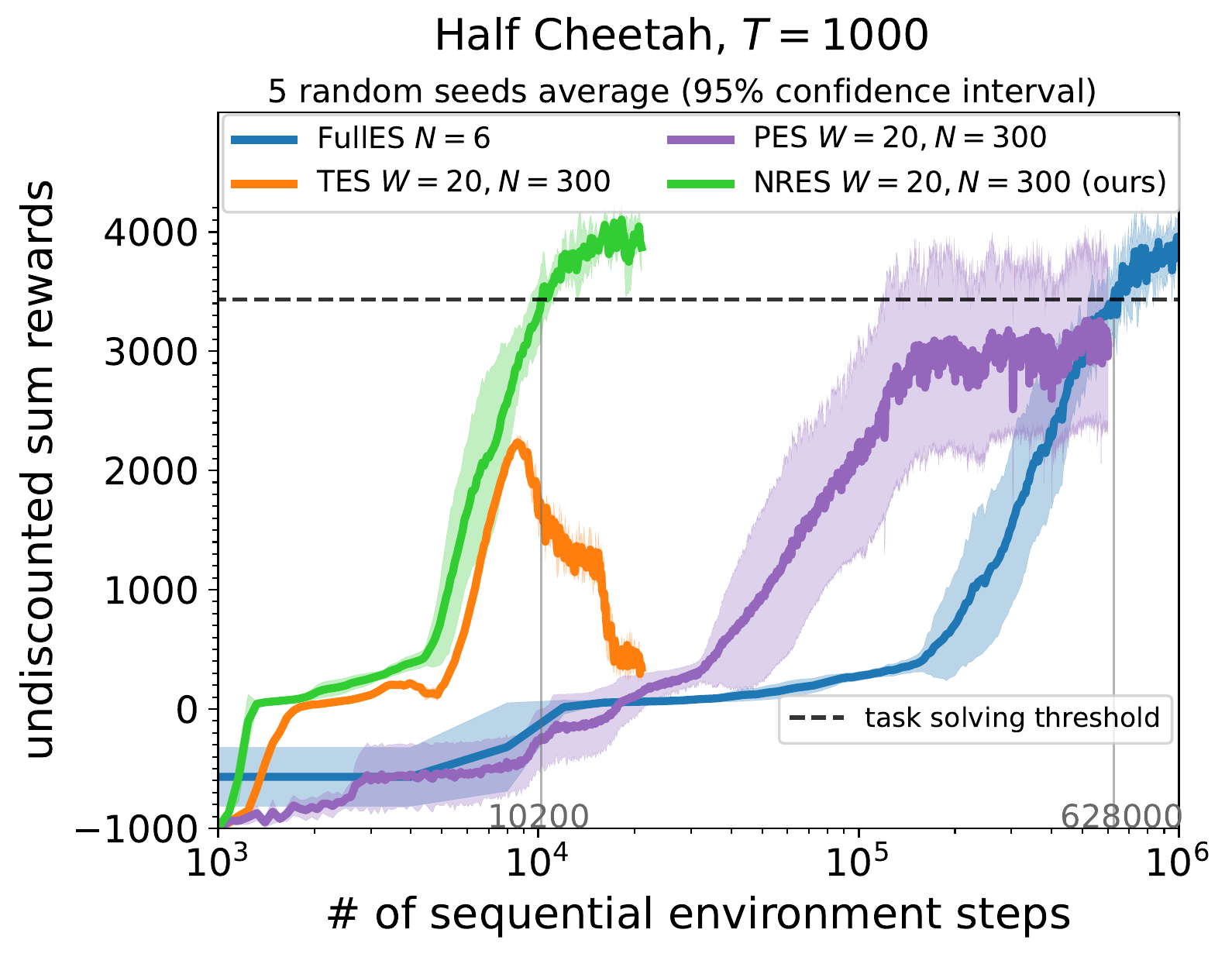}
        \put(26.0, 73){\textbf{(b)}}
    \end{overpic}
    \caption{ES methods' performance vs. the number of sequential environment steps used in solving the Mujoco (a) Swimmer task and (b) Half Cheetah task. $\NRES$ solves the tasks fastest under perfect parallelization.}
    \label{fig:rl_reward_progression}
\end{figure*}

\section{Additional Related Work}
\label{sec:relwork}

Beyond the most related work in Section~\ref{sec:background}, in this section, we further position 
$\NRES$
relative to existing zeroth-order gradient estimation methods. We also provide additional related work on automatic differentiation (AD) methods for unrolled computation graphs in Appendix~\ref{app_sec:related_work}.

\paragraph{Zeroth-Order Gradient Estimators.} 
In this work, we focus on zeroth-order methods that can estimate continuous parameters' gradients to be plugged into any first-order optimizers, unlike other zeroth-order optimization methods such as Bayesian Optimization \citep{frazier2018tutorial}, random search \citep{bergstra2012random}, or Trust Region methods \citep{maggiar2018derivative,liu2019trust}. We also don't compare against policy gradient methods \citep{sutton1999policy}, because they assume internal stochasticity of the unrolling dynamics, which may not hold for deterministic policy learning \citep[e.g.,][]{todorov2012mujoco}. Within the space of evolution strategies methods, many works have focused on improving the vanilla ES method's variance by changing the perturbation distribution \citep{choromanski2018structured, maheswaranathan2019guided, agapie2021spherical,gao22generalizing}, considering the covariance structure \citep{hansen2016cma}, and using control variates \citep{tang2020variance}. However, these works do not consider the unrolled structure of UCGs and are offline methods. In contrast, we reduce the variance by incorporating this unrolled aspect through online estimation and noise-reuse. As the aforementioned variance reduction methods work orthogonally to $\NRES$, it is conceivable that these techniques can be used in conjunction with $\NRES$ to further reduce the variance.
\section{Discussion, Limitations, and Future Work}
\label{sec:conclusion}

In this work, we improve online evolution strategies for unbiased gradient estimation in unrolled computation graphs by analyzing the best noise-sharing strategies. By generalizing an existing unbiased method, Persistent Evolution Strategies, to a broader class, we analytically and empirically identify the best estimator with the smallest variance and name this method Noise-Reuse Evolution Strategies ($\NRES$). We demonstrate the convergence benefits of $\NRES$ over other automatic differentiation and evolution strategies methods on a variety of applications. 

\paragraph{Limitations.} As $\NRES$ is both an online method and an ES method, it naturally inherits some limitations shared by all methods of these two classes, such as hysteresis and variance's linear dependence on the dimension $d$. We provide a detailed discussion of these limitations in Appendix~\ref{app_sec:limitations}.

\paragraph{Future Work.} There are some natural open questions: \textbf{1)} \textit{choosing a better sampling distribution for $\NRES$}. Currently the isotropic Gaussian's variance $\sigma^2$ is tuned as a hyperparameter. Whether there are better ways to leverage the sequential structure in unrolled computation graphs to automate the selection of this distribution is an open question. \textbf{2)} \textit{ Incorporating hysteresis.} Our analysis assumes no hysteresis in the gradient estimates and we haven't observed much impact of it in our experiments. However, understanding when and how to correct for hysteresis is an interesting direction.

\section*{Acknowledgments}
We thank Kevin Kuo, Jingnan Ye, Tian Li, and the anonymous reviewers for their helpful feedback.

\bibliographystyle{unsrtnat}
\bibliography{references}

\newpage
\appendix
\onecolumn
\addtocontents{toc}{\protect\setcounter{tocdepth}{2}}
\section*{Appendix}

\renewcommand{\contentsname}{Appendix Outline}
\tableofcontents

\clearpage

\section{Notation}

In this section we provide two tables (Table~\ref{tab:notation1} and \ref{tab:notation2}) that sumarize all the notations we use in this paper.

\newcommand{\tablebreak}{[0.5em]}
\begin{table}[h]
\centering
\small
    \caption{Notations used in this paper (Part I)}
    \label{tab:notation1}
    \renewcommand{\arraystretch}{1.6} %
    \begin{tabular}{p{0.2\linewidth} p{0.6\linewidth}}
    \hline
    $T$ & the length of the unrolled computation graph. \\
    $[T]$ & the set of integers $\{1, \ldots, T\}$. \\
    $t \in \Integer \cap [0, T]$ & a time step in the dynamical system. \\
    $\theta \in \Real^d$ & the learnable parameter that unrolls the dynamical system at each time step. \\
    $\theta_i \in \Real^d$, $i \in [T]$ & the parameter that unrolls the dynamical system at the $i$-th time step. \\
    $d$ & dimension of the learnable unroll parameter $\theta$. \\
    $s \in \Real^p$ & an inner state of the dynamical system. \\
    $s_t \in \Real^p$ & the inner state of the dynamical system at time step $t$. \\
    $p$ & the dimension of the inner state $s$ in the dynamical system. \\
    $f_t: \Real^p \times \Real^d \rightarrow \Real^p$ & the transition dynamics from state at time step $t-1$ to time step $t$. The state to be transitioned into at time step $t$ is $f_t(s_{t-1}, \theta)$. $f_t$ doesn't need to be the same for all $t \in [T]$. For example, different $f_t$ could implicitly use different data as part of the computation.\\
    $L_t^s: \Real^p \rightarrow \Real$ & the loss function of the state $s_t$ at time step $t\in [T]$, which gives loss as $L_t^s(s_t)$.\\
    $L_t: \Real^{dt} \rightarrow \Real$ & the loss at time step $t \in [T]$ as a function of all the $\theta_i$'s applied up to time step $t$, $L_t(\theta_1, \ldots, \theta_t)$. Here $\theta_i$ doesn't need to be all the same.\\
    $L: \Real^{dT} \rightarrow \Real$ & the average loss over all $T$ steps incurred by unrolling the system from $s_0$ to $s_T$ using the sequence of $\{\theta_i\}_{i=1}^T$. $L(\theta_1, \ldots, \theta_T) \coloneqq \frac{1}{T} \sum_{t=1}^T L_t(\theta_1, \ldots, \theta_t)$.\\
    $L_t(\brck{\theta}_{\times a}, \brck{\theta'}_{\times {t-a}})$ & the loss incurred at time step $t$ by first unrolling with $\theta$ for $a$ steps, then unrolling with $\theta'$ for $t-a$ steps. $L_t(\brck{\theta}_{\times a}, \brck{\theta'}_{\times {t-a}})\coloneqq f(\underbrace{\theta,\ldots, \theta}_{\textrm{$a$ times}}, \underbrace{\theta',\ldots, \theta'}_{\textrm{$(t-a)$ times}})$.\\
    $W$ & the length of an unroll truncation window (we always assume $T$ is divisible by $W$ for proof cleanness). \\
    $I_{d\times d}$ & the $d$ by $d$ identity matrix.\\
    $\sigma$ & a positive hyperparameter controlling the standard deviation in the isotropic Gaussian distribution $\calN(\bzero, \sigma^2 I_{d\times d})$.\\
    $\bepsilon$ & a random perturbation vector in $\Real^d$ sampled from $\calN(\bzero, \sigma^2 I_{d\times d})$. \\
    $\bepsilon_i$ & the $i$-th Gaussian random vector sampled by an online evolution strategies worker in a given episode. The total number of $\epsilon_i$ in an episode might be strictly smaller than the number of truncation windows (which is $T/W$) for $\GPES_{K}$ when $K > W$.  \\
    $\btau$ & a random variable sampled from the uniform distribution $\Unif\{0, W, \ldots, T - W\}.$ $\btau$ denotes the starting time step of a truncation window by an online evolution strategies worker. \\
    $K$ & the noise-sharing period for the algorithm $\GPES_K$. $K$ is always a multiple of $W$, i.e. $K=cW$ for some positive integer $c$. \\
    $c$ & the integer ratio $K / W$. \\
    \end{tabular}
\end{table}
\begin{table}[ht]
\small
\centering
    \caption{Notations used in this paper (continued, Part II)}
    \label{tab:notation2}
    \renewcommand{\arraystretch}{1.6} %
    \begin{tabular}{p{0.2\linewidth} p{0.6\linewidth}}
    \hline
    $\lceil x \rceil$ & the ceiling of $x \in \Real$, the smallest integer $y \in \Integer$ such that $y \ge x$ \\
    $\lfloor x \rfloor$ & the floor of $x \in \Real$, the largest integer $y \in \Integer$ such that $y \le x$ \\
    $\remainder: \Integer^+ \times \Integer^+ \rightarrow \Integer^+$ & the modified remainder function. $\remainder(x, y)$ is the unique integer $n \in [1, y]$ where $x = qy + n$ for some integer $q$. For example, if $T=6W$ and $K=3W$, $\remainder(T, K) = 3W$, while if $T=6W$ and $K=4W$, $\remainder(T, K) = 2W$.\\   
    $K$-smoothed\quad loss & the loss function: \\
     & \quad $\theta \mapsto \E_{\{\bepsilon_i\}}  L([\theta + \bepsilon_1]_{\scaleto{\times K}{4pt}}, \ldots, [\theta + \bepsilon_{\lceil T/K \rceil}]_{\times \remainder(T, K)})$. \\
    $\{\{g^t_i \in \Real^d\}_{i=1}^t\}_{t=1}^T$ & the classes of sets of vectors associated with a given fixed $\theta$ defined in Assumption \ref{assumption:linearity}. For any given $\theta$, there are $T$ such sets of vectors, one for each time step $t \in \{1, \ldots, T\}$. Roughly speaking, $g^t_i$ is time step $t$'s smoothed loss's partial derivative with respect to the $i$-th application of $\theta$ \\
    $g^t$ & $g^t \coloneqq \sum_{i=1}^t g^t_i$ for any time step $t$. Roughly speaking, $g^t$ is time step $t$'s smoothed loss's total derivative with respect to the all the application of the same $\theta$. \\
    $g^t_{K, j}$ & $g^t_{K, j} \coloneqq \sum_{i=K\cdot(j-1) + 1}^{\min\{t,\; K\cdot j\}} g^t_i$ for $j \in \{1, \ldots, \lceil t/K \rceil \}$ and time step $t$. We can understand $g^t_{K, j}$ as the sum of partial derivatives of smoothed step-$t$ loss with respect to all $\theta$'s in the $j$-th noise-sharing window of size $K$. If $K$ doesn't divide $t$, the last such window will be shorter than $K$. \\
    $g^t_{c, j}$ & $g^t_{K, j}$ when $K=c$ (used in the case of $W=1$). \\
    $\tr(A)$ & the trace (sum of diagonals) of a $d \times d$ matrix $A \in \Real^{d\times d}$. \\
    $\Cov[\bX]$ & the $d \times d$ covariance matrix of random vector $\bX \in \Real^d$. \\
    $\FullES(\theta)$ & a single $\FullES$ worker's gradient estimate given in Equation \eqref{eq:fulles_estimator}. To give a gradient estimate, the worker will run a total of $2T$ steps. The randomness comes from $\bepsilon$. \\
    $\TES(\theta)$ & a single $\TES$ worker's gradient estimate given in Equation \eqref{eq:tes_estimator}. This estimator keeps track of a single saved state. It samples a new noise in each truncation window and performs antithetic unrolling from the saved state. After computing the gradient estimate using the two antithetic states, the worker will run another $W$ steps from the saved state using $\theta$ without perturbation and record this as the new saved state. This estimator takes $3W$ unroll steps in total to produce a gradient estimate. The randomness comes from $\{\bepsilon_i\}_{i=1}^{T/W}$ and $\btau$. \\
    $\PES(\theta)$ & a single $\PES$ worker's gradient estimate given in Equation \eqref{eq:pes_estimator}. This estimator keeps both a positive and negative inner state and samples a new noise perturbation at the beginning of every truncation window. It accumulates all the noise sampled in an episode to correct for bias. It runs a total of $2W$ steps to produce a gradient estimate. The randomness comes from $\{\bepsilon_i\}_{i=1}^{T/W}$ and $\btau$. \\
    $\GPES_K(\theta)$ & a single $\GPES_K(\theta)$ worker's gradient estimate given in Lemma \ref{lemma:gpesk_form}. This estimator keeps both a positive and negative inner state and samples a new noise perturbation every $K$ steps in a given episode. It also accumulates past sampled noise for bias correction. To give a gradient estimate, the worker will run a total of $2W$ steps. The randomness comes from $\{\bepsilon_i\}_{i=1}^{\lceil T/K \rceil}$ and $\btau$.\\
    $\NRES(\theta)$ & a single $\NRES$ worker's gradient estimate. It is the same as $\GPES_{K=T}(\theta)$. This estimator keeps both a positive and negative inner state, and it only samples a noise perturbation once at the beginning of each episode. To give a gradient estimate, the worker will run a total of $2W$ steps. The randomness comes from $\bepsilon$ (single noise sampled at the beginning of an episode) and $\btau$.\\
    \end{tabular}
\end{table}
\clearpage

\setcounter{theorem}{0}

\allowdisplaybreaks
\section{Additional Related Work}
\label{app_sec:related_work}

Beyond the related work we discuss in Section~\ref{sec:background} and \ref{sec:relwork} on evolution strategies methods, in this section, we discuss additional related work in gradient estimation for unrolled computation graphs, including work on automatic differentiation (AD) (reverse mode and forward mode) methods and a concurrent work on online evolution strategies.
Some of the AD methods described in this section are compared against as baselines in our experiments in Section~\ref{subsec:lorenz} and \ref{exp:lopt}. %

\paragraph{Reverse Mode Differentiation (RMD).} When the loss function and transition functions in UCG is differentiable, the default method for computing gradients is backpropagation through time ($\BPTT$). However, $\BPTT$ has difficulties when applied to  UCGs: \textbf{1)} \textit{memory issues}:
the default BPTT implementations \citep[e.g.,][]{abadi2016tensorflow, paszke2019pytorch} store all activations of the graph in memory, making memory usage scale linearly with the length of the unrolled graph. There are works that improve the memory dependency of $\BPTT$; however, they either require customized framework implementation \citep{chen2016training,tallec2017unbiasing} or specially-designed reversible computation dynamics \citep{maclaurin2015gradient, gomez2017reversible}. \textbf{2)} \textit{not online}: each gradient estimate using $\BPTT$ requires full forward and backward computation through the UCG, which is computationally expensive and incurs large latency between successive parameter updates. To alleviate the memory issue and allow online updates, a popular alternative is truncated backpropagation through time ($\TBPTT$) which estimates the gradient within short truncation windows. However, this blocks the gradient flow to the parameters applied before the current window, making the gradient estimate biased and unable to capture long-term dependencies \citep{wu2018understanding}. In contrast, $\NRES$ is memory efficient, online, and doesn't suffer from bias, while able to handle loss surfaces with extreme local sensitivity.

\paragraph{Forward Mode Differentiation (FMD).} An alternative to RMD in automatic differentiation is FMD, which computes gradient estimates through Jacobian-vector products alongside the actual forward computation, thus allowing for online applications. Among FMD methods, real-time recurrent learning ($\RTRL$) \citep{williams1989learning} requires a computation cost that scales with the dimension of the learnable parameter, making it intractable for large problems. To alleviate the computation cost, stochastic approximations of $\RTRL$ have been proposed: $\DODGE$ \citep{silver2021learning} computes directional gradient along a certain direction; $\UORO$ \citep{tallec2017unbiased} unbiasedly approximates the Jacobian with a rank-1 matrix; $\mathrm{KF}-\mathrm{RTRL}$ \citep{mujika2018approximating} and OK \citep{benzing2019optimal} uses Kronecker product decomposition to improve the gradient estimate's variance, but are specifically for RNNs. In Section~\ref{subsec:lorenz} and \ref{exp:lopt}, we experiment with forward mode methods $\DODGE$ (with standard Gaussian random directions) and $\UORO$ and demonstrate $\NRES$'s advantage over these two methods when the loss surfaces have high sensitivity to small changes in the parameter space.

\paragraph{Concurrent work on online evolution strategies.} Finally, we note that a concurrent work \cite{vicol2023low} on online evolution strategies proposes a similar algorithm (ES-Single) to the algorithm $\NRES$ proposed in our paper. However, their analyses and experiments differ from ours in a number of ways: 

\begin{enumerate}[leftmargin=*]
    \item \textit{Theoretical assumptions}. To capture the online nature of the online ES methods considered in our paper, we adopt a novel view which treats the random truncation window that an online ES method starts from as a random variable (which we denote by $\btau$). In contrast, this assumption is not made neither in the prior work on PES \cite{vicol21unbiased} nor in the concurrent work \cite{vicol2023low}. As such, these analyses cannot distinguish the theoretical difference between the estimator $\FullES(\theta)$ and $\NRES(\theta)$.
    \item \textit{Theoretical conclusions}. As a result of our novel viewpoint/theoretical assumption regarding the random truncation window used in online ES methods, we provide a precise variance characterization of our newly proposed class of $\GPES_K$ gradient estimators. Two of the main theoretical contributions of our work are thus: \textit{a)} showing that $\NRES$ provably has the lowest variance among the entire $\GPES$ class and \textit{b)} identifying the conditions under which $\NRES$ can have lower variance than $\FullES$ with the same compute budget. In contrast, \cite{vicol2023low} do not make such contributions --- in fact, in their theoretical analyses, they characterize their proposed method ES-Single as having the exact same variance as $\FullES$, and as such they are unable to draw conclusions about the variance reduction benefits of their approach as we have done in our analyses. %
    \item \textit{Experimental comparison against non-online method $\FullES$}. In \cite{vicol21unbiased, vicol2023low}, when comparing against ES methods, the authors primarily compare against online ES methods but not the canonical non-online ES method, $\FullES$. In contrast, in our paper, we experimentally show that $\NRES$ can indeed provide significant speedup benefits over its non-online counterpart $\FullES$. We believe these more complete results provide critical evidence which encourages evolution strategies users to consider switching to online methods ($\NRES$) when working with unrolled computation graphs.
    \item \textit{Identifying the appropriate scenarios for the proposed method.} In our work, we precisely identify problem scenarios that are most appropriate for the use of $\NRES$ (when the losses are extremely locally sensitive or blackbox) and provide experiments mirroring these scenarios to compare different gradient estimation (both ES and AD) methods. In contrast, \cite{vicol2023low} perform some of their experiments on problems where the loss surface might not have high sensitivity (e.g., LSTM copy task) and only show performance of ES methods (but not AD methods). However, for these scenarios where the loss surfaces are well-behaved, ES methods likely should not be used in the first place over traditional AD approaches (we discuss this further in Section~\ref{app_sec:limitations} in the Appendix). In addition, \cite{vicol2023low} treats the application of their proposed method to blackbox losses (e.g., reinforcement learning) as future work, while we provide experiments demonstrating the effectiveness of $\NRES$ over other ES methods for this important set of applications in Section~\ref{exp:rl}.
\end{enumerate}

\section{Algorithms}
In this section, we provide Python-style pseudocode for the gradient estimation algorithms discussed in this paper. We first provide the pseudocode for $\FullES$. We then provide the pseudocode for general online evolution stratgies training. We finally provide the pseudocode for Truncated Evolution Strategies ($\TES$) and Generalized Persistent Evolution Strategies ($\GPES$).

\subsection{FullES Pseudocode}
We show the pseudocode for the vanilla antithetic evolution strategies gradient estimation method $\FullES$ in Algorithm~\ref{alg:fulles}. Here we note that the FullESWorker is stateless (it has a boilerplate \init() function). In addition, to produce a single gradient estimate, it needs to run from the beginning of the UCG ($s_0$) to the end of the graph after $T$ unroll steps. This is in contrast to the online ES methods which only unroll a truncation window of $W$ steps forward for each gradient estimate.
\label{app_subsec:fulles_algorithm}
\begin{algorithm}[h]
\small
\caption{Vanilla Antithetic Evolution Strategies ($\FullES$)}
\label{alg:fulles}

\textbf{class} FullESWorker:

\quad \deff ~ \init(\self,):

\qquad \texttt{\scriptsize \# no need to initialize since $\FullES$ is stateless}

\qquad \textbf{pass}

~

\quad \deff gradient\_estimate(\self, $\; \theta$):

\qquad $\bepsilon \sim \calN(\bzero, \sigma^2I_{d\times d})$

\qquad \texttt{\scriptsize \# $\FullES$ always starts from the beginning of an episode}

\qquad ($s^+, s^{-}$) $=$ ($s_0$, $s_0$)

\qquad $L_{\mathrm{sum}}^+ = 0$; \;\; $L_{\mathrm{sum}}^- = 0$

\qquad \texttt{\scriptsize \# $\FullES$ always runs till the end of an episode}

\qquad \for $t$ \inn range(1, $T + 1$):

\qquad \quad $s^+ = f_{t}(s^+, \theta + \bepsilon)$

\qquad \quad $s^- = f_{t}(s^-, \theta - \bepsilon)$

\qquad \quad $L_{\mathrm{sum}}^+ \mathrel{+}= L^s_{t}(s^+)$

\qquad \quad $L_{\mathrm{sum}}^- \mathrel{+}= L^s_{t}(s^-)$

~

\qquad $g = (L_{\mathrm{sum}}^+ - L_{\mathrm{sum}}^-) / (2\sigma^2 \cdot T) \cdot ~ \bepsilon$

\qquad \return $g$

\end{algorithm}

\subsection{Online Evolution Strategies Pseudocode}

\label{app_subsec:oes_alg}
OnlineESWorker (Algorithm \ref{alg:oes}) is an abstract class (interface) that all the online ES methods will implement. The key functionality an OnlineESWorker provides is its worker.gradient\_estimate($\theta$) function that performs unrolls in a truncation window of size $W$ and returns a gradient estimate based on the unroll. With this interface, we can train using online Evolution Strategies workers following Algorithm \ref{alg:oes_training_algorithm}. The training takes two steps:
\begin{itemize}[leftmargin=1.5cm]
    \item[Step 1.] Constructing independent \textit{step-unlocked} workers to form a worker pool. This requires sampling different truncation window starting time step $\btau$ for different workers. During this stage, for simplicity and rigor, we only rely on the~.gradient\_estimate method call's side effect to alter the worker's saved states and discard the computed gradients. (We still count these environment steps for the reinforcement learning experiment in Experiment \ref{exp:rl}).)
    \item[Step 2.] Training using the worker pool. At each outer iteration, we average all the worker's computed gradient estimates and pass that to any first order optimizer $\textrm{OPT\_UPDATE}$ (e.g. SGD or Adam) to update $\theta$ and repeat until convergence. Each worker's gradient\_estimate method call can be parallelized.
\end{itemize}

\begin{algorithm}[h]
\small
\caption{Online Evolution Strategies (OES) (a qbstract class)}
\label{alg:oes}

\textbf{import} abc \qquad \texttt{\scriptsize \# abstract base class}

\textbf{class} OnlineESWorker(abc.ABC):

\quad $W$: int \qquad \texttt{\scriptsize  \# the size of the truncation window}

\smallskip

\quad @abc.abstractmethod

\quad \deff ~ \init(\self, $W$):

\qquad \texttt{\small """} 

\qquad \texttt{\scriptsize set up the saved states and other bookkeeping variables}

\qquad \texttt{\small """}

\smallskip

\smallskip

\quad @abc.abstractmethod

\quad \deff gradient\_estimate(\self, $\; \theta$):

\qquad \texttt{\small """} 

\qquad \texttt{\scriptsize Given a $\theta$,}

\qquad \texttt{\scriptsize \qquad perform partial unroll in a truncation window of length $W$ and return a gradient estimate for $\theta$}

\qquad \texttt{\scriptsize \qquad save the end inner state(s) and start off from the saved state(s)}

\qquad \texttt{\scriptsize \qquad \quad when \textrm{\self.gradient\_estimate} is called again}

\qquad \texttt{\scriptsize \qquad if reach the end, reset to the initial state $s_0$}

\qquad \texttt{\small """}

\end{algorithm}

\begin{algorithm}[h]
\small
\caption{Training using Online Evolution Strategies}
\label{alg:oes_training_algorithm}

\quad $\theta = \theta_{\textrm{init}}$ \texttt{\qquad \scriptsize \# start value of $\theta$ optimization.}

\bigskip

\quad \texttt{\scriptsize \# Step 1: Initialize online ES workers}

\quad worker\_list = []

\quad \for $i$ \inn range(N): \qquad \texttt{\eightquad \scriptsize \# $N$ is the number of workers, can be parallelized}

\qquad new\_worker = OnlineESWorker($W$) \texttt{\qquad \scriptsize \# replace with a real implementation of OnlineESWorker}

\qquad \texttt{\scriptsize \# the steps below make sure the workers are step-unlocked}

\qquad \texttt{\scriptsize \scriptsize \# i.e. working independently at different truncation windows}

\qquad $\btau \sim \Unif\{0, W, \ldots, T - W\}$

\qquad \for $t$ \inn range($\btau / W$):

\qquad \quad \underline{\hspace{0.5em}} = new\_worker.gradient\_estimate($\theta$)

\smallskip

\qquad worker\_list.append(new\_worker)

\bigskip

\quad \texttt{\# \scriptsize Step 2:  training}

\quad \textbf{while} not converged:

\qquad $g_\textrm{sum} = \bzero$ \texttt{\eightquad \qquad \scriptsize \qquad \# $g_\textrm{sum}$ is a vector in $\Real^d$}

\qquad \for worker \inn worker\_list: \texttt{\fourquad \qquad \scriptsize \# can be parallelized}

\qquad \quad $g_\textrm{sum}$ += worker.gradient\_estimate($\theta$) \texttt{\qquad \scriptsize \# accumulate this worker's gradient estimate}

\smallskip

\qquad $g = g_\textrm{sum} / N$ \texttt{\scriptsize \eightquad \qquad \# average all workers' gradient estimates}

\qquad $\theta = \textrm{OPT\_UPDATE}(\theta, g)$ \texttt{\fourquad \scriptsize \# updating $\theta$ with any first order optimizers}

~

\end{algorithm}

\clearpage
\subsection{Truncated Evolution Strategies Pseudocode}
\label{app_subsec:tes_algorithm}
In Section \ref{sec:background}, we have described the biased online evolution strategies method Truncated Evolution Strategies ($\TES$). We have provided its analytical form:

\begin{align}
   \frac{1}{2\sigma^2W} \sum_{i=1}^W \big [L_{\btau + i}([\theta]_{\scaleto{\times \btau}{4pt}}, [\theta + \bepsilon_{(\btau / W) + 1}]_{\scaleto{\times i}{4pt}}) - L_{\btau + i}([\theta]_{\scaleto{\times \btau}{4pt}}, [\theta - \bepsilon_{(\btau / W) + 1}]_{\scaleto{\times i}{4pt}}) \big] \bepsilon_{(\btau / W) + 1}.
\end{align}

Here we provide the algorithm pseudocode for $\TES$ in Algorithm \ref{alg:tes}. It is important to note that after the antithetic unrolling using perturbed $\theta + \bepsilon$ and $\theta - \bepsilon$ for gradient estimates, another $W$ steps of unrolling is performed starting from the saved starting state using the unperturbed $\theta$. Because of this, a TESWorker requires a total of $3W$ unroll steps to produce a gradient estimate (unlike $\PES$, $\GPES$, and $\NRES$ which requires $2W$). The algorithm in this form is first introduced in \citep{vicol21unbiased}. 
\begin{algorithm}[H]
\small
\caption{Truncated Evolution Strategies (TES)}
\label{alg:tes}
\textbf{class} TESWorker(OnlineESWorker):

\quad \deff ~ \init(\self, $W$):

\qquad \self.$\btau = 0$; \; \self.$s = s_0$

\qquad \self.$W = W$
\smallskip

~

\quad \deff gradient\_estimate(\self, $\; \theta$):

\begin{tikzpicture}[remember picture, overlay]
        \draw[line width=0pt, draw=green!30, rounded corners=2pt, fill=green!30, fill opacity=0.3]
            (0.6, -0.03) rectangle (5.2, 0.26);
\end{tikzpicture}
\quad \;\; \texttt{\scriptsize \# sample at every truncation window}

\qquad $\bepsilon \sim \calN(\bzero, \sigma^2I_{d\times d})$

~

\qquad ($s^+, s^{-}$) $=$ (\self.$s$, \self.$s$) \quad \texttt{\scriptsize \# unroll from the same state for the antithetic pair}

\qquad $L_{\mathrm{sum}}^+ = 0$; \;\; $L_{\mathrm{sum}}^- = 0$

~

\qquad \for $i$ \inn range(1, \self.$W$+1):

\qquad \quad $s^+ = f_{\textrm{\self}.\btau + i}(s^+, \theta + \bepsilon)$

\qquad \quad $s^- = f_{\textrm{\self}.\btau + i}(s^-, \theta - \bepsilon)$

\qquad \quad $L_{\mathrm{sum}}^+ \mathrel{+}= L^s_{\textrm{\self}.\btau + i}(s^+)$

\qquad \quad $L_{\mathrm{sum}}^- \mathrel{+}= L^s_{\textrm{\self}.\btau + i}(s^-)$

~

\qquad $g = (L_{\mathrm{sum}}^+ - L_{\mathrm{sum}}^-) / (2\sigma^2 \cdot \textrm{\self}.W) \cdot \bepsilon$

~

\qquad \for $i$ \inn range(1, self.$W$+1): \quad \texttt{\scriptsize \# finally unroll using unperturbed $\theta$}

\qquad \quad \self.$s = f_{\self.\btau + i}(\textrm{\self}.s, \; \theta)$;

~

\qquad \self.$\btau$ = \self.$\btau + W$

\qquad \ifff \self.$\btau \ge T$: \texttt{\scriptsize \fourquad \qquad \;\# reset at the end of an episode}

\qquad \quad \self.$\btau = 0$

\qquad \quad \self.$s$ = $s_0$

~

\qquad \return $g$

\end{algorithm}%

\clearpage%
\subsection{Generalized Persistent Evolution Strategies Pseudocode}
In section \ref{sec:gpes}, we propose a new class of unbiased online evolution strategies methods which we name \textit{Generalized Persistent Evolution Strategies} ($\GPES$). It produces an unbiased gradient estimate of the $K$-smoothed loss objective defined in Equation~\ref{eq:k-smooth-objective} in the main paper. It samples a new Gaussian noise for perturbation every $K$ unroll steps ($K$ is a multiple of the truncation window size $W$). We provide the pseudocode for $\GPES$ in Algorithm~\ref{alg:gpes}.
\label{app_subsec:gpes_algorithm}
\begin{algorithm}[h]
\small
\caption{Generalized Persistent Evolution Strategies (GPES)}
\label{alg:gpes}

\textbf{class} GPESWorker(OnlineESWorker):

\quad \deff ~ \init(\self, $W$, $K$):

\qquad \self.$\btau = 0$; \; \self.$s^+ = s_0$; \; \self.$s^- = s_0$

\begin{tikzpicture}[remember picture, overlay]
        \draw[line width=0pt, draw=orange!30, rounded corners=2pt, fill=orange!30, fill opacity=0.3]
            (4.4, -0.12) rectangle (6.5, 0.28);
\end{tikzpicture}
\quad\;\; \self.$W = W$;\; \self.$K = K$; \; \self.$\boldxi = \bzero \in \Real^d$
\smallskip

\quad \deff gradient\_estimate(\self, $\; \theta$):

\qquad \texttt{\scriptsize \# only sample a new $\bepsilon$ when \self.$\btau$ is a multiple of \self.$K$}

\begin{tikzpicture}[remember picture, overlay]
        \draw[line width=0pt, draw=green!30, rounded corners=2pt, fill=green!30, fill opacity=0.3]
            (0.71, -0.12) rectangle (4.4, 0.3);
\end{tikzpicture}
\qquad \ifff \self.$\btau \;\; \% \;\; \self.K == \; 0$: 

\qquad \quad $\bepsilon_{\textrm{new}} \sim \calN(\bzero, \sigma^2I_{d\times d})$ 

\qquad \quad \texttt{\scriptsize \# keep track of $\epsilon_\textrm{new}$ to use for the next $K$ steps}

\begin{tikzpicture}[remember picture, overlay]
        \draw[line width=0pt, draw=orange!30, rounded corners=2pt, fill=orange!30, fill opacity=0.3]
            (0.92, -0.06) rectangle (2.6, 0.24);
\end{tikzpicture}
\;\;\;\;\; \quad \self.$\bepsilon = \bepsilon_{\textrm{new}}$

\begin{tikzpicture}[remember picture, overlay]
        \draw[line width=0pt, draw=orange!30, rounded corners=2pt, fill=orange!30, fill opacity=0.3]
            (0.92, -0.06) rectangle (3.0, 0.24);
\end{tikzpicture}
\;\;\;\;\; \quad \self.$\boldxi \mathrel{+}= ~\bepsilon_{\textrm{new}}$

\qquad \texttt{\scriptsize \# after the if statement above, \self.$\bepsilon$ is now $\darkgreen{\bepsilon_{\lfloor \self.\btau/K \rfloor + 1}}$}

\qquad \texttt{\scriptsize \# and \self.$\boldxi$ is now $\sum_{i=1}^{\lfloor \btau/K \rfloor + 1} \bepsilon_i$}

~

\qquad ($s^+, s^{-}$) $=$ (\self.$s^+$, \self.$s^-$)

\qquad $L_{\mathrm{sum}}^+ = 0$; \;\; $L_{\mathrm{sum}}^- = 0$

\qquad \for $i$ \inn range(1, \self.$W$+1):

\qquad \quad $s^+ = f_{\textrm{\self}.\btau + i}(s^+, \theta + \self.\bepsilon)$

\qquad \quad $s^- = f_{\textrm{\self}.\btau + i}(s^-, \theta - \self.\bepsilon)$

\qquad \quad $L_{\mathrm{sum}}^+ \mathrel{+}= L^s_{\self.\btau + i}(s^+)$

\qquad \quad $L_{\mathrm{sum}}^- \mathrel{+}= L^s_{\self.\btau + i}(s^-)$

~

\begin{tikzpicture}[remember picture, overlay]
        \draw[line width=0pt, draw=orange!30, rounded corners=2pt, fill=orange!30, fill opacity=0.3]
            (5.6, -0.06) rectangle (6.4, 0.24);
\end{tikzpicture}
\;\;\;\;\; $g = (L_{\mathrm{sum}}^+ - L_{\mathrm{sum}}^-) / (2\sigma^2 \cdot \textrm{\self}.W) \cdot ~$\self.$\boldxi$

\qquad \self.$s^+ = s^+$; \self.$s^- = s^-$

\qquad \self.$\btau$ = \self.$\btau + W$

~

\qquad \ifff \self.$\btau \ge T$: \texttt{\scriptsize \fourquad \qquad \;\# reset at the end of an episode}

\qquad \quad \self.$\btau = 0$; \self.$s^+ = s_0$; \self.$s^- = s_0$

\begin{tikzpicture}[remember picture, overlay]
        \draw[line width=0pt, draw=orange!30, rounded corners=2pt, fill=orange!30, fill opacity=0.3]
            (0.9, -0.06) rectangle (2.3, 0.24);
\end{tikzpicture}
\;\;\;\;\; \quad \self.$\boldxi = \bzero$

~

\qquad \return $g$

\end{algorithm}

\section{Theory and Proofs}
\label{app_sec:theory}
In this section, we provide the proofs and interpretations of the lemmas, assumptions, and theorems presented in the main paper (we also restate these results for completeness). Before we begin, we set some background notation: 
\begin{itemize}[leftmargin=*]
    \item We treat all real-valued function's gradient as column vectors.
    \item We use $[\theta]_{\times t}$ to denote $t$ copies to $\theta$ stacked together to form a $t d$-dimensional column vector.
    \item For $\{v_i \in \Real^d\}_{i=1}^m$, we use $\begin{bmatrix} v_1 \\ \vdots \\ v_m \end{bmatrix}$ to denote the column vector whose first $d$-dimensions is $v_1$ and so on and so forth. Similar we use $\begin{bmatrix} v_1 & \ldots & v_m \end{bmatrix}$ to denote the transpose of the previous vector (thus a row vector).
    \item \textbf{Gradient notation} For any real-valued function whose input is more than one single $\theta$ (e.g., $L_t$ which takes in $t$ copies of $\theta$'s), we will use $\nabla_\Theta$ to describe the gradient with respect to the function's entire input dimension and similarly $\nabla^2_\Theta$ for Hessian. For such functions, we will use $\frac{\partial}{\partial \theta_i}$ to denote the partial derivative of the function with respect to the $i$-th $\theta$. We will use $\frac{d}{d\theta}$ to define the total derivative of some variable with respect to $\theta$ (e.g. when talking about $\frac{dL_{\mathrm{avg}}}{d\theta}$ with $L_{\mathrm{avg}}(\theta) \coloneqq L([\theta]_{\times t})$ if $L$ is differentiable). This operator $\frac{d}{d\theta}$ will also produce the Jacobian matrix when the applied function is vector-valued.
\end{itemize}

\subsection{Proof of Lemma \ref{lemma:gpesk_form}}
Define the $K$-smoothed loss objective as the function:
\begin{align}
\theta \mapsto \E_{\{\bepsilon_i\}}  L([\theta + \bepsilon_1]_{\times K}, \ldots, [\theta + \bepsilon_{\lceil T/K \rceil}]_{\times \remainder(T, K)}).
\end{align}

\begin{lemma}

An unbiased gradient estimator for the $K$-smoothed loss is given by
\begin{align}
   & \GPES_{K=cW}(\theta) \\
   \coloneqq & \frac{1}{2\sigma^2W}  \sum_{j=1}^W \bigg [ L_{\btau + j}([\theta + \bepsilon_1]_{\times K}, [\theta + \bepsilon_2]_{\times K}, \ldots, [\theta + \bepsilon_{\lfloor \btau/K \rfloor + 1} ]_{\times \remainder(\btau + j, K)}) \nonumber \\
   & \fourquad -  L_{\btau + j}([\theta - \bepsilon_1]_{\times K}, [\theta - \bepsilon_2]_{\times K}, \ldots, [\theta - \bepsilon_{\lfloor \btau/K \rfloor + 1} ]_{\times \remainder(\btau + j, K)}) \bigg ] \cdot \paren{\sum_{i=1}^{\lfloor \btau/K \rfloor + 1} \bepsilon_i},
\end{align}
with randomness in $\btau \sim \Unif\{0, W, \ldots, T - W\}$ and $\{\bepsilon_i\}_{i=1}^{\lceil T/K \rceil} \iid \calN(\bzero, \sigma^2 I_{d\times d})$.
\end{lemma}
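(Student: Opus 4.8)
The plan is to recognize $\GPES_{K=cW}$ as a per-time-step average of standard antithetic likelihood-ratio (score-function) estimators for Gaussian smoothing, and then to show that averaging over the random truncation offset $\btau$ (together with the inner sum over $j\in\{1,\dots,W\}$) reconstitutes the full smoothed gradient via the reindexing $(\btau,j)\mapsto \btau+j$.

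First I would use linearity of both the smoothing and the sum defining $L$: since $L=\frac1T\sum_{t=1}^T L_t$ and $L_t$ depends only on the first $t$ copies of $\theta$, the $K$-smoothed loss equals $\frac1T\sum_{t=1}^T \tilde L_t(\theta)$ where $\tilde L_t(\theta):=\E_{\bepsilon_1,\dots,\bepsilon_{\lceil t/K\rceil}} L_t([\theta+\bepsilon_1]_{\times K},\dots,[\theta+\bepsilon_{\lceil t/K\rceil}]_{\times \remainder(t,K)})$ and the noises with index larger than $\lceil t/K\rceil$ do not enter $L_t$. Fixing $t$ and writing $m=\lceil t/K\rceil$, I change variables to $u_j=\theta+\bepsilon_j\sim\calN(\theta,\sigma^2 I_{d\times d})$ with density $p_\theta(u_j)\propto\exp(-\|u_j-\theta\|_2^2/(2\sigma^2))$, differentiate under the integral sign (justified by dominated convergence, using integrability of $L_t$ against the Gaussian and Gaussian moments of the score), and use $\nabla_\theta\log p_\theta(u_j)=(u_j-\theta)/\sigma^2=\bepsilon_j/\sigma^2$ to obtain
\begin{align*}
\nabla_\theta\tilde L_t(\theta)=\frac{1}{\sigma^2}\,\E\!\left[L_t\big([\theta+\bepsilon_1]_{\times K},\dots,[\theta+\bepsilon_m]_{\times \remainder(t,K)}\big)\sum_{i=1}^{m}\bepsilon_i\right].
\end{align*}
The key structural point is that because the \emph{same} $\bepsilon_j$ is reused $K$ times, the score contribution sums over $j$, producing $\sum_{i=1}^m\bepsilon_i$ rather than a single perturbation.

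Next I would antithetically symmetrize: the map $\bepsilon_i\mapsto-\bepsilon_i$ (all $i$) preserves the joint law, flips $\sum_i\bepsilon_i$, and swaps the $+$ and $-$ perturbed losses, so $\nabla_\theta\tilde L_t(\theta)=\frac{1}{2\sigma^2}\E[(\phi_t^+-\phi_t^-)\sum_{i=1}^{m}\bepsilon_i]$, where $\phi_t^\pm$ denotes $L_t$ evaluated at $[\theta\pm\bepsilon_1]_{\times K},\dots,[\theta\pm\bepsilon_m]_{\times\remainder(t,K)}$. This is exactly the $(\btau,j)$-summand of $\GPES$ whenever $t=\btau+j$. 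Taking $\E_{\btau}$ over $\btau\sim\Unif\{0,W,\dots,T-W\}$ turns the prefactor $\frac{1}{2\sigma^2W}$ into $\frac{W}{T}\cdot\frac{1}{2\sigma^2W}$ and yields a double sum over $(\btau,j)\in\{0,W,\dots,T-W\}\times\{1,\dots,W\}$; since $(\btau,j)\mapsto\btau+j$ is a bijection onto $\{1,\dots,T\}$ with $\btau=W\lfloor(t-1)/W\rfloor$, the double sum collapses to $\frac1T\sum_{t=1}^T$. The remaining arithmetic check is that for $K=cW$ and $t=\btau+j$ one has $\lfloor\btau/K\rfloor+1=\lceil t/K\rceil$ (so the $\GPES$ summand uses exactly the right number of distinct noises) and $\remainder(\btau+j,K)=\remainder(t,K)$ (so the trailing noise is applied the same number of times); the latter is trivial and the former follows by casing on the residue of $\btau/W$ modulo $c$, using that $\btau$ is the largest multiple of $W$ strictly below $t$. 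Combining everything, $\E_{\btau,\{\bepsilon_i\}}[\GPES_{K=cW}(\theta)]=\frac1T\sum_{t=1}^T\nabla_\theta\tilde L_t(\theta)=\nabla_\theta\tilde L(\theta)$, which is the claim.

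The main obstacle is not the likelihood-ratio computation itself but the bookkeeping around the shared-noise/reindexing structure: one must verify that at offset $(\btau,j)$ the estimator uses precisely $\lceil(\btau+j)/K\rceil$ distinct noises (the identity $\lfloor\btau/K\rfloor+1=\lceil t/K\rceil$), that the trailing noise's fractional multiplicity $\remainder(t,K)$ matches between the estimator and the smoothed objective, and that the $(\btau,j)\leftrightarrow t$ bijection aligns the $1/W$ normalization in $\GPES$ with the $1/T$ normalization of the smoothed $L$. Once these indexing facts are pinned down, the rest is the standard Gaussian-smoothing argument.
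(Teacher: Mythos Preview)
Your proposal is correct and follows essentially the same route as the paper: apply the Gaussian score-function identity to obtain the factor $\sum_i\bepsilon_i$, then reindex the full time sum into truncation windows via the bijection $(\btau,j)\mapsto \btau+j$, and antithetically symmetrize. The only cosmetic difference is ordering: the paper first applies the score trick to the full $L^K$ (obtaining $\sum_{i=1}^{\lceil T/K\rceil}\bepsilon_i$) and then drops the future noises $\bepsilon_i$ with $i>\lfloor\btau/K\rfloor+1$ using independence and $\E[\bepsilon_i]=\bzero$, whereas you decompose into per-step $\tilde L_t$ first so that only the $\lceil t/K\rceil$ relevant noises ever appear; both lead to the same index identity $\lfloor\btau/K\rfloor+1=\lceil(\btau+j)/K\rceil$ that you correctly verify.
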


\begin{proof}
~

For simplicity, we will denote $q = \lceil T/K \rceil$ and $r = \remainder(T, K)$.

First let's define a function $L^K: \Real^{d \cdot q} \rightarrow \Real $
\begin{align}
    L^K(\theta_1, \ldots, \theta_q) \coloneqq L([\theta_1]_{\times K}, \ldots, [\theta_{q-1}]_{\times K},[\theta_q]_{\times r}),
\end{align}

and a smoothed version of $L^K$ by $\widehat{L}^K: \Real^{d \cdot q} \rightarrow \Real $ with
\begin{align}
    \widehat{L}^K(\theta_1, \ldots, \theta_q) \coloneqq \E_{\{\bepsilon_i\}_{i=1}^q} L^K(\theta_1 + \epsilon_1, \ldots, \theta_q + \epsilon_q).
\end{align}

We notice that by definition, the $K$-smoothed loss function can be expressed as
\begin{align}
    \theta \mapsto \widehat{L}^K([\theta]_{\times q}).
\end{align}

In this form, the $K$-smoothed loss function is a simple composition of two functions: the first function maps $\theta$ to $q$-times repetition of $[\theta]_{\times q}$, while the second function is exactly $\widehat{L}^K$. For the first vector-valued function, we see that its Jacobian is given by:

\begin{align}
    \frac{d}{d\theta} (\theta \mapsto [\theta]_{\times q}) = \bone_q \otimes I_{d \times d} \in \Real^{qd \times d},
\end{align}
where $\bone_q \in \Real^q$ is a vector of $1$'s and $\otimes(\cdot, \cdot)$ is the Kronecker product operator. For the second function, we see that by the score function gradient estimator trick \citep{glynn1990likelihood}, 
\begin{align}
    \nabla_\Theta \widehat{L}^K(\theta_1, \ldots, \theta_q) = \frac{1}{\sigma^2} \E_{\{\bepsilon_i\}_{i=1}^q} L^K(\theta_1 + \bepsilon_1, \ldots, \theta_q + \bepsilon_q) \begin{bmatrix}
    \bepsilon_1 \\ \vdots \\ \bepsilon_q
    \end{bmatrix}.
\end{align}

Now we are ready to compute the gradient of the $K$-smoothed loss using chain rule (recall that we assume the gradients are column vectors):

\begin{align}
    & \nabla_\theta \paren{\theta \mapsto \widehat{L}^K([\theta]_{\times q})} \\
    = & \brck{\frac{d}{d\theta} (\theta \mapsto [\theta]_{\times q})}^\top \nabla_\Theta \widehat{L}^K \bigg \vert_{\Theta = [\theta]_{\times q}} \\
    =& (\bone_q \otimes I_{d \times d})^\top \E_{\{\bepsilon_i\}_{i=1}^q} \frac{1}{\sigma^2} L^K(\theta + \bepsilon_1, \ldots, \theta + \bepsilon_q) \begin{bmatrix} \bepsilon_1 \\ \vdots \\ \bepsilon_q \end{bmatrix} \\
    =&  \frac{1}{\sigma^2} \E_{\{\bepsilon_i\}_{i=1}^q} L^K(\theta + \bepsilon_1, \ldots, \theta + \bepsilon_q) \paren{(\bone_q \otimes I_{d \times d})^\top \begin{bmatrix} \bepsilon_1 \\ \vdots \\ \bepsilon_q \end{bmatrix}} \\
    =& \frac{1}{\sigma^2} \E_{\{\bepsilon_i\}_{i=1}^q} L^K(\theta + \bepsilon_1, \ldots, \theta + \bepsilon_q) (\sum_{i=1}^q \bepsilon_i).
\end{align}

Here the last step is by the algebra of Kronecker product. With this, we now consider the structure of $L^K$ as an average of losses over all time steps by converting this average into an expectation over truncation windows starting at $\btau$:

\begin{align}
     & \nabla_\theta \paren{\theta \mapsto \widehat{L}^K([\theta]_{\times q})} \\
    =& \frac{1}{\sigma^2} \E_{\{\bepsilon_i\}_{i=1}^q} L^K(\theta + \bepsilon_1, \ldots, \theta + \bepsilon_q) (\sum_{i=1}^q \bepsilon_i) \\
    =& \frac{1}{\sigma^2} \E_{\{\bepsilon_i\}_{i=1}^q} \frac{1}{T} \sum_{t=1}^T L_t([\theta + \bepsilon_1]_{\times K}, [\theta + \bepsilon_2]_{\times K}\ldots, [\theta + \bepsilon_{\lceil t/K \rceil}]_{\times \remainder(t, K)}) (\sum_{i=1}^q \bepsilon_i) \\
    =& \frac{1}{\sigma^2} \E_{\{\bepsilon_i\}_{i=1}^q} \frac{1}{T/W} \sum_{b=0}^{T/W - 1} \nonumber \\
    & \fourquad \bigg [ \frac{1}{W} \sum_{j=1}^W L_{Wb + j}([\theta + \bepsilon_1]_{\times K}, [\theta + \bepsilon_2]_{\times K}\ldots, [\theta + \bepsilon_{\lceil (Wb + j)/K \rceil}]_{\times \remainder(Wb + j, K)}) (\sum_{i=1}^q \bepsilon_i) \bigg ] \\
    =& \frac{1}{\sigma^2} \E_{\{\bepsilon_i\}_{i=1}^q} \frac{1}{T/W} \sum_{b=0}^{T/W - 1} \nonumber\\
    & \fourquad \bigg [ \frac{1}{W} \sum_{j=1}^W L_{Wb + j}([\theta + \bepsilon_1]_{\times K}, [\theta + \bepsilon_2]_{\times K}\ldots, [\theta + \blue{\bepsilon_{\lfloor Wb/K \rfloor + 1}}]_{\times \remainder(Wb + j, K)}) (\sum_{i=1}^q \bepsilon_i) \bigg ] \label{eqn:causality} 
\end{align}
Here the last step we observe that $\lceil (Wb + j) / K \rceil = \lfloor Wb/K \rfloor + 1$ for any $1\le j \le W \le K$.

Here we notice that in Equation \ref{eqn:causality}, for $i > \lfloor Wb/K \rfloor + 1$, there is independence between the random vector $\bepsilon_i$ and the term $L_{Wb + j}([\theta + \bepsilon_1]_{\times K}, [\theta + \bepsilon_2]_{\times K}\ldots, [\theta + \bepsilon_{\lfloor Wb/K \rfloor + 1} ]_{\times \remainder(Wb + j, K)})$. The expectation of the product between these independent terms is then $\bzero$ because $\E[\bepsilon_i] = \bzero$. As a result, we have the further simplification:
\begin{align}
     & \nabla_\theta \paren{\theta \mapsto \widehat{L}^K([\theta]_{\times q})} \\
    =& \frac{1}{\sigma^2} \E_{\{\bepsilon_i\}_{i=1}^q} \frac{1}{T/W} \sum_{b=0}^{T/W - 1}  \nonumber \\
    & \quad \brck{\frac{1}{W} \sum_{j=1}^W L_{Wb + j}([\theta + \bepsilon_1]_{\times K}, [\theta + \bepsilon_2]_{\times K}\ldots, [\theta + \bepsilon_{\lfloor Wb/K \rfloor + 1} ]_{\times \remainder(Wb + j, K)}) (\sum_{i=1}^{\lfloor Wb/K \rfloor + 1} \bepsilon_i)}\label{eqn:gpes_single_avg_form}\\
    =& \frac{1}{\sigma^2} \E_{\{\bepsilon_i\}_{i=1}^q} \blue{\E_{\btau}} \frac{1}{W} \sum_{j=1}^W L_{\btau + j}([\theta + \bepsilon_1]_{\times K}, [\theta + \bepsilon_2]_{\times K}\ldots, [\theta + \bepsilon_{\lfloor \btau/K \rfloor + 1} ]_{\times \remainder(\btau + j, K)}) (\sum_{i=1}^{\lfloor \btau/K \rfloor + 1} \bepsilon_i) \label{eqn:gpes_single_exp_form}.
\end{align}

Here the last step converts the average in Equation \ref{eqn:gpes_single_avg_form} into an expectation in Equation \ref{eqn:gpes_single_exp_form} by treating $Wb$ as the random variable $\btau$. By additionally averaging over the antithetic samples of the random variable in Equation \ref{eqn:gpes_single_exp_form}, we arrive at the unbiased estimator given in the Lemma.

\end{proof}

\subsection{Interpretation of Assumption \ref{assumption:linearity}}

\begin{assumption}
For a given fixed $\theta \in \Real^d$, for any $t \in [T]$, there exists a set of vectors $\{g^t_i \in \Real^d\}_{i=1}^t$, such that for any $\{v_i \in \Real^d\}_{i=1}^t$, the following equality holds:
{
\small
\begin{align}
    L_t(\theta + v_1, \theta + v_2, \ldots, \theta + v_t)
    - L_t(\theta - v_1, \theta - v_2, \ldots, \theta - v_t) = 2\sum_{i=1}^t (v_i)^\top (g^t_i)
\end{align}
}
\end{assumption}

\bigskip 

Here we show that when $L_t: \Real^{dt} \rightarrow \Real$ is a quadratic function, this assumption would hold with $g^t_i = \frac{\partial L_t}{\partial \theta_i}$.

If $L_t$ is a quadratic (assumption made in \citep{vicol21unbiased}), it can be expressed exactly as its second-order Taylor expansion. Then we have
\begin{align}
    L_t(\theta + v_1, \theta + v_2, \ldots, \theta + v_t) =& L_t([\theta]_{\times t}) + \paren{\nabla_\Theta L_t \bigg \vert_{\Theta=[\theta]_{\times t}}}^\top \begin{bmatrix} v_1 \\ \vdots \\ v_t \end{bmatrix} + \frac{1}{2} \begin{bmatrix} v_1 & \ldots & v_t \end{bmatrix} \nabla^2_\Theta L_t \begin{bmatrix} v_1 \\ \vdots \\ v_t \end{bmatrix} \\
    L_t(\theta - v_1, \theta - v_2, \ldots, \theta - v_t) =& L_t([\theta]_{\times t}) - \paren{\nabla_\Theta L_t \bigg \vert_{\Theta=[\theta]_{\times t}}}^\top  \begin{bmatrix} v_1 \\ \vdots \\ v_t \end{bmatrix} + \frac{1}{2} \begin{bmatrix} v_1 & \ldots & v_t \end{bmatrix} \nabla^2_\Theta L_t \begin{bmatrix} v_1 \\ \vdots \\ v_t \end{bmatrix} \label{eqn:quadratic_assumption}
\end{align}

Taking the difference of the above two equations, we have that 
\begin{align}
    L_t(\theta + v_1, \theta + v_2, \ldots, \theta + v_t) - L_t(\theta - v_1, \theta - v_2, \ldots, \theta - v_t) = 2 \paren{\nabla_\Theta L_t \bigg \vert_{\Theta=[\theta]_{\times t}}}^\top 
 \begin{bmatrix} v_1 \\ \vdots \\ v_t \end{bmatrix}.
\end{align}

We note that 
\begin{align}
    \nabla_\Theta L_t \bigg \vert_{\Theta=[\theta]_{\times t}} = \begin{bmatrix}
        \frac{\partial L_t}{\partial \theta_1} \\ \ldots \\ \frac{\partial L_t}{\partial \theta_t}
    \end{bmatrix}.
\end{align}

Plugging it into Equation \eqref{eqn:quadratic_assumption}, we have 
\begin{align}
    L_t(\theta + v_1, \theta + v_2, \ldots, \theta + v_t) - L_t(\theta - v_1, \theta - v_2, \ldots, \theta - v_t) = 2 \sum_{i=1}^t (v_i)^\top \frac{\partial L_t}{\partial \theta_i}.
\end{align}

Thus we see in the case of quadratic $L_t$, $g^t_i$ is just the partial derivative of $L_t$ with respect to $\theta_i$ (i.e., $\frac{\partial L_t}{\partial \theta_i}$). Hence we see that our assumptions generalize those made in \citep{vicol21unbiased}.

\subsection{Proof of Lemma \ref{lemma:gpesk_form_under_assumption}}
\setcounter{theorem}{3}
\begin{lemma}
Under Assumption \ref{assumption:linearity}, when $W=1$, {\small $\GPES_{K=c}(\theta) = \frac{1}{\sigma^2}\sum_{j=1}^{\lfloor \btau/c \rfloor + 1} \paren{\sum_{i=1}^{\lfloor \btau/c \rfloor + 1} \bepsilon_i} \bepsilon_j^\top g_{c,j}^{\btau + 1}$}, where the randomness lies in $\btau \sim \Unif\{0, 1, \ldots, T - 1\}$ and $\{\bepsilon_i\}\iid \calN(\bzero, \sigma^2 I_{d\times d})$.
\end{lemma}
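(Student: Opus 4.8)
The plan is to reduce the general $\GPES_{K}$ formula of Lemma~\ref{lemma:gpesk_form} to the claimed closed form by feeding the loss difference through Assumption~\ref{assumption:linearity}. First I would specialize Lemma~\ref{lemma:gpesk_form} to $W=1$ and $K=c$: the outer sum $\sum_{j=1}^W$ collapses to the single term $j=1$, leaving
\[
\GPES_{K=c}(\theta) = \tfrac{1}{2\sigma^2}\Big[L_{\btau+1}([\theta+\bepsilon_1]_{\times c},\ldots,[\theta+\bepsilon_{\lfloor\btau/c\rfloor+1}]_{\times\remainder(\btau+1,c)}) - L_{\btau+1}([\theta-\bepsilon_1]_{\times c},\ldots,[\theta-\bepsilon_{\lfloor\btau/c\rfloor+1}]_{\times\remainder(\btau+1,c)})\Big]\big(\textstyle\sum_{i=1}^{\lfloor\btau/c\rfloor+1}\bepsilon_i\big),
\]
so the whole task reduces to rewriting the bracketed loss difference.

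Second, I would observe that the argument tuple $([\theta+\bepsilon_1]_{\times c},\ldots,[\theta+\bepsilon_{\lfloor\btau/c\rfloor+1}]_{\times\remainder(\btau+1,c)})$ is exactly $(\theta+v_1,\ldots,\theta+v_{\btau+1})$, where the perturbation attached to the $i$-th copy of $\theta$ is $v_i = \bepsilon_{\lceil i/c\rceil}$. Here I use the elementary index identity $\lceil(\btau+1)/c\rceil = \lfloor\btau/c\rfloor+1$ (write $\btau=qc+r$ with $0\le r<c$, so $\btau+1 = qc+(r+1)$ with $1\le r+1\le c$), which also shows $\remainder(\btau+1,c)=r+1$. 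Then Assumption~\ref{assumption:linearity} applied at $t=\btau+1$ with these $v_i$ gives that the bracketed difference equals $2\sum_{i=1}^{\btau+1}v_i^\top g_i^{\btau+1} = 2\sum_{i=1}^{\btau+1}\bepsilon_{\lceil i/c\rceil}^\top g_i^{\btau+1}$.

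Third, I would regroup this sum over $i$ by the shared-noise index $j=\lceil i/c\rceil$: for fixed $j\in\{1,\ldots,\lfloor\btau/c\rfloor+1\}$ the indices $i$ with $\lceil i/c\rceil=j$ run over $\{c(j-1)+1,\ldots,\min\{cj,\,\btau+1\}\}$, so by the definition $g^t_{c,j}=\sum_{i=c(j-1)+1}^{\min\{t,\,cj\}}g^t_i$ (with $t=\btau+1$) the sum becomes $2\sum_{j=1}^{\lfloor\btau/c\rfloor+1}\bepsilon_j^\top g_{c,j}^{\btau+1}$; the last noise-sharing window is automatically truncated correctly when $c\nmid(\btau+1)$. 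Substituting back, the factor $2$ cancels the $\tfrac12$, and since each $\bepsilon_j^\top g_{c,j}^{\btau+1}$ is a scalar it may be pulled inside the $j$-sum multiplying the vector $\sum_{i=1}^{\lfloor\btau/c\rfloor+1}\bepsilon_i$, yielding $\GPES_{K=c}(\theta)=\frac{1}{\sigma^2}\sum_{j=1}^{\lfloor\btau/c\rfloor+1}\big(\sum_{i=1}^{\lfloor\btau/c\rfloor+1}\bepsilon_i\big)\bepsilon_j^\top g_{c,j}^{\btau+1}$, with the randomness in $\btau$ and $\{\bepsilon_i\}$ inherited unchanged from Lemma~\ref{lemma:gpesk_form}.

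The argument is essentially bookkeeping, so there is no deep obstacle; the one place to be careful is the regrouping in the third step — verifying $\lceil(\btau+1)/c\rceil=\lfloor\btau/c\rfloor+1$ and checking that the inner index ranges after grouping match the definition of $g^t_{c,j}$ exactly, including the possibly-shorter final window when $c$ does not divide $\btau+1$.
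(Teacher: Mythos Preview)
Your proposal is correct and follows essentially the same approach as the paper: specialize Lemma~\ref{lemma:gpesk_form} to $W=1$, apply Assumption~\ref{assumption:linearity} to the antithetic loss difference, then regroup the resulting sum $\sum_i v_i^\top g_i^{\btau+1}$ by noise-sharing blocks to obtain the $g^{\btau+1}_{c,j}$ form. Your write-up is in fact slightly more explicit than the paper's about the index identity $\lceil(\btau+1)/c\rceil=\lfloor\btau/c\rfloor+1$ and the handling of the possibly-shorter final window, which is a good thing.
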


\begin{proof}
We see that when $W=1$, we have $K=cW = c$ and
\begin{align}
& \GPES_{K=c}(\theta) \\
= & \frac{1}{2\sigma^2W} \sum_{j=1}^{\blue{W=1}} \bigg [ L_{\btau + j}([\theta + \bepsilon_1]_{\times K}, [\theta + \bepsilon_2]_{\times K}, \ldots, [\theta + \bepsilon_{\lfloor \btau/K \rfloor + 1} ]_{\times \remainder(\btau + j, K)})  \nonumber\\
& \qquad \quad  -  L_{\btau + j}([\theta - \bepsilon_1]_{\times K}, [\theta - \bepsilon_2]_{\times K}, \ldots, [\theta - \bepsilon_{\lfloor \btau/K \rfloor + 1} ]_{\times \remainder(\btau + j, K)}) \bigg ] \cdot \paren{\sum_{i=1}^{\lfloor \btau/K \rfloor + 1} \bepsilon_i} \\
= & \frac{1}{2\sigma^2} \bigg [ L_{\btau + 1}([\theta + \bepsilon_1]_{\times c}, [\theta + \bepsilon_2]_{\times c}, \ldots, [\theta + \bepsilon_{\lfloor \btau/c \rfloor + 1} ]_{\times \remainder(\btau + 1, c)}) \nonumber \\
& \qquad \quad -  L_{\btau + 1}([\theta - \bepsilon_1]_{\times c}, [\theta - \bepsilon_2]_{\times c}, \ldots, [\theta - \bepsilon_{\lfloor \btau/c \rfloor + 1} ]_{\times \remainder(\btau + 1, c)}) \bigg ] \cdot \paren{\sum_{i=1}^{\lfloor \btau/c \rfloor + 1} \bepsilon_i} \label{eqn:gpesk_form_simplified_w=1}
\end{align}
Applying Assumption \ref{assumption:linearity} to the difference in the last equation, we have
\begin{align}
    & L_{\btau + 1}([\theta + \bepsilon_1]_{\times c}, [\theta + \bepsilon_2]_{\times c}, \ldots, [\theta + \bepsilon_{\lfloor \btau/c \rfloor + 1} ]_{\times \remainder(\btau + 1, c)}) \nonumber \\
    & \qquad -  L_{\btau + 1}([\theta - \bepsilon_1]_{\times c}, [\theta - \bepsilon_2]_{\times c}, \ldots, [\theta - \bepsilon_{\lfloor \btau/c \rfloor + 1} ]_{\times \remainder(\btau + 1, c)}) \\
    =& 2\sum_{j=1}^{\lfloor \btau/c \rfloor + 1} \brck{\bepsilon_j^\top \paren{\sum_{i=c\cdot (j-1) + 1}^{\min \{\btau + 1, c\cdot j\}} g^{\btau + 1}_i}} \\
    =& 2 \sum_{j=1}^{\lfloor \btau/c \rfloor + 1} \bepsilon_j^\top g^{\btau + 1}_{c, j}
\end{align}

Plugging this equality in to Equation \ref{eqn:gpesk_form_simplified_w=1}, we get
\begin{align*}
    & \GPES_{K=c}(\theta) = \frac{1}{\sigma^2}  \sum_{j=1}^{\lfloor \btau/c \rfloor + 1} 
 \paren{\sum_{i=1}^{\lfloor \btau/c \rfloor + 1} \bepsilon_i}\bepsilon_j^\top g^{\btau + 1}_{c, j} 
\end{align*}
\end{proof}

\subsection{Proof of Theorem \ref{thm:variance}}
\begin{theorem}
When $W=1$ and under Assumption \ref{assumption:linearity}, the total variance of $\GPES_{K=c}(\theta)$ has the following form for any integer $c \in [1, T]$,
\small
\begin{align}
\tr(\Cov[\GPES_{K=c}(\theta)]) = \frac{(d+2)}{T} \sum_{t=1}^T \paren{ \|g^t\|_2^2} - \norm{\frac{1}{T} \sum_{t=1}^T g^t}_2^2 + \frac{1}{T} \sum_{t=1}^T \paren{\frac{d}{2}\sum_{j=1, j'=1}^{\lceil t/c \rceil} \norm{g^t_{c,j} - g^t_{c,j'}}_2^2}.
\end{align}
\end{theorem}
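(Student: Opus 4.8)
The plan is to start from the closed form in Lemma~\ref{lemma:gpesk_form_under_assumption} and apply the standard decomposition $\tr(\Cov[\GPES_{K=c}(\theta)]) = \E\brck{\norm{\GPES_{K=c}(\theta)}_2^2} - \norm{\E[\GPES_{K=c}(\theta)]}_2^2$, computing each expectation by first conditioning on $\btau$ and then averaging over $\btau \sim \Unif\{0,\ldots,T-1\}$. To keep the bookkeeping light I would fix abbreviations valid on the event $\{\btau = \btau\}$: write $t = \btau + 1$, $m = \lfloor \btau/c\rfloor + 1$, and $z_i = \bepsilon_i/\sigma \sim \calN(\bzero, I_{d\times d})$ i.i.d. One checks $m = \lceil t/c \rceil$, and since the blocks $g^t_{c,1},\ldots,g^t_{c,m}$ partition the first $t$ indices of $\{g^t_i\}$ we have $\sum_{j=1}^{m} g^t_{c,j} = g^t$. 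With $h_j \coloneqq g^t_{c,j}$, Lemma~\ref{lemma:gpesk_form_under_assumption} becomes $\GPES_{K=c}(\theta) = U V$ where $U \coloneqq \sum_{j=1}^m z_j^\top h_j$ is a centered scalar Gaussian and $V \coloneqq \sum_{i=1}^m z_i$ is a centered Gaussian vector, so that $\norm{\GPES_{K=c}(\theta)}_2^2 = U^2\norm{V}_2^2$.

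For the mean: conditioning on $\btau$ and using $\E[z_i z_j^\top] = I_{d\times d}$ if $i=j$ and $\bzero$ otherwise, all cross terms vanish and $\E[\GPES_{K=c}(\theta) \mid \btau] = \sum_{j=1}^m h_j = g^t$. Averaging over $\btau$ gives $\E[\GPES_{K=c}(\theta)] = \frac1T\sum_{t=1}^T g^t$, which supplies the second term of \eqref{eq:variance}.

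For the second moment, the key is the joint Gaussian structure of $(U,V)$ conditionally on $\btau$: $\Var(U) = \sum_{j=1}^m\norm{h_j}_2^2$, $\Cov(V) = m I_{d\times d}$, and the cross-covariance $\E[UV] = \sum_{j=1}^m h_j = g^t$ (the two are \emph{not} independent). Applying Wick/Isserlis componentwise, $\E[U^2 V_k^2] = \E[U^2]\E[V_k^2] + 2(\E[UV_k])^2$, and summing over $k$ yields $\E\brck{U^2\norm{V}_2^2 \mid \btau} = \Var(U)\tr(\Cov(V)) + 2\norm{\E[UV]}_2^2 = m d \sum_{j=1}^m \norm{g^t_{c,j}}_2^2 + 2\norm{g^t}_2^2$. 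Averaging over $\btau$ gives $\E\brck{\norm{\GPES_{K=c}(\theta)}_2^2} = \frac1T\sum_{t=1}^T\paren{2\norm{g^t}_2^2 + d\,\lceil t/c\rceil\sum_{j=1}^{\lceil t/c\rceil}\norm{g^t_{c,j}}_2^2}$.

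Finally I would convert the $m\sum_j\norm{a_j}_2^2$ expression into the pairwise-distance form of the statement via the elementary identity $m\sum_{j=1}^m\norm{a_j}_2^2 = \tfrac12\sum_{j,j'=1}^m\norm{a_j - a_{j'}}_2^2 + \norm{\sum_{j=1}^m a_j}_2^2$, used with $a_j = g^t_{c,j}$ and $\sum_j a_j = g^t$. Substituting this into the expression for $\E[\norm{\GPES_{K=c}(\theta)}_2^2]$ merges the $\norm{g^t}_2^2$ contributions into coefficient $(d+2)/T$ and produces the nonnegative pairwise term; subtracting $\norm{\frac1T\sum_t g^t}_2^2$ then gives \eqref{eq:variance} exactly. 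The main obstacle is the conditional second-moment computation $\E[U^2\norm{V}_2^2]$ — in particular getting the cross-covariance $\E[UV] = g^t$ right, since it is precisely this correlation between the scalar $U$ and the vector $V$ that produces the $\norm{g^t}_2^2$ terms; the rest is routine Gaussian moments together with the block-partition identity $\sum_j g^t_{c,j} = g^t$.
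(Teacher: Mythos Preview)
Your proof is correct. Both your argument and the paper's start from Lemma~\ref{lemma:gpesk_form_under_assumption} and the decomposition $\tr(\Cov[X]) = \E\|X\|_2^2 - \|\E X\|_2^2$, condition on $\btau$, and compute the mean in the same way. The difference lies in how the conditional second moment is evaluated. The paper expands $\|\GPES_{K=c}(\theta)\|_2^2$ into a quadruple sum over indices $i,j,k,l$ and works through roughly a dozen cases of $\E_{\bepsilon}[\bepsilon_i\bepsilon_k^\top\bepsilon_l\bepsilon_j^\top]$ one by one, eventually arriving at $(nd+2)\sum_i\|a_i\|_2^2 + 2\sum_{i\neq j} a_i^\top a_j$, which it then algebraically massages into the pairwise-distance form. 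You instead recognize the factorization $\GPES_{K=c}(\theta) = UV$ with $(U,V)$ jointly Gaussian and apply Isserlis directly at the level of the pair $(U,V_k)$, obtaining $\E[U^2\|V\|_2^2] = md\sum_j\|h_j\|_2^2 + 2\|g^t\|_2^2$ in one line; the identity $m\sum_j\|a_j\|_2^2 = \tfrac12\sum_{j,j'}\|a_j-a_{j'}\|_2^2 + \|\sum_j a_j\|_2^2$ then converts this to the stated form. Your route is shorter and sidesteps the index-by-index case analysis entirely; the paper's approach is more elementary in that it never needs to observe that $(U,V)$ is jointly Gaussian, but pays for this with substantially more bookkeeping.
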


\begin{proof}
First we break down the total variance:
\begin{align}
    &\tr(\Cov[\GPES_{K=c}(\theta)]) \\
    =& \tr(\E \brck{\paren{\GPES_{K=c}(\theta) - \E[\GPES_{K=c}(\theta)]}\paren{\GPES_{K=c}(\theta) - \E[\GPES_{K=c}(\theta)]}^\top}) \\
    =& \tr (\E \GPES_{K=c}(\theta)\GPES_{K=c}(\theta)^\top) - \tr([\E \GPES_{K=c}(\theta)][\E \GPES_{K=c}(\theta)]^\top) \\
    =& (\E \tr[\GPES_{K=c}(\theta)\GPES_{K=c}(\theta)^\top]) - [\E \GPES_{K=c}(\theta)]^\top[\E \GPES_{K=c}(\theta)] \\
    =& \E \|\GPES_{K=c}(\theta)\|_2^2 - \|\E\GPES_{K=c}(\theta)\|_2^2 \\
    =& \E_{\btau} \E_{\{\bepsilon_i\} \mid \btau=t-1} \|\GPES_{K=c}(\theta)\|_2^2 - \|\E\GPES_{K=c}(\theta)\|_2^2 \\
    =& \frac{1}{T} \sum_{t=1}^{T} \underbrace{\E_{\{\bepsilon_i\} \mid \btau=t-1} \|\GPES_{K=c}(\theta)\|_2^2 }_{\circled{1}} - \underbrace{\|\E\GPES_{K=c}(\theta)\|_2^2}_{\circled{2}}
\end{align}

Thus to analytically express the trace of covariance, we need to separately derive $\circled{1}$ for any $t \in \{1, \ldots, T\}$ and $\circled{2}$.
\bigskip

\textbf{\circled{1} Expressing $\E_{\{\bepsilon_i\} \mid \btau=t-1} \|\GPES_{K=c}(\theta)\|_2^2$}.

Here we see that for a given $t \in [T]$, conditioning on $\btau = t-1$, we have
\begin{align}
& \GPES_{K=c}(\theta)\\
  =& \frac{1}{\sigma^2}  \sum_{j=1}^{\lfloor \btau/c \rfloor + 1} 
 \paren{\sum_{i=1}^{\lfloor \btau/c \rfloor + 1} \bepsilon_i}\bepsilon_j^\top g^{\btau + 1}_{c, j} \\
 =& \frac{1}{\sigma^2}  \sum_{j=1}^{\lfloor (t-1)/c \rfloor + 1} 
 \paren{\sum_{i=1}^{\lfloor (t-1)/c \rfloor + 1} \bepsilon_i}\bepsilon_j^\top g^{t}_{c, j} \\
 =& \frac{1}{\sigma^2}  \sum_{j=1}^{\lceil t/c \rceil} 
 \paren{\sum_{i=1}^{\lceil t/c \rceil} \bepsilon_i}\bepsilon_j^\top g^{t}_{c, j},\label{eqn:gpesk_analytical_form_conditioning_on_t}
\end{align}

where in the last step we use the fact that for any integer $t$, $\lfloor (t-1)/c \rfloor + 1 = \lceil t/c \rceil$. Because we are deriving expression \circled{1} for every value of $t$ separately, to simplify the notation, we define $n \coloneqq \lceil t/c \rceil$ and $a_j \coloneqq g^t_{c, j}$ for a given fixed $t$.

Then the expression in Equation \eqref{eqn:gpesk_analytical_form_conditioning_on_t} can be simplified as $\frac{1}{\sigma^2}  \sum_{j=1}^{n} \paren{\sum_{i=1}^{n} \bepsilon_i}\bepsilon_j^\top a_j$.

As a result, term \circled{1} can be expressed as
\begin{align}
    & \E_{\{\bepsilon\} \mid \btau=t-1} \|\GPES_{K=c}(\theta)\|_2^2 \\
=& \E_{\bepsilon} \norm{\frac{1}{\sigma^2}  \sum_{j=1}^{n} \paren{\sum_{i=1}^{n} \bepsilon_i}\bepsilon_j^\top a_j}_2^2 \\
= & \frac{1}{\sigma^4} \E_{\bepsilon} \brck{\sum_{i=1}^n \paren{\sum_{k=1}^n{\bepsilon_k}} \bepsilon_i^\top a_i}^\top \brck{\sum_{j=1}^n \paren{\sum_{l=1}^n{\bepsilon_l}} \bepsilon_j^\top a_j} \\
=& \frac{1}{\sigma^4} \sum_{i=1}^n \sum_{j=1}^n a_i^\top \E_{\bepsilon} \brck{\bepsilon_i \paren{\sum_{k=1}^n{\bepsilon_k}}^\top \paren{\sum_{l=1}^n{\bepsilon_l}} \bepsilon_j^\top} a_j \label{eqn:gpesk_conditioning_squared_2_norm_quadratic} \\
\end{align}

From Equation \eqref{eqn:gpesk_conditioning_squared_2_norm_quadratic}, we see that term \circled{1} is a quadratic in $\{a_i\}_{i=1}^n$ with each bilinear form's matrix determined by an expectation. Thus we break into different cases to evaluate the expectation $\brck{\sum_{k=1}^n \sum_{l=1}^n \E_{\bepsilon} \bepsilon_i \bepsilon_k^\top \bepsilon_l \bepsilon_j^\top}$ for different values of $i, j, k, l$.

\textbf{Begin of Cases}

\textbf{Case (I)} $i = j$.

\quad \textbf{(I.1)} If $k \neq i, l \neq i$,
\begin{align*}
& \E_{\bepsilon} \bepsilon_i \bepsilon_k^\top \bepsilon_l \bepsilon_j^\top \\
=& \E_{\bepsilon_i, \bepsilon_k, \bepsilon_l} \bepsilon_i \bepsilon_k^\top \bepsilon_l \bepsilon_i^\top \\
=& \E_{\bepsilon_i} \E_{\bepsilon_k, \bepsilon_l} \bepsilon_i \bepsilon_k^\top \bepsilon_l \bepsilon_i^\top \\
=& \E_{\bepsilon_i}  \bepsilon_i [\E_{\bepsilon_k, \bepsilon_l} \bepsilon_k^\top \bepsilon_l] \bepsilon_i^\top
\end{align*}

\qquad \textbf{(I.1.a)} If $k \neq i, l \neq i$, and $k \neq l$, then $\E_{\bepsilon_k, \bepsilon_l} \bepsilon_k^\top \bepsilon_l=0$ and $\E_{\bepsilon} \bepsilon_i \bepsilon_k^\top \bepsilon_l \bepsilon_j^\top = \bzero$.

\qquad \textbf{(I.1.b)} If $k \neq i, l \neq i$, and $k = l$.
\begin{align*}
     &\E_{\bepsilon_i}  \bepsilon_i [\E_{\bepsilon_k, \bepsilon_l} \bepsilon_k^\top \bepsilon_l] \bepsilon_i^\top \\
     =& \E_{\bepsilon_i}  \bepsilon_i [\E_{\bepsilon_k} \bepsilon_k^\top \bepsilon_k] \bepsilon_i^\top \\
     =& (d\sigma^2) \sigma^2 I_d \\
     =& d\sigma^4 I_d
\end{align*}

\quad \textbf{(I.2)} If $k = i, l \neq i$,
\begin{align*}
    & \E_{\bepsilon} \bepsilon_i \bepsilon_k^\top \bepsilon_l \bepsilon_j^\top \\
    =& \E_{\bepsilon} \bepsilon_i \bepsilon_i^\top \bepsilon_l \bepsilon_i^\top \\
    =& \E_{\bepsilon_i} \bepsilon_i \bepsilon_i^\top \E_{\bepsilon_l} [\bepsilon_l] \bepsilon_i^\top \\
    =& \bzero
\end{align*}

\clearpage
\quad \textbf{(I.3)} If $k \neq i, l = i$, similarly as \textbf{(I.2)},
\begin{align*}
    & \E_{\bepsilon} \bepsilon_i \bepsilon_k^\top \bepsilon_l \bepsilon_j^\top \\
    =& \E_{\bepsilon} \bepsilon_i \bepsilon_k^\top \bepsilon_i \bepsilon_i^\top \\
    =& \E_{\bepsilon_i} \bepsilon_i \E_{\bepsilon_k} [\bepsilon_k^\top] \bepsilon_i \bepsilon_i^\top \\
    =& \bzero
\end{align*}

\quad \textbf{(I.4)} If $k = i, l = i$, by Isserlis’ theorem (derivation see Supplementary material A.2 in \citep{maheswaranathan2019guided}),
\begin{align*}
& \E_{\bepsilon} \bepsilon_i \bepsilon_k^\top \bepsilon_l \bepsilon_j^\top \\
=& \E_{\bepsilon_i} \bepsilon_i \bepsilon_i^\top \bepsilon_i \bepsilon_i^\top \\
=& (d + 2) \sigma^4 I_d
\end{align*}

\quad Combining \textbf{(I.1)} to \textbf{(I.4)}, we see that for the case of $i = j$,
\begin{align}
    \sum_{k=1}^n \sum_{l=1}^n \E_{\bepsilon} \bepsilon_i \bepsilon_k^\top \bepsilon_l \bepsilon_j^\top = \brck{\sum_{k\neq i} d\sigma^4 I_d} + (d + 2)\sigma^4 I_d = (nd + 2) \sigma^4 I_d \label{eq:sum_product_ieqj}
\end{align}

\textbf{Case (II)} $i \neq j$.

\quad \textbf{(II.1)} $k \neq i, k \neq j$.

\qquad \textbf{(II.1.a)} If $k \neq i, k \neq j$, and additionally $k \neq l$, 
\begin{align*}
& \E_{\bepsilon} \bepsilon_i \bepsilon_k^\top \bepsilon_l \bepsilon_j^\top \\
=& \E_{\bepsilon_i,\bepsilon_j, \bepsilon_l} \bepsilon_i (\E_{\bepsilon_k} [\bepsilon_k])^\top \bepsilon_l \bepsilon_j^\top \\
=& \bzero
\end{align*}

\qquad \textbf{(II.1.b)} If $k \neq i, k \neq j$, and $k = l$,

\qquad \qquad \qquad After pulling out $\E_{\bepsilon_k} [\bepsilon_k^\top \bepsilon_k]$, because $i \neq j$, we still have the expectation $=0$.
\begin{align*}
& \E_{\bepsilon} \bepsilon_i \bepsilon_k^\top \bepsilon_l \bepsilon_j^\top \\
=& \E_{\bepsilon} \bepsilon_i \E[\bepsilon_k^\top \bepsilon_k] \bepsilon_j^\top \\
=& [\E_{\bepsilon_i,\bepsilon_j} \bepsilon_i \bepsilon_j^\top] \cdot \E[\bepsilon_k^\top \bepsilon_k] \\
=& \bzero \cdot \E[\bepsilon_k^\top \bepsilon_k] \\
=& \bzero
\end{align*}

\quad \textbf{(II.2)} $k = i$,

\qquad \textbf{(II.2.a)} if $l \neq i, l \neq j$. This is similar to \textbf{(II.1)} as we can swap the position of $k$ and $j$:
\begin{align*}
    \bepsilon_i \bepsilon_k^\top \bepsilon_l \bepsilon_j^\top = \bepsilon_i \bepsilon_l^\top \bepsilon_k \bepsilon_j^\top.
\end{align*}
\qquad \qquad \quad \; Thus we have $\E_{\bepsilon} \bepsilon_i \bepsilon_k^\top \bepsilon_l \bepsilon_j^\top = \bzero$.

\qquad \textbf{(II.2.b)} if $l = i$,
\begin{align*}
    & \E_{\bepsilon} \bepsilon_i \bepsilon_k^\top \bepsilon_l \bepsilon_j^\top \\
    =& E_{\bepsilon} \bepsilon_i \bepsilon_i^\top \bepsilon_i \bepsilon_j^\top \\
    =& E_{\bepsilon_i} \bepsilon_i \bepsilon_i^\top \bepsilon_i \E_{\bepsilon_j}\bepsilon_j^\top \\
    =& \bzero
\end{align*}

\qquad \textbf{(II.2.c)} if $l = j$,
\begin{align*}
    & \E_{\bepsilon} \bepsilon_i \bepsilon_k^\top \bepsilon_l \bepsilon_j^\top \\
=& \E_{\bepsilon_i, \bepsilon_j} \bepsilon_i \bepsilon_i^\top \bepsilon_j \bepsilon_j^\top \\
=& \E_{\bepsilon_i} [\bepsilon_i \bepsilon_i^\top] \E_{\bepsilon_j} [\bepsilon_j \bepsilon_j^\top] \\
=& \sigma^4 I_d
\end{align*}

\clearpage
\quad \textbf{(II.3)} $k = j$,

\qquad \textbf{(II.3.a)} if $l \neq i, l \neq j$. Again similar to \textbf{(II.1)} by swapping $k$ and $j$, we have $\E_{\bepsilon} \bepsilon_i \bepsilon_k^\top \bepsilon_l \bepsilon_j^\top = \bzero$.

\qquad \textbf{(II.3.b)} if $l = i$. By swapping the position, we have
\begin{align*}
& \E_{\bepsilon} \bepsilon_i \bepsilon_k^\top \bepsilon_l \bepsilon_j^\top \\
=& \E_{\bepsilon_i, \bepsilon_j} \bepsilon_i (\bepsilon_j^\top \bepsilon_i) \bepsilon_j^\top \\
=& \E_{\bepsilon_i, \bepsilon_j} \bepsilon_i (\bepsilon_i^\top \bepsilon_j) \bepsilon_j^\top \\
=& \E_{\bepsilon_i} [\bepsilon_i \bepsilon_i^\top] \E_{\bepsilon_j} [\bepsilon_j \bepsilon_j^\top] \\
=& \sigma^4 I_d
\end{align*}

\qquad \textbf{(II.3.c)} $l = j$.
\begin{align*}
    & \E_{\bepsilon} \bepsilon_i \bepsilon_k^\top \bepsilon_l \bepsilon_j^\top \\
    =& \E_{\bepsilon} \bepsilon_i \bepsilon_j^\top \bepsilon_j \bepsilon_j^\top \\
    =& \E_{\bepsilon_i} \bepsilon_i \E_{\bepsilon_j} \bepsilon_j^\top \bepsilon_j \bepsilon_j^\top \\
    =& \bzero
\end{align*}

As a result, when we have $i \neq j$, the total sum over all the cases \textbf{(II.1)} - \textbf{(II.3)} is
\begin{align}
    \sum_{k=1}^n \sum_{l=1}^n \E_{\bepsilon} \bepsilon_i \bepsilon_k^\top \bepsilon_l \bepsilon_j^\top = 2 \sigma^4 I_d \label{eq:sum_product_ineqj}
\end{align}
\textbf{End of all Cases}.

Using the result in \eqref{eq:sum_product_ieqj} and \eqref{eq:sum_product_ineqj}, we see that for the fixed time step $t$ and $n \coloneqq \lceil t/c \rceil$, we have
{
\allowdisplaybreaks
\begin{align}
        & \E_{\{\bepsilon\} \mid \btau=t-1} \|\GPES_{K=c}(\theta)\|_2^2 \\
    =& \frac{1}{\sigma^4} \paren{\sum_{i=1}^n (nd + 2) \sigma^4 \|a_i\|_2^2 + \sum_{i\neq j} 2\sigma^4 a_i^{\top} a_j} \\
    =& \sum_{i=1}^n (nd + 2) \|a_i\|_2^2 + \sum_{i\neq j} 2 a_i^\top a_j \\
    =& \paren{\sum_{i=1}^n (d + 2) \|a_i\|_2^2 + \sum_{i=1}^n (n-1)d \|a_i\|_2^2} + \paren{\sum_{i\neq j} (d+2) [a_i]^\top a_j - \sum_{i\neq j} d a_i^\top a_j} \\
    =& \paren{\sum_{i=1}^n (d + 2) \|a_i\|_2^2 + \sum_{i\neq j} (d+2) [a_i]^\top a_j} + 
    \paren{\sum_{i=1}^n (n-1)d \|a_i\|_2^2 - \sum_{i\neq j} d [a_i]^\top a_j} \\
    =& (d+2) \norm{\sum_{i=1}^n a_i}_2^2 + \frac{d}{2} \paren{\sum_{i=1}^n 2(n-1) \|a_i\|_2^2 - \sum_{i\neq j} 2a_i^\top a_j} \\
    =& (d+2) \norm{\sum_{i=1}^n a_i}_2^2 + \frac{d}{2} \paren{\sum_{i=1}^n (n-1) \|a_i\|_2^2 + \sum_{j=1}^n (n-1) \|a_j\|_2^2 - \sum_{i\neq j} 2a_i^\top a_j} \\
    =& (d+2) \norm{\sum_{i=1}^n a_i}_2^2 + \frac{d}{2} \paren{\sum_{i=1}^n \sum_{j\neq i} \|a_i\|_2^2 + \sum_{j=1}^n \sum_{i\neq i} \|a_j\|_2^2 - \sum_{i\neq j} 2a_i^\top a_j} \\
    =& (d+2) \norm{\sum_{i=1}^n a_i}_2^2 + \frac{d}{2} \paren{\sum_{i=1, j=1, i\neq j}^n \|a_i\|_2^2 + \|a_j\|_2^2 - 2a_i^\top a_j} \\
    =& (d+2) \norm{\sum_{i=1}^n a_i}_2^2 + \frac{d}{2} \sum_{i=1, j=1, i\neq j}^n \|a_i - a_j\|_2^2
\end{align}
}

Now we substitute $a_i$ with $g^t_{c,i}$ and $n$ back with $\lceil t/c \rceil$, we have
\begin{align}
& \E_{\{\bepsilon\} \mid \btau=t-1} \|\GPES_{K=c}(\theta)\|_2^2 \\
=& (d+2) \norm{\sum_{i=1}^{\lceil t/c \rceil} g^t_{c,i}}_2^2 + \frac{d}{2} \sum_{i=1, j=1, i\neq j}^{\lceil t/c \rceil} \|g^t_{c,i} - g^t_{c,j}\|_2^2 \\
=& (d+2) \norm{g^t}_2^2 + \frac{d}{2} \sum_{j=1, j'=1}^{\lceil t/c \rceil} \|g^t_{c,j} - g^t_{c,j'}\|_2^2     \label{eqn:circle1}
\end{align}
This completes the derivation for \circled{1}.

\textbf{\circled{2} Expressing $\|\E\GPES_{K=c}(\theta)\|_2^2$}

This amounts to computing the expectation of the $\GPES_{K=c}(\theta)$ estimator:
{
\allowdisplaybreaks
\begin{align}
    & \E \GPES_{K=c}(\theta) \\
    =& \E_{\btau} \E_{\bepsilon} \GPES_{K=c}(\theta) \\
    =& \frac{1}{T} \sum_{t=1}^{T} \E_{\bepsilon \mid \btau = t-1} \GPES_{K=c}(\theta) \\
    =& \frac{1}{T}  \sum_{t=1}^{T} \frac{1}{\sigma^2} \E_{\bepsilon} \sum_{j=1}^{\lceil t/c \rceil} 
 \paren{\sum_{i=1}^{\lceil t/c \rceil} \bepsilon_i}\bepsilon_j^\top g^{t}_{c, j} \\
    =& \frac{1}{T}  \sum_{t=1}^{T} \frac{1}{\sigma^2} \sum_{j=1}^{\lceil t/c \rceil} 
 \sum_{i=1}^{\lceil t/c \rceil} \E_{\bepsilon} [\bepsilon_i\bepsilon_j^\top] g^{t}_{c, j} \\
    =& \frac{1}{T}  \sum_{t=1}^{T} \frac{1}{\sigma^2} \sum_{j=1}^{\lceil t/c \rceil} \sigma^2 g^{t}_{c, j} \\
    =& \frac{1}{T}  \sum_{t=1}^{T}  \sum_{j=1}^{\lceil t/c \rceil} g^{t}_{c, j} \\
    =& \frac{1}{T}  \sum_{t=1}^{T}  g^t
\end{align}
}

Thus we have 
\begin{align}
    \|\E\GPES_{K=c}(\theta)\|_2^2 = \norm{ \frac{1}{T}  \sum_{t=1}^{T}  g^t}_2^2.
    \label{eqn:circle2}
\end{align}
This completes the derivation for \circled{2}.

Combining the result we have for \circled{1} (Equation \eqref{eqn:circle1}) and \circled{2} (Equation \eqref{eqn:circle2}), we have
\begin{align}
    & \tr(\Cov[\GPES_{K=c}(\theta)]) \\
    =& \frac{1}{T} \sum_{t=1}^{T} \E_{\{\bepsilon_i\} \mid \btau=t-1} \|\GPES_{K=c}(\theta)\|_2^2 - \|\E\GPES_{K=c}(\theta)\|_2^2 \\
    =& \frac{(d+2)}{T} \sum_{t=1}^T \paren{ \|g^t\|_2^2} - \norm{\frac{1}{T} \sum_{t=1}^T g^t}_2^2 + \frac{1}{T} \sum_{t=1}^T \paren{\frac{d}{2}\sum_{j=1, j'=1}^{\lceil t/c \rceil} \norm{g^t_{c,j} - g^t_{c,j'}}_2^2} \label{app_eqn:variance}
\end{align}
This completes the proof for Theorem \ref{thm:variance}.
\end{proof}

\subsection{Proof of Corollary \ref{corollary:w=1}}
\begin{corollary}
Under Assumption \ref{assumption:linearity}, when $W=1$, the gradient estimator $\GPES_{K=T}(\theta)$ has the smallest total variance among all $\{\GPES_K: K \in [T]\}$ estimators.
\end{corollary}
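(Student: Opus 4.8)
The plan is to read off the result directly from the closed-form variance expression in Theorem~\ref{thm:variance}. Recall that for $W=1$ and any integer $c\in[T]$,
\begin{align*}
\tr(\Cov[\GPES_{K=c}(\theta)]) = \frac{(d+2)}{T} \sum_{t=1}^T \|g^t\|_2^2 - \norm{\frac{1}{T} \sum_{t=1}^T g^t}_2^2 + \frac{1}{T} \sum_{t=1}^T \paren{\frac{d}{2}\sum_{j=1, j'=1}^{\lceil t/c \rceil} \norm{g^t_{c,j} - g^t_{c,j'}}_2^2}.
\end{align*}
First I would observe that the first two summands do not depend on the noise-sharing period $K=c$: they are fixed functions of the vectors $\{g^t\}_{t=1}^T$ attached to $\theta$ by Assumption~\ref{assumption:linearity}. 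Hence comparing the total variance across $c\in[T]$ reduces entirely to comparing the third summand, which I will call $R(c) \coloneqq \frac{1}{T}\sum_{t=1}^T \frac{d}{2}\sum_{j,j'=1}^{\lceil t/c\rceil}\|g^t_{c,j}-g^t_{c,j'}\|_2^2$.

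Next I would note that $R(c)\ge 0$ for every $c$, since it is a nonnegative multiple ($d/2\ge 0$, $1/T>0$) of a sum of squared Euclidean norms. It therefore suffices to exhibit a choice of $c$ attaining $R(c)=0$ and to verify this choice is $c=T$. Taking $c=T$, for every $t\in\{1,\ldots,T\}$ we have $0 < t/T \le 1$, so $\lceil t/T\rceil = 1$; consequently the inner double sum over $j,j'\in\{1\}$ contains only the single term $\|g^t_{T,1}-g^t_{T,1}\|_2^2 = 0$, giving $R(T)=0$. Since $R(c)\ge 0 = R(T)$ for all $c\in[T]$, the estimator $\GPES_{K=T}(\theta)$ minimizes $\tr(\Cov[\cdot])$ over the class $\{\GPES_K : K\in[T]\}$, which is exactly the claim (note $K=T$ is an admissible period since $T$ is trivially a multiple of $W=1$).

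There is essentially no substantial obstacle here: the corollary is an immediate consequence of Theorem~\ref{thm:variance} once one isolates the $c$-dependent term and checks it is a nonnegative quantity that vanishes at $c=T$. The only point that warrants a sentence of care is the elementary fact that $\lceil t/T\rceil=1$ for all $t\in[T]$, which collapses the pairwise-distance sum; I would state this explicitly so the reader sees why $c=T$ — and only $c=T$ in general — forces every $g^t$ to be estimated ``directly'' by a single noise-sharing window rather than reconstructed from several partial sums.
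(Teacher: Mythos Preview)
Your proposal is correct and follows essentially the same approach as the paper: isolate the $c$-dependent third term in the variance formula from Theorem~\ref{thm:variance}, observe it is nonnegative, and verify it vanishes at $c=T$ because $\lceil t/T\rceil=1$ collapses the pairwise sum. The paper's own proof is slightly terser but makes exactly these observations.
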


\begin{proof}
We notice that in Equation \ref{app_eqn:variance}, the only term that depends on $c$ is the third term:
\begin{align}
\frac{1}{T} \sum_{t=1}^T \paren{\frac{d}{2}\sum_{j=1, j'=1}^{\lceil t/c \rceil} \norm{g^t_{c,j} - g^t_{c,j'}}_2^2}.
\end{align}
This term can be made zero by having $c=T$: in this case, $\lceil t/T\rceil = 1$ for any $t \in [T]$, hence there is only one $g^t_{T, 1}$ to compare against itself. Thus having $c=T$ minimizes the total variance among the class of $\GPES_{K=c}$ estimators.
\end{proof}

\subsection{Proof of Corollary \ref{corollary:w>1}}
\begin{corollary}
    Under Assumption \ref{assumption:linearity}, when $W$ divides $T$, the NRES gradient estimator has the smallest total variance among all $\GPES_{K=cW}$ estimators $c \in \Integer \cap [1, T/W]$.
\end{corollary}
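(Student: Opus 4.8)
The plan is to reduce this to Corollary~\ref{corollary:w=1} by collapsing every block of $W$ consecutive transition steps into a single ``mega'' step, thereby building an auxiliary UCG of horizon $\widetilde T \coloneqq T/W$ to which the $W=1$ analysis applies verbatim.

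First I would define the mega UCG. For $k \in [\widetilde T]$, set the mega-loss $\widetilde L_k(\theta_1, \ldots, \theta_k) \coloneqq \frac{1}{W}\sum_{j=1}^W L_{W(k-1)+j}([\theta_1]_{\times W}, \ldots, [\theta_{k-1}]_{\times W}, [\theta_k]_{\times j})$, i.e. the $m$-th copy of $\theta$ in the mega graph stands for all $W$ applications of $\theta$ inside the $m$-th truncation window of the original graph. This choice makes the mega average loss $\widetilde L \coloneqq \frac{1}{\widetilde T}\sum_k \widetilde L_k$ coincide with the original $L$. The first key step is to check that Assumption~\ref{assumption:linearity} transfers: applying the original assumption to each $L_{W(k-1)+j}$ with the perturbation pattern in which every step of window $m$ receives the same $v_m$, the $i$-th step's contribution $g^t_i$ gets grouped into $g^t_{W,m}$ for $i$ in window $m$, so $\widetilde L_k$ satisfies the antithetic-linearity identity with $\widetilde g^k_m \coloneqq \frac{1}{W}\sum_{j=1}^W g^{W(k-1)+j}_{W,m}$. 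Since the proof of Theorem~\ref{thm:variance} (and hence Corollary~\ref{corollary:w=1}) used nothing about the UCG beyond the loss-as-function-of-$\theta$-copies abstraction together with Assumption~\ref{assumption:linearity}, it applies legitimately to the mega graph.

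Next I would identify $\GPES_{K=cW}$ on the original UCG (truncation window $W$) with $\GPES_{\widetilde K=c}$ on the mega UCG (truncation window $\widetilde W = 1$). All three sources of structure must be matched: (i) the noise-sharing period --- a fresh $\bepsilon$ every $cW$ original steps is a fresh $\bepsilon$ every $c$ mega steps, and the earlier-window perturbation pattern (first $cW$ original steps use $\bepsilon_1$, next $cW$ use $\bepsilon_2$, $\ldots$) is exactly the mega-graph pattern (first $c$ mega steps use $\bepsilon_1$, etc.); (ii) the random start --- $\btau \sim \Unif\{0,W,\ldots,T-W\}$ equals $W\widetilde\btau$ with $\widetilde\btau \sim \Unif\{0,1,\ldots,\widetilde T - 1\}$, which is precisely the mega UCG's start distribution when $\widetilde W = 1$; and (iii) the estimator itself --- the inner sum $\frac{1}{2\sigma^2 W}\sum_{j=1}^W[L^+_{\btau+j}-L^-_{\btau+j}]$ equals $\frac{1}{2\sigma^2}[\widetilde L^+_{\widetilde\btau+1}-\widetilde L^-_{\widetilde\btau+1}]$ by the definition of $\widetilde L_k$, while the noise-accumulation factor $\sum_{i=1}^{\lfloor \btau/(cW)\rfloor + 1}\bepsilon_i$ is literally the mega-graph factor $\sum_{i=1}^{\lfloor \widetilde\btau/c\rfloor + 1}\bepsilon_i$. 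Hence $\GPES_{K=cW}(\theta)$ and the mega-graph $\GPES_{\widetilde K=c}(\theta)$ are the same random vector, so in particular they have equal total variance.

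Finally, Corollary~\ref{corollary:w=1} applied to the mega UCG (horizon $\widetilde T$, satisfying Assumption~\ref{assumption:linearity} by the above) says $\GPES_{\widetilde K = \widetilde T}$ has the smallest total variance among $\{\GPES_{\widetilde K = c} : c \in [\widetilde T]\}$. Translating back, $\widetilde K = \widetilde T = T/W$ corresponds to $K = \widetilde K\,W = T$, which is $\NRES$, and the range $c \in [\widetilde T] = [T/W]$ is exactly the set of $\GPES_{K=cW}$ estimators in the statement. I expect the only real work to be the term-by-term bookkeeping in the second and third paragraphs (the estimator/noise/start correspondence and the transfer of Assumption~\ref{assumption:linearity}); once those are verified, the conclusion is purely formal.
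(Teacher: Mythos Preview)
Your proposal is correct and follows essentially the same approach as the paper: both construct a ``mega'' UCG by collapsing each length-$W$ truncation window into a single step, verify that Assumption~\ref{assumption:linearity} transfers (with the same formula $\widetilde g^{k}_{m} = \frac{1}{W}\sum_{j=1}^W g^{W(k-1)+j}_{W,m}$), identify $\GPES_{K=cW}$ on the original graph with $\GPES_{\widetilde K=c}$ on the mega graph, and invoke Corollary~\ref{corollary:w=1}. Your write-up is in fact more explicit than the paper's about the three-way correspondence (noise-sharing period, random start distribution, and estimator form), which is all to the good.
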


\begin{proof}
Here the key idea is that because $W$ divides $T$, we can define a mega unrolled computation graph and apply Corollary \ref{corollary:w=1}.

Specifically, we consider a mega UCG where the inner state is represented by all the states within a size $W$ truncation window. The total horizon length of this mega UCG is $T' = T/W$. 
The initial state of the mega UCG is given by
\begin{align}
    S_0 \coloneqq (s_0, \ldots, s_0) \in \Real^{Wp}.
\end{align}
For time step $t' \in \{1,\ldots,  T/W\}$ in this mega UCG, the mega-state $S_{t'}$ is given by the concatenation of all the states in a given original truncation window
\begin{align}
    S_{t'} \coloneqq (s_{(t'-1)\cdot W + 1}, \ldots, s_{t'\cdot W}) \in \Real^{Wp},
\end{align}

The learnable parameter in this problem is still $\theta \in \Real^d$. The transition dynamics $u_{t'}: \Real^{Wp} \times \Real^d \rightarrow \Real^{Wp}$ works as follows:
\begin{align}
    u_{t'}: \paren{S_{t'} = \begin{bmatrix}  s_{(t'-1)\cdot W + 1}\\ s_{(t'-1)\cdot W + 2} \\ \vdots \\ s_{t'\cdot W}\end{bmatrix}, \; \theta} \mapsto S_{t'+1} \coloneqq \begin{bmatrix} f_{t'W + 1}( s_{t'\cdot W}, \theta) \\ f_{t'W + 2}(f_{t'W + 1}( s_{t'\cdot W}, \theta), \theta) \\ \vdots\\  f_{(t'+1)W}(\ldots(f_{t'W + 2}(f_{t'W + 1}( s_{t'\cdot W}, \theta), \theta)\ldots, \theta).\end{bmatrix}
\end{align}
Here, the mega transition only look at the last time step's original state in $S_{t'}$ and unroll it forward $W$ steps to form the next mega state.

The mega loss function for step $t'$ is defined as $\ell^s_{t'}: \Real^{Wp} \rightarrow \Real$:
\begin{align}
    \ell^s_{t'}((s_{(t'-1)\cdot W + 1}, \ldots, s_{t'\cdot W})) = \frac{1}{W} \sum_{i=1}^WL^s_{(t'-1)W + i}(s_{(t'-1)W + i}),
\end{align}
which simply averages each original inner state's loss within the truncation window.

Here, when we consider a truncation window of size $W$ in the original graph, it is equivalent to a truncation window of size 1 in the mega UCG. To apply Corollary \ref{corollary:w=1} to this graph, we only need to make sure Assumption \ref{assumption:linearity} holds for this mega graph. Here we see that, by Assumption \ref{assumption:linearity} on this original graph, we have

\begin{align*}
    & \ell_{t'}(\theta + v_1, \ldots, \theta + v_{t'}) - \ell_{t'}(\theta - v_1, \ldots, \theta - v_{t'}) \\
    =& \frac{1}{W} \sum_{j=1}^W \brck{L_{W(t'-1) + j}([\theta + v_1]_{\times W}, \ldots, [\theta + v_{t'}]_{\times j}) - L_{W(t'-1) + j}([\theta - v_1]_{\times W}, \ldots, [\theta - v_{t'}]_{\times j})} \\
    =& \frac{1}{W} \sum_{j=1}^W  2 \sum_{i'=1}^{t'} [v_{i'}]^\top g^{W(t'-1) + j}_{W, i'}  \\
    =&  2 \sum_{i'=1}^{t'} [v_{i'}]^\top \paren{ \frac{1}{W} \sum_{j=1}^W g^{W(t'-1) + j}_{W, i'}}
\end{align*}

Thus for this mega graph, the set of vectors satisfying Assumption \ref{assumption:linearity} is $g^{t'}_{i'} = \paren{ \frac{1}{W} \sum_{j=1}^W g^{W(t'-1) + j}_{W, i'}}$.

With the mega graph satisfying the assumption in Corollary \ref{corollary:w=1}, we see that when the truncation window is of size $1$ in the mega graph, the gradient estimator $\NRES$ on this mega graph has smaller trace of covariance than other $\GPES_K=c$ estimators on this mega graph. Here, running $\NRES$ on the mega-graph is equivalent to runing $\NRES$ on the original graph, while the gradient estimator $\GPES_{K=c}$ on the mega-graph is equivalent to the gradient estimator $\GPES_{K=cW}$ on the original UCG. Thus we have proved that the $\NRES$ gradient estimator on the original graph has the smallest total variance among all $\GPES_{K=cW}$ estimators, thus completing the proof.

\end{proof}

\subsection{Proof of Theorem \ref{thm:nres_fulles_comparison}}
\begin{theorem}
Under Assumption \ref{assumption:linearity}, for any $W$ that divides $T$, if
{\small
\begin{align}
\label{eq:direction_assumption_app}
    \sum_{k=1}^{T/W} \norm{\sum_{t=W\cdot (k-1) + 1}^{W\cdot k} g^t}_2^2 \le \frac{d+1}{d+2}\norm{\sum_{j=1}^{T/W} \sum_{t=W\cdot (k-1) + 1}^{W\cdot k} g^t}_2^2,
\end{align}
}
then {$\tr(\Cov(\frac{1}{T/W}  \sum\limits_{i=1}^{T/W} \NRES_i(\theta)) \le \tr(\Cov(\FullES(\theta))$} where $\NRES_i(\theta)$ are iid $\NRES$ estimators.
\end{theorem}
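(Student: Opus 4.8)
The plan is to obtain closed forms for $\tr(\Cov(\FullES(\theta)))$ and $\tr(\Cov(\NRES(\theta)))$ under Assumption~\ref{assumption:linearity}, then use the i.i.d.\ averaging identity to get $\tr(\Cov(\frac{1}{T/W}\sum_i \NRES_i(\theta)))$, and finally check that the target inequality reduces to the hypothesis \eqref{eq:direction_assumption_app} by elementary algebra. The computation parallels the proof of Theorem~\ref{thm:variance}, only simpler, because both $\FullES$ and $\NRES$ draw a single Gaussian vector per episode.

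\emph{Step 1 (simplify both estimators).} Since $\NRES=\GPES_{K=T}$ samples only one noise $\bepsilon$, Assumption~\ref{assumption:linearity} turns each antithetic loss difference $L_{\btau+j}([\theta+\bepsilon]_{\times(\btau+j)})-L_{\btau+j}([\theta-\bepsilon]_{\times(\btau+j)})$ into $2\bepsilon^\top g^{\btau+j}$, so $\NRES(\theta)=\frac{1}{\sigma^2 W}\paren{\bepsilon^\top\sum_{j=1}^W g^{\btau+j}}\bepsilon$. Writing $h_k\coloneqq\sum_{t=W(k-1)+1}^{Wk} g^t$ (exactly the vectors appearing in \eqref{eq:direction_assumption_app}) and noting $\btau/W+1\sim\Unif\{1,\dots,T/W\}$, this is $\NRES(\theta)=\frac{1}{\sigma^2W}(\bepsilon^\top h_{\btau/W+1})\bepsilon$. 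The same manipulation on \eqref{eq:fulles_estimator} gives $\FullES(\theta)=\frac{1}{\sigma^2T}(\bepsilon^\top G)\bepsilon$ with $G\coloneqq\sum_{t=1}^T g^t=\sum_{k=1}^{T/W}h_k$.

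\emph{Step 2 (Gaussian moments and the two total variances).} For $\bepsilon\sim\calN(\bzero,\sigma^2 I_{d\times d})$ and any fixed $v$ I would use $\E[(\bepsilon^\top v)\bepsilon]=\sigma^2 v$ and $\E[(\bepsilon^\top v)^2\|\bepsilon\|_2^2]=(d+2)\sigma^4\|v\|_2^2$; the second follows from rotational symmetry, reducing to scalar Gaussian moments $\E z_1^4=3$ and $\E z_1^2 z_i^2=1$ (this is the one spot needing a short Isserlis/Wick argument). These give $\E\NRES(\theta)=\E\FullES(\theta)=\tfrac1T G$, and, conditioning on $\btau$ before averaging, $\E\|\NRES(\theta)\|_2^2=\tfrac{d+2}{WT}\sum_{k=1}^{T/W}\|h_k\|_2^2$ and $\E\|\FullES(\theta)\|_2^2=\tfrac{d+2}{T^2}\|G\|_2^2$. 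Subtracting the squared means, $\tr(\Cov(\FullES(\theta)))=\tfrac{d+1}{T^2}\|G\|_2^2$ and $\tr(\Cov(\NRES(\theta)))=\tfrac{d+2}{WT}\sum_k\|h_k\|_2^2-\tfrac1{T^2}\|G\|_2^2$. Because the $\NRES_i(\theta)$ are i.i.d., $\tr(\Cov(\frac{1}{T/W}\sum_{i=1}^{T/W}\NRES_i(\theta)))=\tfrac{W}{T}\tr(\Cov(\NRES(\theta)))=\tfrac{d+2}{T^2}\sum_k\|h_k\|_2^2-\tfrac{W}{T^3}\|G\|_2^2$.

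\emph{Step 3 (reduce to the hypothesis), and the main difficulty.} After multiplying through by $T^2$, the desired inequality is equivalent to $(d+2)\sum_k\|h_k\|_2^2\le\paren{d+1+\tfrac{W}{T}}\|G\|_2^2$. Hypothesis \eqref{eq:direction_assumption_app} states $\sum_k\|h_k\|_2^2\le\tfrac{d+1}{d+2}\|G\|_2^2$, i.e.\ $(d+2)\sum_k\|h_k\|_2^2\le(d+1)\|G\|_2^2$, which is strictly stronger since $W/T>0$; hence it suffices. I do not anticipate a genuine obstacle — the work is essentially bookkeeping of the $1/W$ and $1/T$ prefactors together with the one fourth-moment identity — but two points deserve care: (i) one should double-check that $\FullES$ and $\NRES$ share the same mean, so the $\tr(\Cov)$ gap comes purely from the second-moment terms; and (ii) the exact comparison threshold is $d+1+W/T$, so \eqref{eq:direction_assumption_app} is in fact slightly conservative.
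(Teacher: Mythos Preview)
Your proposal is correct and follows essentially the same approach as the paper: compute the exact total variances of $\FullES$ and $\NRES$ under Assumption~\ref{assumption:linearity}, scale the latter by $W/T$ for the i.i.d.\ average, drop the nonnegative $\tfrac{W}{T^3}\|G\|_2^2$ term, and observe that the hypothesis is precisely $(d+2)\sum_k\|h_k\|_2^2\le(d+1)\|G\|_2^2$. Your route is in fact slightly more self-contained than the paper's, which obtains the $\NRES$ variance by invoking the general $\GPES_K$ formula (Theorem~\ref{thm:variance}) together with the mega-UCG reduction (Corollary~\ref{corollary:w>1}), whereas you compute it directly from the single-noise fourth moment $\E[(\bepsilon^\top v)^2\|\bepsilon\|_2^2]=(d+2)\sigma^4\|v\|_2^2$; your observation that the true threshold is $d+1+W/T$ (so the stated hypothesis is slightly conservative) is also correct.
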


\begin{proof}
We first analytically express the $\FullES$ gradient estimator. Under the Assumption \ref{assumption:linearity},

\begin{align}
    & \FullES(\theta) \\
    =& \frac{1}{2\sigma^2} \big [ L([\theta + \bepsilon]_{\times T}) - L([\theta - \bepsilon]_{\times T}) \big] \bepsilon \\
    =& \frac{1}{\sigma^2}  \bepsilon \bepsilon^\top \paren{\frac{1}{T} \sum_{t=1}^T g^t}
\end{align}

Because $\E_{\bepsilon} \bepsilon \bepsilon^T \bepsilon \bepsilon^T = (d+2) \sigma^2 I_{d\times d}$, we can see that 
\begin{align}
    \tr(\Cov[\FullES(\theta)]) =& (d+2) \norm{\frac{1}{T} \sum_{t=1}^T g^t}_2^2 -  \norm{\frac{1}{T} \sum_{t=1}^T g^t}_2^2 \\
    =& (d+1) \norm{\frac{1}{T} \sum_{t=1}^T g^t}_2^2
\end{align}

From Theorem \ref{thm:variance} and Corollary \ref{corollary:w>1}, we can see that the trace of covariance for a single $\NRES$ when $W \ge 1$ is given by
\begin{align}
    \tr(\Cov(\NRES(\theta)) = \frac{(d+2)}{T/W} \sum_{k=1}^{T/W} \paren{ \norm{\frac{1}{W}\sum_{t=W\cdot(k-1) + 1}^{W\cdot k} g^t}_2^2} - \norm{\frac{1}{T} \sum_{t=1}^T g^t}_2^2
\end{align}

When we average over $T/W$ \textit{i.i.d.} $\NRES$ workers, the trace of covariance of the average is scaled by $\frac{1}{T/W}$:
{
\allowdisplaybreaks
\begin{align}
& \tr(\Cov(\frac{1}{T/W}  \sum\limits_{i=1}^{T/W} \NRES_i(\theta))\\
=& \frac{1}{T/W} \tr(\Cov(\NRES(\theta))\\
=& \frac{1}{T/W} \frac{(d+2)}{T/W} \frac{1}{W^2} \sum_{k=1}^{T/W} \paren{ \norm{\sum_{t=W\cdot(k-1) + 1}^{W\cdot k} g^t}_2^2} - \frac{1}{T/W}\norm{\frac{1}{T} \sum_{t=1}^T g^t}_2^2 \\
\le& \frac{(d+2)}{T^2} \sum_{k=1}^{T/W} \paren{ \norm{\sum_{t=W\cdot(k-1) + 1}^{W\cdot k} g^t}_2^2} \\
\le& \frac{(d+2)}{T^2} \frac{d+1}{d+2}\norm{\sum_{k=1}^{T/W} \sum_{t=W\cdot (k-1) + 1}^{W\cdot k} g^t}_2^2 \textrm{(using the condition in Theorem~\ref{thm:nres_fulles_comparison})}\\
=& \frac{(d+1)}{T^2} \norm{\sum_{k=1}^{T/W} \sum_{t=W\cdot (k-1) + 1}^{W\cdot k} g^t}_2^2 \\
=& \frac{(d+1)}{T^2} \norm{\sum_{t=1}^T g^t}_2^2 \\
=& (d+1) \norm{\frac{1}{T} \sum_{t=1}^T g^t}_2^2 \\
=& \tr(\Cov[\FullES(\theta)])
\end{align}
}

This completes the proof.

\end{proof}

\section{Experiments}
In this section, we first provide additional experiment results for each of the three applications we consider in the main paper. We next describe the experiment details and hyperparameters used for these experiments. We finally describe the computation resources needed to run these experiments and our implementation.

\subsection{Additional Experiment Results}
\label{app_subsec:experiment_result}

\subsubsection{Learning dynamical system parameters}
\paragraph{Visualizing the Lorenz system parameter learning loss surface.}

We have plotted the training loss surface of the Lorenz system parameter learning problem in the left panel of Figure~\ref{fig:lorenz}(a) in the main paper. Here we provide a larger version of the same figure in Figure~\ref{app_fig:lorenz_loss_surface}(a). In addition, we plot the losses along the line segment connecting the groundtruth $\theta_{\mathrm{gt}}$ and $\theta_{\mathrm{init}}$ in Figure~\ref{app_fig:lorenz_loss_surface}(b). (Notice that this is just a demonstration of the sensitivity of the loss surface; the optimization of $\theta$ are not constrained to this line segment.) We see that the loss surface have many local fluctuations and suboptimal local minima. In this case, the direction of the negative gradient is often non-informative for global optimization --- it would frequently point in a direction opposite to the global direction of loss decrease.
\begin{figure}[h]
\centering
    \begin{overpic}[width=0.33\textwidth,percent]{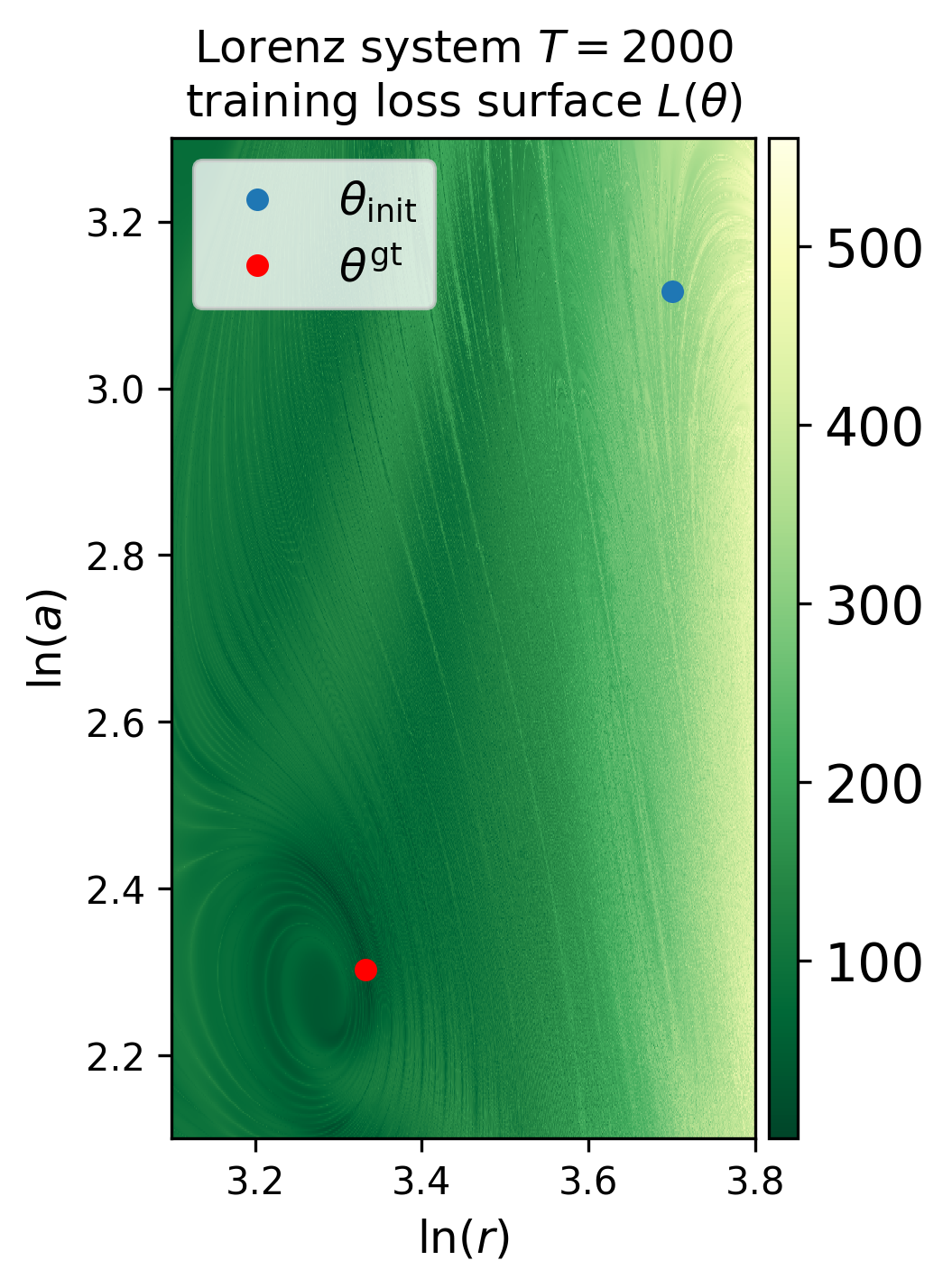}
        \put(0.0,90){\textbf{(a)}}
    \end{overpic}
    \;\;
    \raisebox{0.1\height}{
        \begin{overpic}[width=0.43\textwidth,percent]{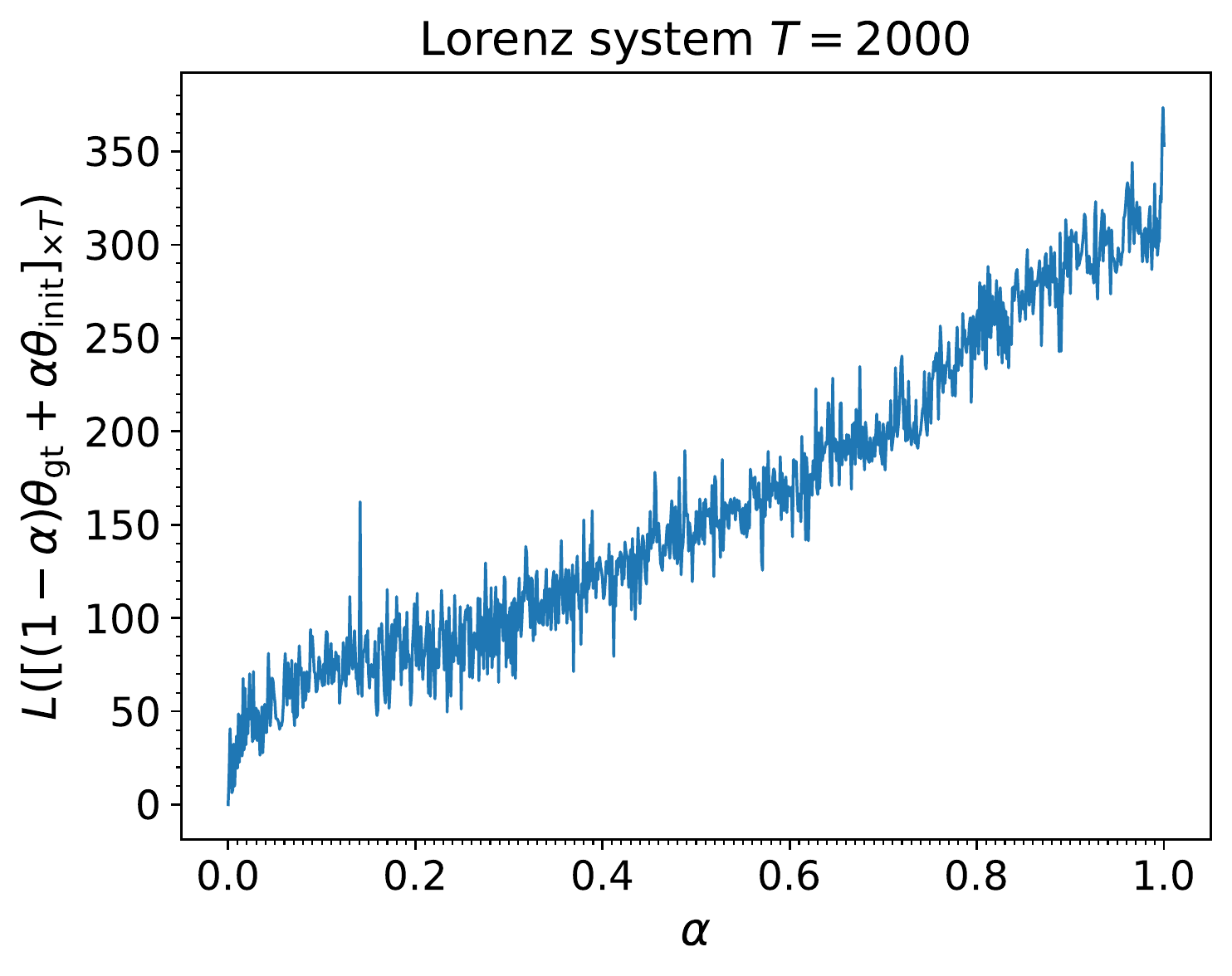}
            \put(10.0,75){\textbf{(b)}}
        \end{overpic}
    }
    \caption{(a) The extremely locally sensitive training loss surface in the Lorenz system parameter learning problem. (b) The training loss on the line segment in the parameter space connecting the groundtruth $\theta_{\mathrm{gt}}$ and the initialization $\theta_{\mathrm{init}}$. Because of the high sharpness and existence of many suboptimal local minima, automatic differentiation methods are ineffective to optimize this loss.}
    \label{app_fig:lorenz_loss_surface}
\end{figure}

\paragraph{Measuring progress on the Lorenz system's loss surface.}
As we have seen in Figure~\ref{app_fig:lorenz_loss_surface} that the training loss surface is highly non-smooth, it is difficult to visually compare different gradient estimation methods' performance through their non-smoothed training loss convergences as the losses have significant fluctuation for all methods. Instead, we measure the test loss (denoted by \texttt{loss} in Figure~\ref{fig:lorenz}(b)) instead of the non-smoothed training loss by sampling the random initial state $s_0 \sim \calN((1.2, 1.3, 1.6),\, 0.01\,I_{3\times3})$. Because this test loss considers a distribution of initial states, it is much smoother and helps with better visual comparisons. Besides, the test loss is a better indicator of predictive generalization to novel initial state conditions.

\paragraph{AD methods perform worse than ES methods.}
In Figure~\ref{fig:lorenz}(b) in the main paper, we have shown that $\NRES$ outperforms other evolution strategies baselines on the Lorenz system learning task. Here we additionally include the performance of four popular automatic differentiation methods $\BPTT$, $\TBPTT$, $\UORO$, and $\DODGE$ (discussed in Section~\ref{app_sec:related_work} in the Appendix) on the same task in Figure~\ref{app_fig:lorenz_ad_es_comparison}. We notice that these 4 AD methods all perform worse than the 4 ES methods. Thus, $\NRES$ is still the best among all the methods considered in this paper.

\begin{figure}[h]
    \centering
    \includegraphics[width=0.75\textwidth]{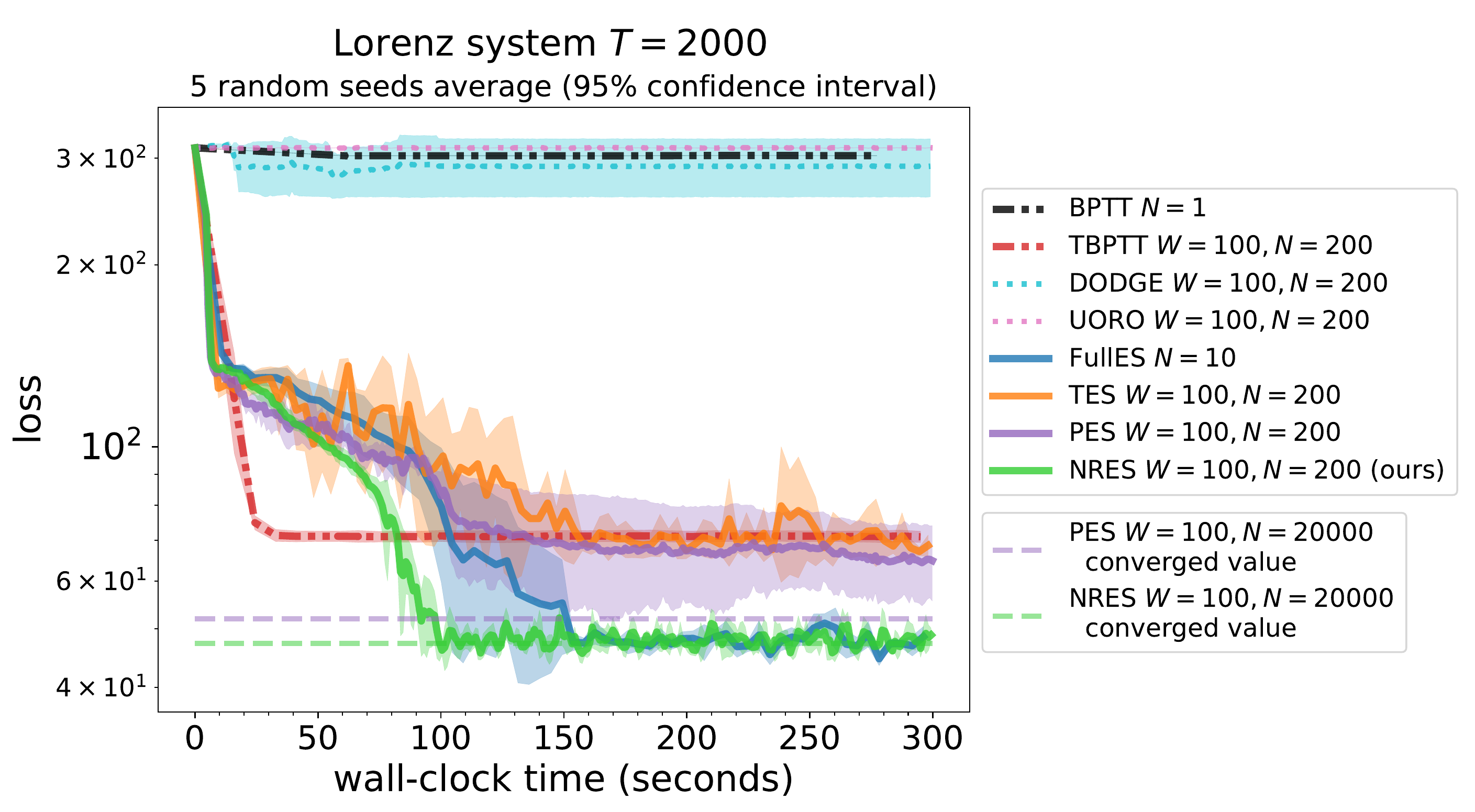}
    \caption{Different automatic differentiation and evolution strategies methods' loss convergence on the Lorenz system parameter learning problem. AD methods ($\BPTT$, $\TBPTT$, $\UORO$, $\DODGE$) all perform worse than ES methods, justifying the need for ES methods on problems with chaotic loss surfaces. Among all the methods, our proposed method $\NRES$ converges the fastest.}
    \label{app_fig:lorenz_ad_es_comparison}
\end{figure}

\subsubsection{Meta-training learned optimizers}
\label{app_subsubsec:lopt}

\paragraph{AD methods perform worse than ES methods.} In Figure~\ref{fig:lopt}(b) in the main paper, we have shown that $\NRES$ outperforms other ES baselines on the task to meta-learn a learned optimizer model to train a 3-layer MultiLayer Perceptron (MLP) on the Fashion MNIST dataset. Here we additionally include the performance of four popular automatic differentiation methods $\BPTT$, $\TBPTT$, $\UORO$, and $\DODGE$ (discussed in Section~\ref{app_sec:related_work} in the Appendix) on the same task in Figure~\ref{app_fig:lopt_ad_es_comparison_fashion_mnist}. It is worth noting that we focus on the training loss range $[0.5, \ln(10)]$ in Figure~\ref{fig:lopt}(a) in the main paper to make the comparisons among different ES methods more visually obvious. ($\ln(10)$ is chosen because random guessing on Fashion MNIST yields a validation loss of $\ln(10)$.) However, the non-smoothed training loss of the initialization $\theta_{\textrm{init}}$ for this problem is at around $84.4$. Thus we show the loss range of $[0.5, 100]$ on the y-axis in Figure~\ref{app_fig:lopt_ad_es_comparison_fashion_mnist}. Regardless of the y-axis range choice, our proposed method $\NRES$ performs the best among all the gradient estimation methods considered in this paper.

\clearpage
\begin{figure}[h]
    \centering
    \includegraphics[width=0.5\textwidth]{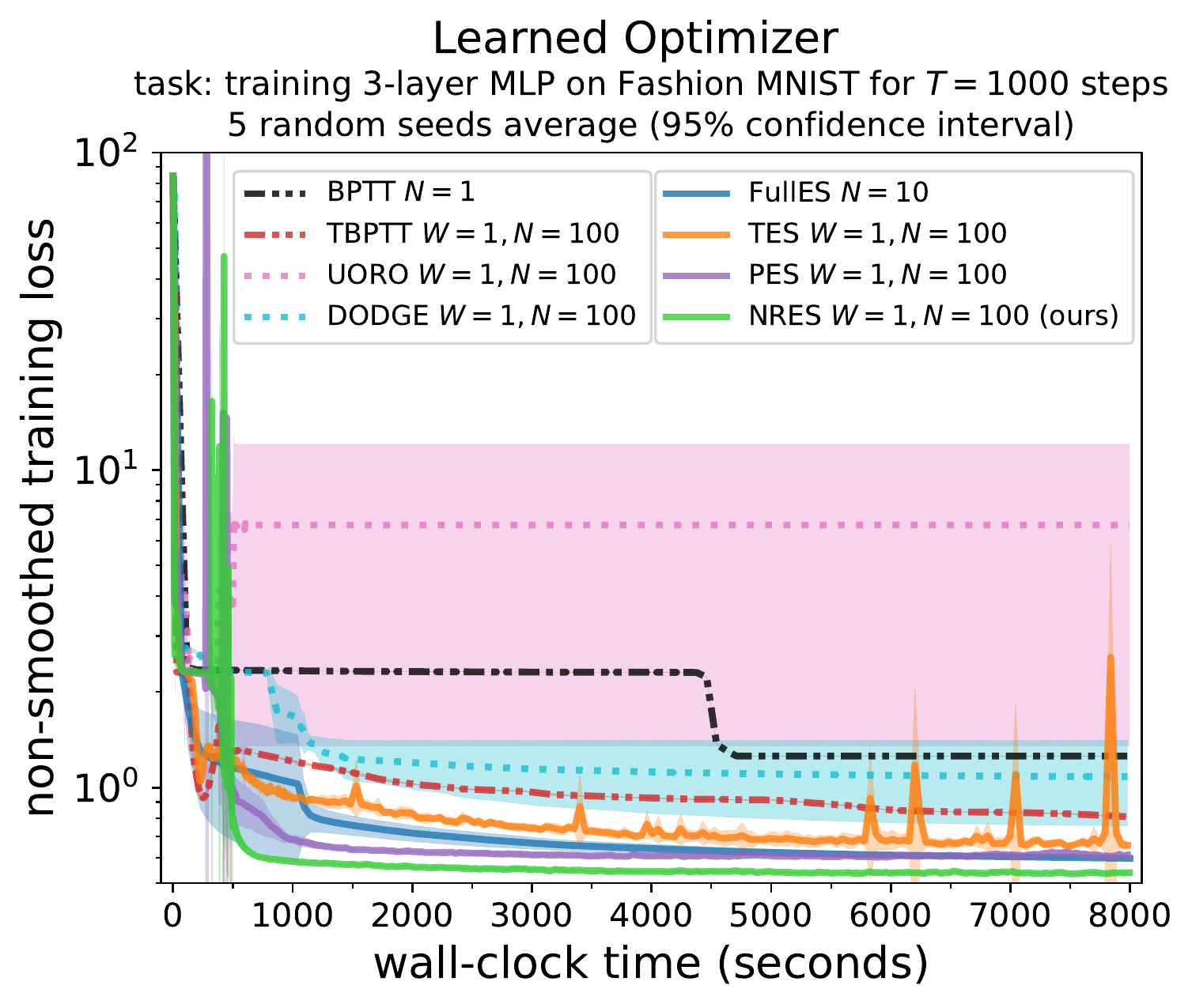}
    \caption{Different automatic differentiation and evolution strategies methods' loss convergence on the meta-learning a learned optimizer to train a 3-layer MLP on Fashion MNIST for $T=1000$ steps. AD methods ($\BPTT$, $\TBPTT$, $\UORO$, $\DODGE$) all perform worse than ES methods, justifying the need for ES methods on problems with such highly-sensitive loss surfaces. Among all the methods, our proposed method $\NRES$ converges the fastest. We notice that the two best performing methods $\PES$ and $\NRES$ experience some loss fluctuation right below $500$ seconds. Because such fluctuation never occurs for any other tasks we experiment with, we believe this is a property of this specific application itself but not the problem of the two methods.}
    \label{app_fig:lopt_ad_es_comparison_fashion_mnist}
\end{figure}

\paragraph{Results on the learned optimizer task with a higher dimension.} In Section~\ref{exp:lopt}, the learned optimizer we have considered has a parameter dimension $d=1762$. Here we compare ES gradient estimators on 
the same meta-training task but with an $11\times$ larger learnable parameter ($\theta$) dimension in Figure~\ref{app_fig:lopt_es_large_d_long_T}(a). To increase the parameter dimension, we increase the width of the multilayer perceptron used by the learned optimizer. On this higher-dimensional problem, $\NRES$ can still provide a $3.8\times$ wall clock time speed up over $\PES$ and a $9.7\times$ wall clock time speed up over $\FullES$ to reach a loss value which $\NRES$ reaches early on during the meta-training.

\paragraph{Results on the learned optimizer task with a longer horizon.} In Section~\ref{exp:lopt}, the learned optimizer we have considered has an inner-problem training steps of $T=1000$. Here we make the problem horizon $10 \times $ longer and show the training convergence of different ES estimators in Figure~\ref{app_fig:lopt_es_large_d_long_T}(b). For this problem, $\NRES$ still achieves a $2.1\times$ wall clock speed up over PES and a $3.5 \times$ speed up over FullES to reach a loss value which $\NRES$ reaches early on during the meta-training. 

\begin{figure}[h]
     \centering
    \begin{overpic}[width=0.45\textwidth, percent]{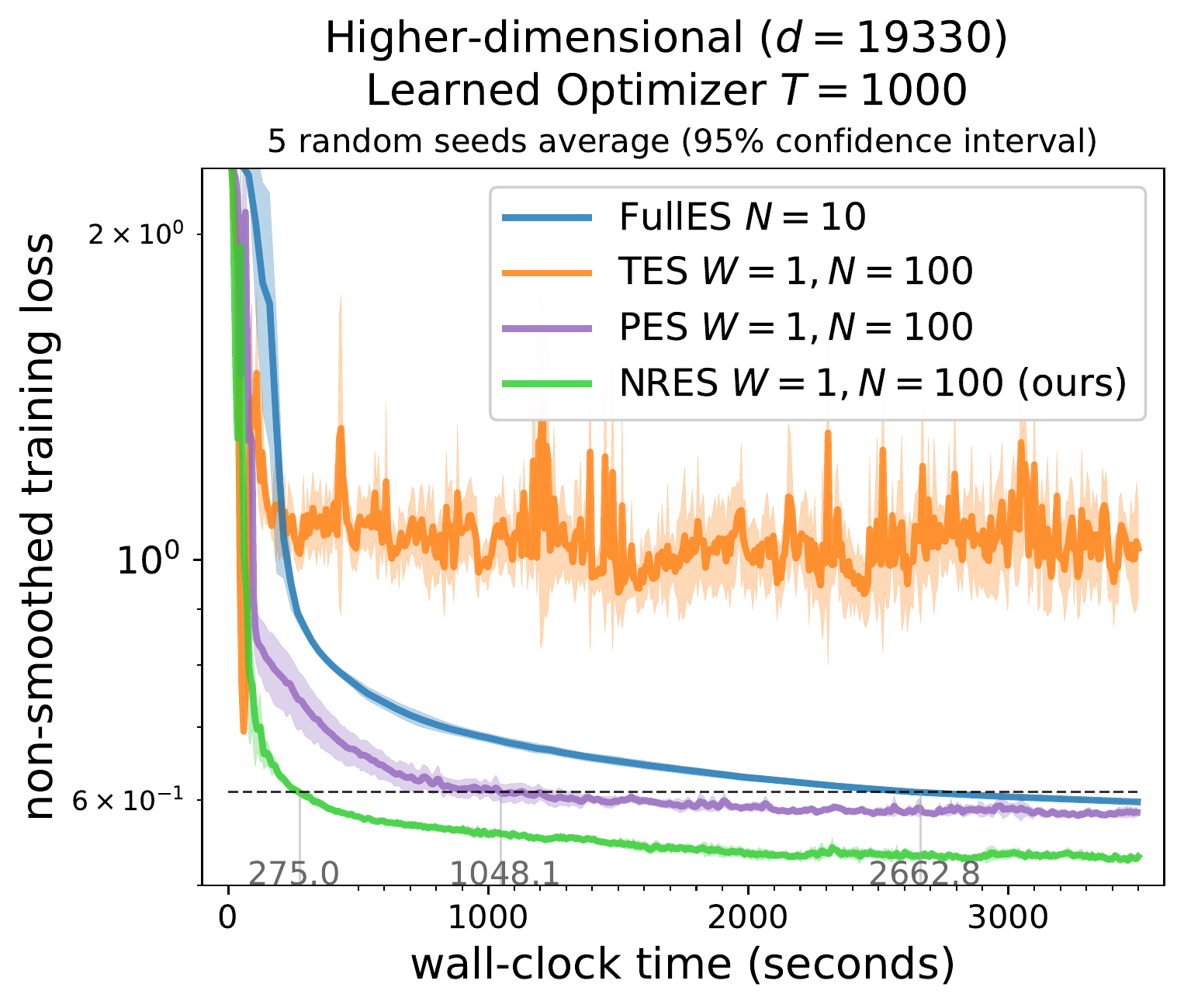}
     \put(7.0,74){\textbf{(a)}}
    \end{overpic}
    \begin{overpic}[width=0.45\textwidth, percent]{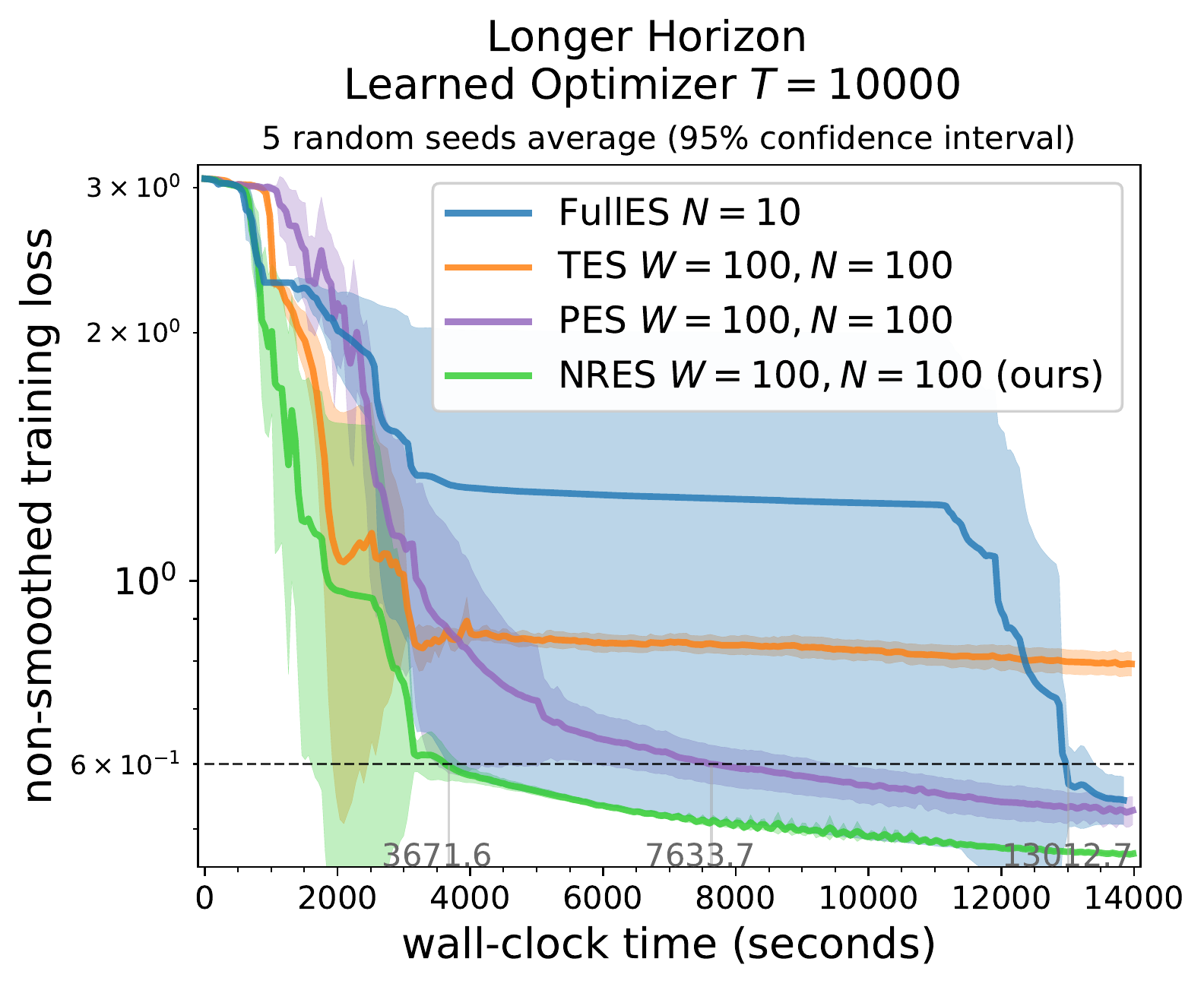}
     \put(7.0,74){\textbf{(b)}}
    \end{overpic}
    \caption{(a) Comparing different ES gradient estimators' training loss convergence (in wall-clock time) on a $11 \times$ higher-dimensional learned optimizer task ($d=19330$) than Figure~\ref{fig:lopt}(a) in the main paper. (b) Comparing different ES gradient estimators' training loss convergence (in wall-clock time) on a learned optimizer task with $10\times$ longer horizon $T=10000$. For both cases $\NRES$ still converges much faster than other methods.}
    \label{app_fig:lopt_es_large_d_long_T}
\end{figure}

\paragraph{Results on another learned optimizer task.} In addition to the learned optimizer task shown in the main paper, we experiment with another task from the training task distribution of VeLO~\cite{metz2022velo} to further compare the performance of different evolution strategies methods. Here we meta-learn the learned optimizer model from~\cite{metz2019understanding} to train a 4-layer fully Convolutional Neural Network on CIFAR-10 \citep{krizhevsky2009learning}. As shown in Figure~\ref{app_fig:lopt_es_comparison_cifar}, we see that $\TES$ fails to converge to to the same loss level as the other three methods. Among the rest of the methods, $\NRES$ can achieve a speed up of more than $2\times$ and $5\times$ over $\PES$ and $\FullES$ respectively to reach the non-smoothed training loss of $1.55$ given perfect parallel implementations of each method.
\begin{figure}[h]
    \centering
    \includegraphics[width=0.5\textwidth]{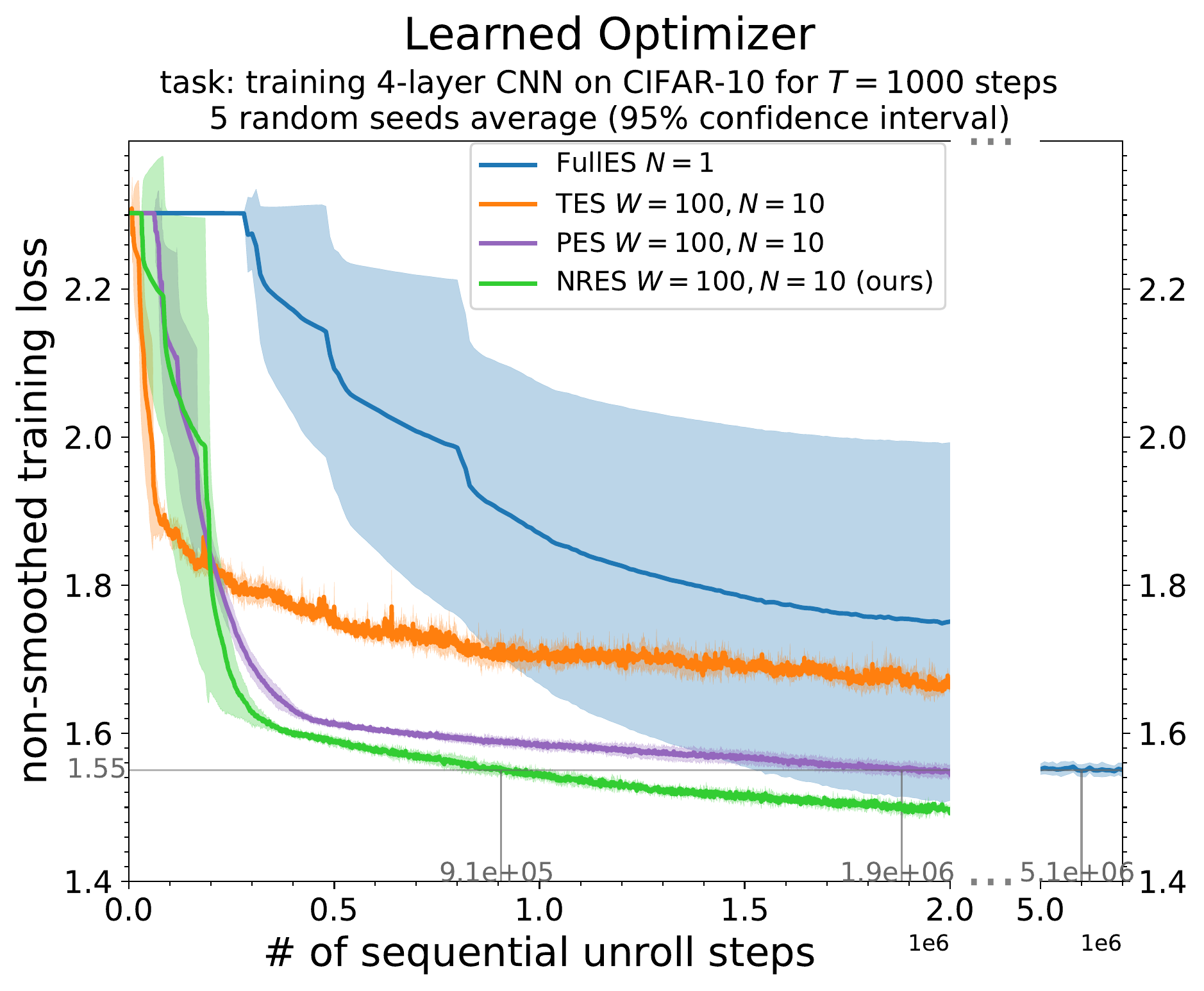}
    \caption{Different automatic differentiation and evolution strategies methods' loss convergence on the meta-learning learned optimizer to train a 4-layer fully convolutional neural network on CIFAR-10 for $T=1000$ steps. Because $\FullES$ takes a lot more sequential steps than $\PES$ and $\NRES$ to reach the loss value of $1.55$, we make the x-axis skip the values between $[2.0 \times 10^6, 5.0 \times 10^6]$ to show the number of steps $\FullES$ reaches the same value. Among all the ES methods, our proposed method $\NRES$ converges the fastest by using the smallest number of sequential unroll steps to reach the same loss value.}
    \label{app_fig:lopt_es_comparison_cifar}
\end{figure}

\vspace{-0.5em}
\subsubsection{Reinforcement Learning}
\label{app_subsubsec:rl}

\paragraph{Total number of environment steps to solve the Mujoco tasks.} We show in Figure~\ref{fig:rl_reward_progression} that $\NRES$ can solve the two Mujoco tasks using the least number of sequential environment steps, which indicates $\NRES$ can solve the tasks using the shortest amount of wall clock time under perfect parallelization. Here we additionally show the \textit{total} number of environment steps used by each ES methods to solve the two tasks in Table~\ref{app_tab:rl_total_env_steps}. We see that $\NRES$ uses the least total number of environment steps and thus is the most sample efficient among the methods compared.
{
\setlength{\extrarowheight}{2pt} %
\begin{table}[h]
    \centering
    \small
    \caption{Total number of environment steps used by each ES method considered in this paper. $\TES$ is unable to solve either task, while $\PES$ struggles to solve the Half Cheetah task despite we allowing it to use a significantly larger number of environment steps. In contrast, $\NRES$ (ours) solve both tasks using the least number of total environment steps, being $26\%$ and $22\%$ more sample efficient than $\FullES$ for the two tasks respectively.}
    \label{app_tab:rl_total_env_steps}
    \begin{tabular}{r|c|c|c|c}
        & \multicolumn{4}{c}{total number of environment steps} \\
        & \multicolumn{4}{c}{used to solve the Mujoco task} \\
        & \multicolumn{4}{c}{{\scriptsize (averaged over 5 random seeds)}} \\
        \cline{2-5}
        Mujoco task & $\FullES$ & $\TES$ & $\PES$ & $\NRES$ (ours)\\
        \hline
        \hline
        Swimmer &  $1.50 \times 10^5$ &  not solved & $5.85 \times 10^5$ & $\mathbf{1.11 \times 10^5}$ \\
        \hline
        Half Cheetah & $7.54 \times 10^6$ & not solved & $> 3.60 \times 10^8$ & $\mathbf{5.81 \times 10^6}$ \\
    \end{tabular}
\end{table}
}

\paragraph{Results on non-linear policy learning on the Half Cheetah task.} In Section~\ref{exp:rl}, we compare ES gradient estimators on training linear policies on the Mujoco tasks. Here we compare these estimators' performance in training a non-linear ($d=726$) policy network on the Mujoco Half Cheetah task under a fixed budget of total number of environment steps in Figure~\ref{app_fig:rl_half_cheetah_mlp_policy}. Only our proposed method NRES solves the task under the computation budget.

\begin{figure}[h]
     \centering
     \includegraphics[width=0.5\textwidth]{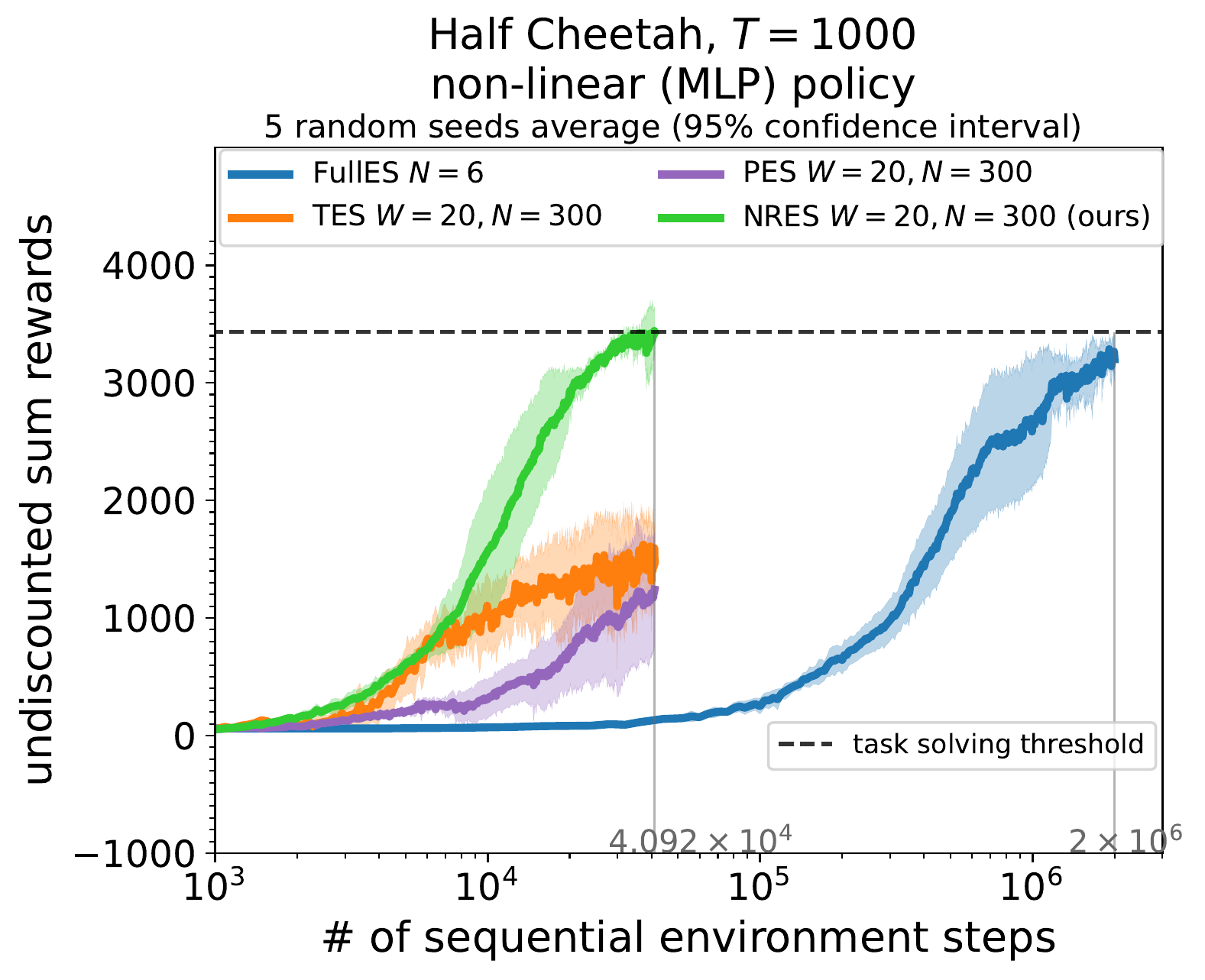}
    \caption{Comparing different ES methods’ performance in training the nonlinear MLP policy architecture used in TRPO \cite{pmlr-v37-schulman15} on the Mujoco Half Cheetah task under a fixed number of total environment steps. Among all the ES methods, only $\NRES$ solves the task within the budget while also offering close to $50\times$ parallelization speed up compared to $\FullES$.}
    \label{app_fig:rl_half_cheetah_mlp_policy}
\end{figure}

\paragraph{Ablation on the impact of $N$ (number of workers) on $\NRES$'s performance.} We perform an ablation study on the impact of the number of $\NRES$ workers on its performance in solving the Mujoco Swimmer task in Figure~\ref{app_fig:rl_swimmer_ablation_on_N}. Here, increasing $N$ can help $\NRES$ use fewer sequential steps to solve the task but at a larger per sequential step compute cost.
 \begin{figure}[h]
     \centering
    \includegraphics[width=0.48\textwidth, percent]{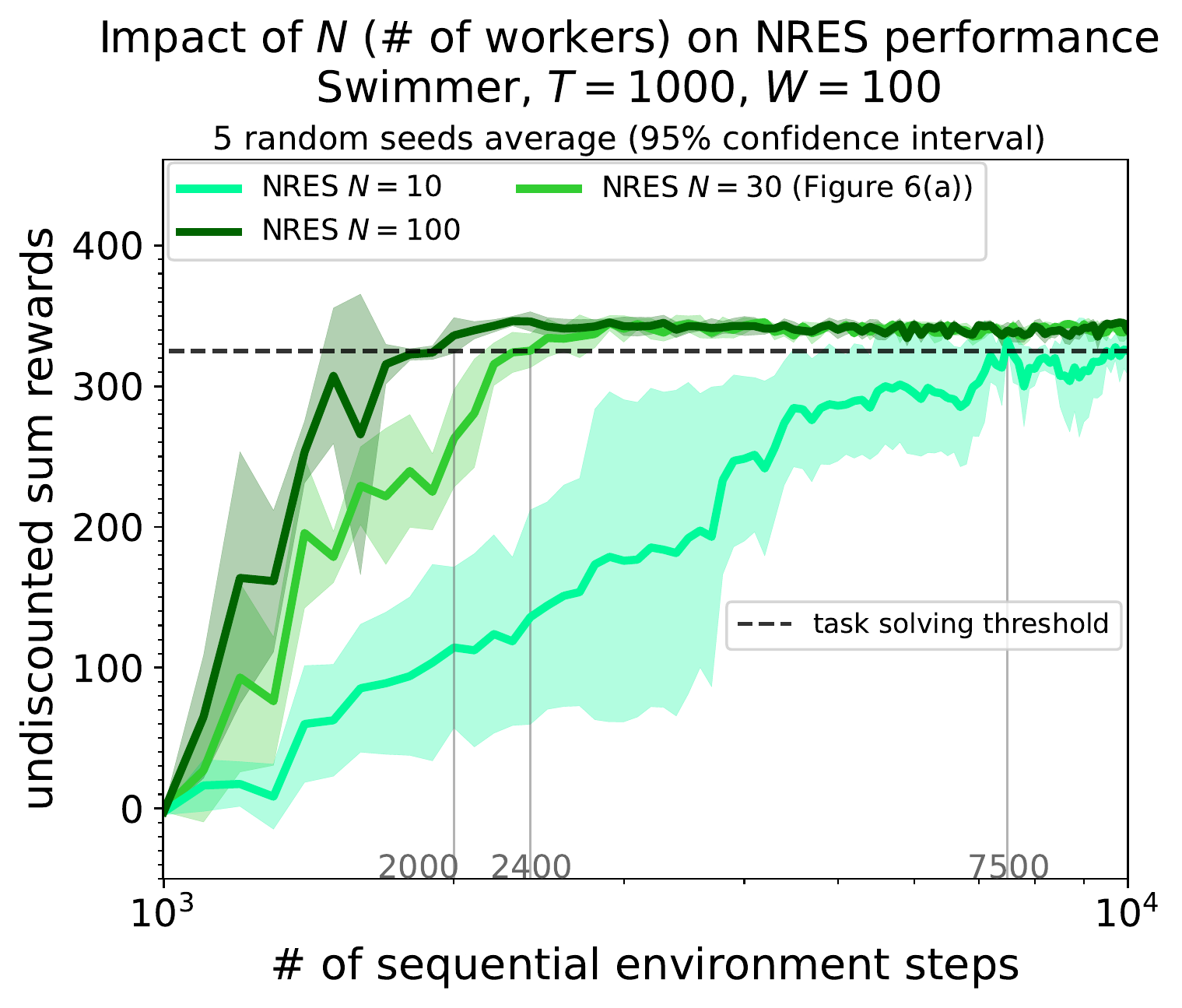}
    \caption{The impact of the number of parallel workers $N$ on $\NRES$'s performance on the Mujoco Swimmer task. We see that using more parallel workers can reduce the number of sequential environment steps needed to solve the task because the optimization benefits from a reduced gradient estimate's variance which scales with $1/N$.}
    \label{app_fig:rl_swimmer_ablation_on_N}
\end{figure}

\paragraph{Ablation on the impact of $\sigma$ on $\NRES$ and $\FullES$'s performance.} We perform an ablation study on the impact of the noise variance $\sigma^2$ on the performance of our proposed method $\NRES$ and $\FullES$ in solving the Mujoco Half Cheetah task in Figure~\ref{app_fig:rl_half_cheetah_ablation_on_sigma}. While setting $\sigma$ too small provides insufficient amount of smoothing and makes both methods fail to to solve the task, there still exists a range of larger $\sigma$ values under which both methods can solve the task successfully. For these cases, NRES always achieves a more than $50\times$ reduction in the number of sequential steps used over FullES.

\begin{figure}[h]
     \centering
    \begin{overpic}[width=0.48\textwidth, percent]{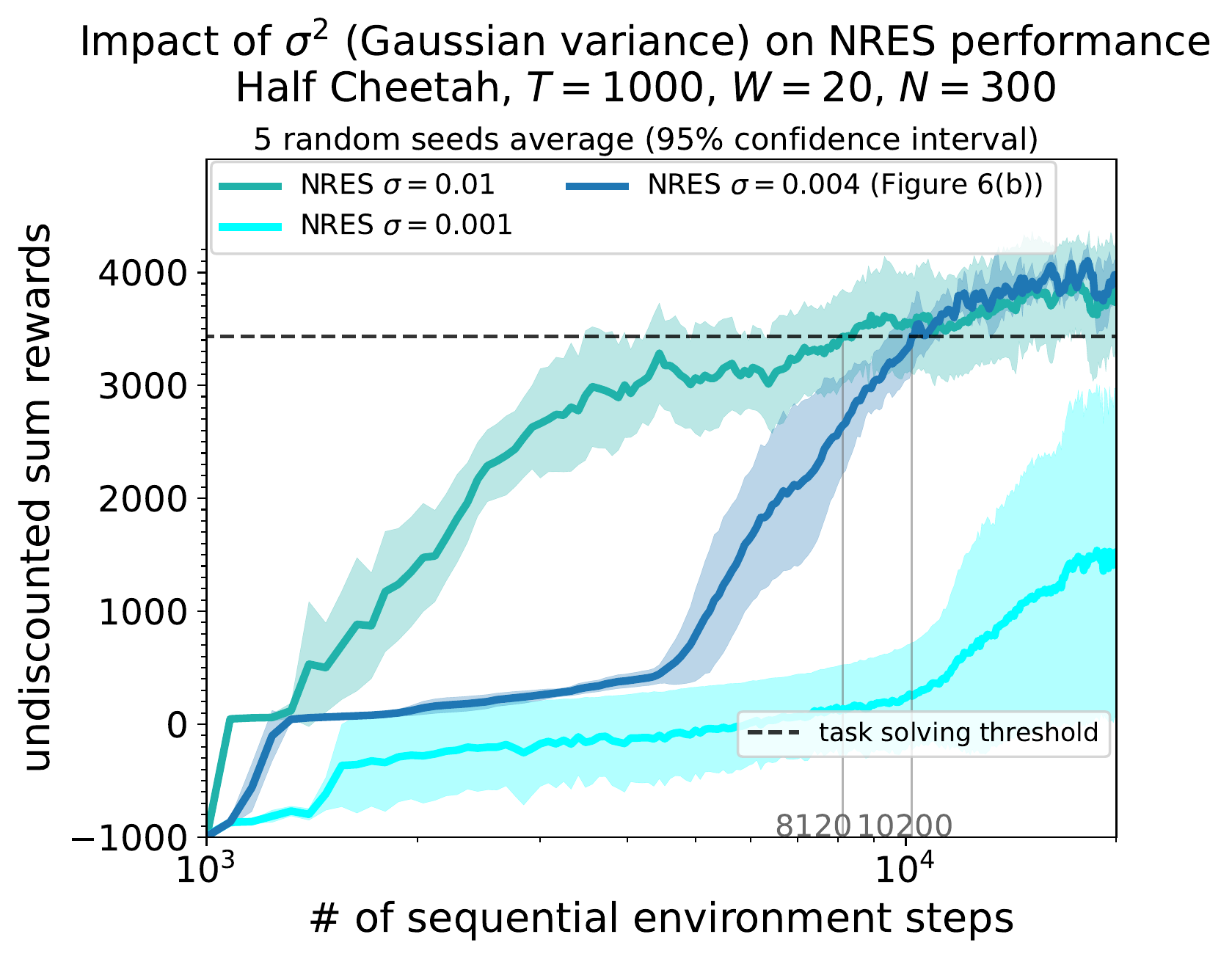}
     \put(-3.0,68){\textbf{(a)}}
    \end{overpic}
    \begin{overpic}[width=0.48\textwidth, percent]{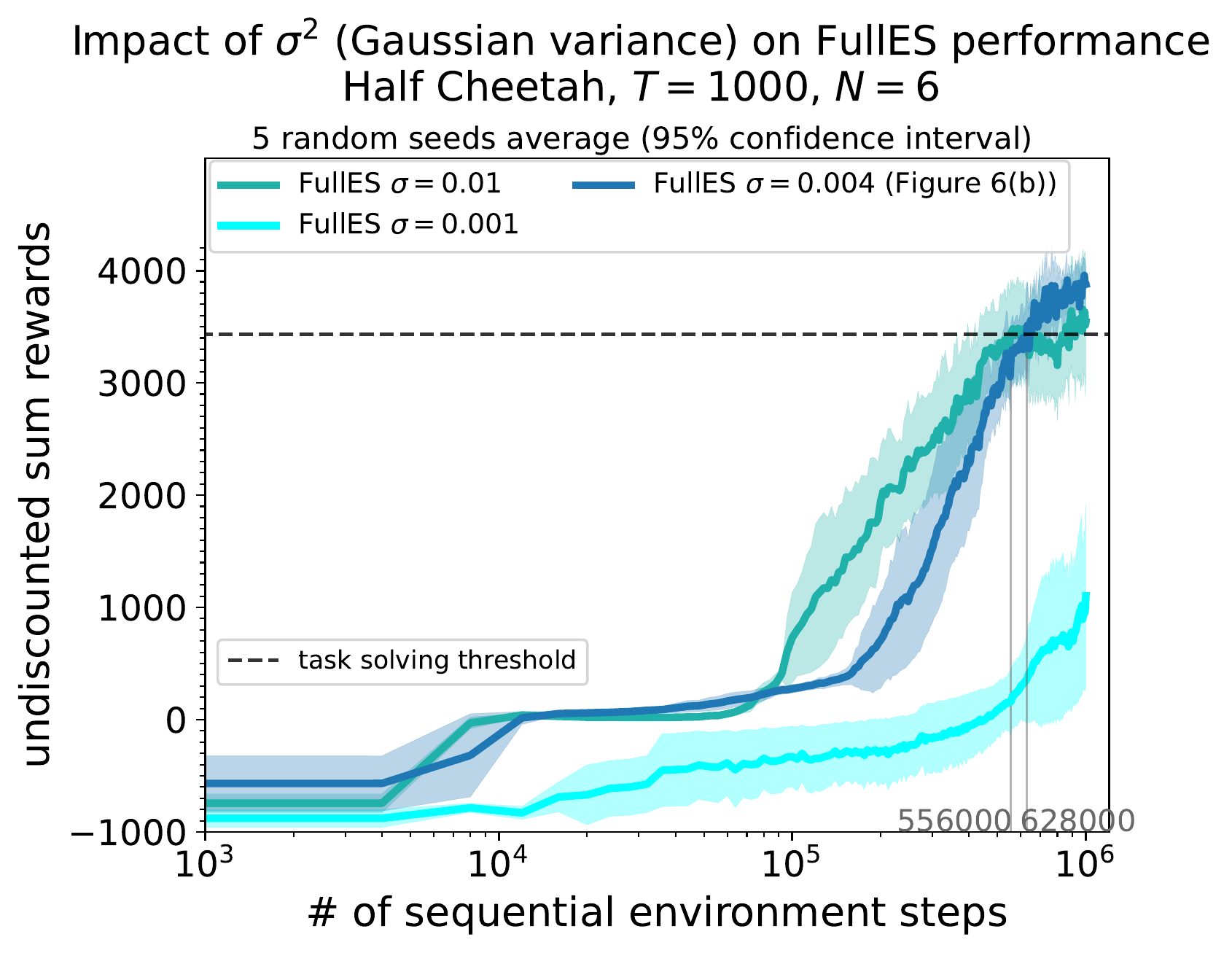}
     \put(-3.0,68){\textbf{(b)}}
    \end{overpic}
    \caption{The impact of the hyperparameter $\sigma^2$ (smoothing Gaussian distribution's isotropic variance) on (a)$\NRES$'s and (b)$\FullES$'s performance on the Mujoco Half Cheetah task. Although insufficient amount of loss smoothing ($\sigma = 0.001$) could lead to slow convergence, there exists a range of hyperparameters ($\sigma=0.004$ and $\sigma=0.01$) that can allow both ES methods to solve the task. For both such cases, $\NRES$ improves significantly over $\FullES$ by more than $50\times$.}
    \label{app_fig:rl_half_cheetah_ablation_on_sigma}
\end{figure}

\clearpage
\subsection{Experiment details and hyperparameters}

\label{app_subsec:hypermeters}
\subsubsection{Learning Lorenz dynamical system parameters}
On the Lorenz system parameter learning task, we use the vanilla SGD optimizer to update the parameter $\theta = (\ln(r), \ln(a))$ starting at $\theta_{\textrm{init}}=(\ln(r_{\textrm{init}}), \ln(a_{\textrm{init}})) = (3.7, 3.116)$ with the episode length at $T=2000$.

\begin{itemize}
    \item For non-online methods, we use only $N=1$ worker for $\BPTT$ to compute the non-smoothed true gradient since we only have one example sequence to learn from. For $\FullES$, we use $N=10$ workers.
    \item For all the online methods, we use $N=200$ workers and a truncation window of size $W=100$.  This relationship exactly matches the condition considered in Theorem \ref{thm:nres_fulles_comparison}, and the total amount of computation for $\FullES$ and $\NRES$ to produce one gradient estimate is roughly the same.
    \item For all the ES methods, we use the smoothing standard deviation $\sigma = 0.04$ chosen by first tuning it on $\FullES$.
\end{itemize}
  For each gradient estimation methods, we tune its SGD constant learning rate from the following set
\begin{align}
    \{10^{-3}, 3\times 10^{-4}, 10^{-4}, 3\times 10^{-5}, 10^{-5}, 10^{-6}, 10^{-7}, 10^{-8}, 10^{-9}, 10^{-10}, 10^{-11}, 10^{-12}, 10^{-13}\}.
\end{align}
Here we choose the learning rate to ensure that \textbf{1)} there is no NaN in gradients/loss value due to the exploding gradient from the loss surface; \textbf{2)} the learning rate doesn't result in a significant increase in the loss. The learning rate range is made so wide because for all the unbiased AD methods, \textit{these two issues would occur unless we use a trivially small learning rate}. Among the learning rates that pass these two requirements, we choose the best learning rate such that the optimization metric of interest decreases the fastest. For each method, the tuned learning rate used for Figure \ref{fig:lorenz}(a) is given in Table~\ref{tab:lorenz_hyperparameters}. As $\PES$ couldn't afford to use the same learning rate as $\NRES$ because of the unstable convergence (see Figure \ref{fig:lorenz}(a)), we hand-tuned a learning rate decay schedule for $\PES$ to maximally allow for its convergence.

\begin{table}[h!]
    \centering
    \small
    \caption{Learning rates (schedules) used for different gradient estimators on the Lorenz system parameter learning task}
    \begin{tabular}{l|r}
        method name & SGD learning rate (schedule) \\
            \hline
        $\BPTT \; N=1$ &  $10^{-8}$\\
        $\TBPTT \; N=200, W=100$ &  $3\times 10^{-4}$\\
        $\DODGE \; N=200, W=100$ & $10^{-10}$\\
        $\UORO \; N=200, W=100$ & $10^{-13}$ \\
        $\FullES \; N=10$ & $3\times 10^{-5}$ \\
        $\TES \; N=200, W=100$ & $3\times 10^{-4}$\\
        $\PES \; N=200, W=100$ & $10^{-5}$ for the first 1000 updates; $10^{-6}$ afterwards\\
        $\NRES \; N=200, W=100$ & $10^{-5}$\\
    \end{tabular}

    \label{tab:lorenz_hyperparameters}
\end{table}

To generate Figure \ref{fig:lorenz}(a), we use a constant SGD learning rate of $10^{-5}$ to isolate the variance properties of different $\GPES_K$ estimators.

\subsubsection{Meta-training learned optimizers}
\label{app_subsubsec:lopt_hyperparameters}

\paragraph{Meta-training learned optimizer to train 3-layer MLP on Fashion MNIST.}
The inner problem model is a three-layer GeLU-activated \citep{hendrycks2016gaussian} MLP with 32 dimensions for each hidden layer. We choose GeLU instead of ReLU because GeLU is infinitely differentiable. By this design choice, the meta-training loss function is infinitely differentiable with respect to $\theta$, making it possible to consider automatic Differentiation methods for this task. We downsize the dataset Fashion MNIST \citep{xiao2017fashion} to 8 by 8 images to make both the unrolled computation graph small enough to fit onto a single GPU and also to make training fast. We split the Fashion MNIST training's first 80\% image for inner problem training and the last 20\% images for inner problem validation. During the inner training, we randomly sample a training batch of size 128 to compute the training gradient using the cross entropy loss. After updating the inner model with the learned optimizer, we evaluate the meta-loss (our optimization objective) using the cross entropy on a randomly sampled validation batch of size 128. We fix the data sequence in inner training and inner evaluation to focus on the optimization problem on a single unrolled computation graph. One inner problem training lasts $T=1000$ steps of updates using the learned optimizer. We aim to minimize the average meta loss over $T=1000$ steps. 

\begin{itemize}
    \item For offline methods, we only need $N=1$ worker for $\BPTT$ to compute the gradient exactly. For $\FullES$, we tuned its number of workers among the set $N \in \{1, 3, 10\}$ and choose $N=10$. 
    \item For all the online methods, we use the smallest possible truncation window size of $W=1$ to test the online algorithms to the extreme. We use a total of $N=100$ workers for all the online methods. Therefore, to produce one gradient update, our proposed online ES method $\NRES$ only runs $2 \cdot N \cdot W / T = 100 \cdot 1 / 1000 = 20\%$ of an episode in total.
    \item  For all the ES methods, we use the smoothing standard deviation at $\sigma = 0.01$ inspired by the hyperparameter choice made in \cite{metz2019understanding}.
\end{itemize}

When meta-training the learned optimizer, we use Adam with the default parameters (other than a tuneable learning rate) to optimize the meta-training loss. For each gradient estimation method, we tune their Adam learning rate in the following range
\begin{align}
    \{3 \times 10^{-2}, 1\times 10^{-2}, 3\times 10^{-3}, 10^{-3}, 3\times 10^{-4}, 10^{-4}, 3\times 10^{-5}\}
\end{align}

After choosing a learning rate for a particular method over a specific random seed, we also check its performance on another seed to ensure its consistency (otherwise we lower the learning rate and try again). For each method, the tuned learning rate is given in Table~\ref{tab:lopt_fashion_mnist_hyperparameters}.
\begin{table}[h!]
    \centering
    \small
    \caption{Tuned Learning rates used for different AD and ES gradient estimators on the meta-training learned optimizer task to train 3-layer MLP on Fashion MNIST for $T=1000$ steps.}
    \begin{tabular}{l|r}
        method name & Adam learning rate (schedule)  \\
        \hline
        $\BPTT \; N=1$ &  $3 \times 10^{-4}$\\
        $\TBPTT \; N=100, W=1$ &  $10^{-2}$\\
        $\DODGE \; N=100, W=1$ & $10^{-5}$\\
        $\UORO \; N=100, W=1$ & $3 \times 10^{-5}$ \\
        $\FullES \; N=10$ & $3\times 10^{-2}$ \\
        $\TES \; N=100, W=1$ & $10^{-3}$\\
        $\PES \; N=100, W=1$ & $3 \times 10^{-4}$ \\
        $\GPES_K \; N=100, W=1, K=4$ & $3 \times 10^{-4}$ \\
        $\GPES_K \; N=100, W=1, K=16$ & $3 \times 10^{-4}$ \\
        $\GPES_K \; N=100, W=1, K=64$ & $3 \times 10^{-4}$ \\
        $\GPES_K \; N=100, W=1, K=256$ & $3 \times 10^{-4}$ \\
        $\NRES \; N=100, W=1$ & $3 \times 10^{-4}$\\
    \end{tabular}
    \label{tab:lopt_fashion_mnist_hyperparameters}
\end{table}

\paragraph{Meta-training a learned optimizer to train 4-layer CNN on CIFAR-10.}

The inner problem model is a 4-layer ReLU-activated Fully Convolutional Neural Networks with 32 channels per layer produced by $(3, 3)$ filters with stride size of $1$ and same padding. The convolutional layer is followed with a $(2, 2)$ average pooling layer with stride size of $2$ and valid padding. After the 4-th convolutional block, we flatten the features and apply an affine transformation to predict the logits over the 10 classes in CIFAR-10. We split the CIFAR-10 \citep{krizhevsky2009learning} dataset with the first 80\% images from its training set for inner problem training and the last 20\% images for inner problem validation. During the inner training, we randomly sample a training batch of size 32 to compute the training gradient using the cross entropy loss. After updating the inner model with the learned optimizer, we evaluate the meta-loss (our objective) using cross entropy on a randomly sampled validation batch of size 32. Following \cite{metz2022practical}, we apply thresholding to each step's meta loss to cap it at $1.5 \times \ln(10)$. Without this thresholding, the meta losses could become greater than $10^4$. This choice of thresholding introduces discontinuity to the loss function, making automatic differentiation inappropriate for this task. One inner problem training lasts $T=1000$ steps of updates using the learned optimizer. We aim to minimize the average meta validation loss over $T=1000$ steps. We fix the sequence of training and validation batches to isolate the problem to learning on a single unrolled computation graph instead of a distribution of UCGs. We compare among different ES methods on this task:

\begin{itemize}
    \item For $\FullES$, we set the number of workers $N = 1$ as we observe that it could already optimize the meta losses efficiently. 
    \item For all the online ES methods, we use the truncation window size of $W=100$. To match the total number of unroll steps used by $\NRES$ with $\FullES$, we use $N=10$ for all online ES methods ($\TES$, $\PES$, $\NRES$).
    \item  For all the ES methods, we use the smoothing standard deviation at $\sigma = 0.01$ inspired by the hyperparameter choice made in \cite{metz2019understanding}.
\end{itemize}

When meta-training the learned optimizer, we use Adam with the default parameters (other than a tuneable learning rate) to optimize the meta-training loss. For each gradient estimation method, we tune the learning rate in the following range
\begin{align}
    \{10^{-3}, 3\times 10^{-4}, 10^{-4},\}
\end{align}
For each ES method, the tuned learning rate is given in Table~\ref{tab:lopt_cifar_hyperparameters}.

\begin{table}[h!]
    \centering
    \small
    \caption{Tuned Learning rates used for different ES gradient estimators on the meta-training learned optimizer task to train 4-layer CNN on CIFAR-10 for $T=1000$ steps}
    \begin{tabular}{l|r}
        method name & Adam learning rate (schedule)  \\
        \hline
        $\FullES \; N=1$ & $10^{-3}$ \\
        $\TES \; N=10, W=100$ & $10^{-3}$\\
        $\PES \; N=10, W=100$ & $3 \times 10^{-4}$ \\
        $\NRES \; N=10, W=100$ & $3 \times 10^{-4}$\\
    \end{tabular}
    \label{tab:lopt_cifar_hyperparameters}
\end{table}

\subsubsection{Reinforcement Learning}
We use the Mujoco tasks Swimmer-v4 and Half Cheetah-v4 from Open AI gym (gymnasium version) \citep{brockman2016openai} with default settings. %
Here, following \citep{mania2018simple}, we learn a deterministic linear policy that maps the observation space to the action space: for Swimmer-v4, this amounts to mapping an 8-dimensional observation space to a 2-dimensional action space ($d = 16)$, while for Half Cheetah-v4, this amounts to mapping a 17-dimensional observation space to a 6-dimensional action space ($d = 102$). Because we can't differentiate through the dynamics, we only consider ES methods for this task.

Regarding the hyperparameters used for the Swimmer task,
\begin{itemize}
    \item We use $N=3$ $\FullES$ workers.
    \item For all the online ES methods, we fix the truncation window size at $W=100$ and use $N = 30$ workers. This choice makes $\NRES$ and $\PES$ use the same number of total environment steps as $\FullES$.
    \item We choose $\sigma=0.3$ for all ES methods after tuning it for $\FullES$.
\end{itemize}

For Swimmer-v4, we tune the learning rate used by the SGD algorithm in the following range:
\begin{align}
    \{10^{2}, 3 \times 10^1, 10^{1}, 3 \times 10^{0}, 10^{0}, \}
\end{align}

The tuned learning rates for each method on the Swimmer task are given in Table~\ref{tab:swimmer_hyperparameters}. 

\begin{table}[h!]
    \centering
    \caption{Tuned learning rates used for different ES gradient estimators on the Mujoco Swimmer-v4 task}
    \begin{tabular}{l|r}
        method name & SGD learning rate \\
        \hline
        $\FullES \; N=3$ & $1 \times 10^{0}$ \\
        $\TES \; N=30, W=100$ & $3\times 10^{1}$\\
        $\PES \; N=30, W=100$ & $1 \times 10^{0}$ \\
        $\NRES \; N=30, W=100$ & $3 \times 10^{0}$\\
    \end{tabular}
    \label{tab:swimmer_hyperparameters}
\end{table}

Regarding the hyperparameters used for the Half Cheetah task,
\begin{itemize}
    \item We use $N=6$ $\FullES$ workers.
    \item For the online ES methods, we fix the truncation window size at $W=20$ and use $N = 300$ workers. This choice makes $\NRES$ and $\PES$ use the same number of total environment steps as $\FullES$.
    \item We choose $\sigma=0.004$ for all ES methods after tuning it for $\FullES$.
\end{itemize}

For Half Cheetah-v4, we tune the learning rate used by the SGD algorithm in the following range:
\begin{align}
    \{10^{-4}, 3 \times 10^{-5}, 10^{-5}, 3\times 10^{-6}\}
\end{align}
The tuned learning rates for each method on the Half Cheetah task are given in Table~\ref{tab:halfcheetah_hyperparameters}. We notice that $\PES$'s learning rate is tuned to be much smaller than the other methods because it suffers from high variance.
\clearpage
\begin{table}[h!]
    \centering
    \small
    \caption{Tuned learning rates used for different ES gradient estimators on the Mujoco Half Cheetah-v4 task}
    \begin{tabular}{l|r}
        method name & SGD learning rate \\
        \hline
        $\FullES \; N=6$ & $3 \times 10^{-5}$ \\
        $\TES \; N=100, W=100$ & $3 \times 10^{-5}$\\
        $\PES \; N=100, W=100$ & $3 \times 10^{-6}$ \\
        $\NRES \; N=100, W=100$ & $3 \times 10^{-5}$\\
    \end{tabular}
    \label{tab:halfcheetah_hyperparameters}
\end{table}

\subsection{Computation resources}
For any gradient estimation method on each of the three applications, we can train on a GPU machine with a single Nvidia GeForce RTX $3090$ GPU. As shown in the Figure~\ref{fig:lorenz} (Lorenz) and Figure~\ref{fig:lopt_full_comparison}(a) (learned optimizer) , $\NRES$ can converge in wall-clock time in less than $150$ and $2500$ seconds respectively. For the reinforcement learning mujoco tasks Swimmer and Half Cheetah, because the environment transitions are computed on CPUs in OpenAI Gym, we can run the entire experiment on CPU and it takes less than 2 hours and 10 hours respectively to finish training for all the ES methods. We believe these tasks are valuable experiment set ups to analyze new online evolution strategies methods in the future.

\subsection{Experiment implmentation}

Our codebase is inspired by the high level logic in the codebase by \citep{metz2022practical}. We provide our code implementation in JAX \cite{jax2018github} at \href{https://github.com/OscarcarLi/Noise-Reuse-Evolution-Strategies}{https://github.com/OscarcarLi/Noise-Reuse-Evolution-Strategies}.

\section{Broader Impacts and Limitations of NRES}
\label{app_sec:limitations}
As we have shown through multiple applications in Section \ref{sec:exps}, $\NRES$ can be an ideal choice of gradient estimator for unrolled computation graphs when the loss surface has extreme local sensitivity or is black-box/non-differentiable. In addition to being simple to implement, we show that NRES results in faster convergence than existing AD and ES methods in terms of wall-clock time and number of unroll steps across a variety of applications, including learning dynamical systems, meta-training learned optimizers, and reinforcement learning. These efficiency gains and lower resource requirements can in turn reduce the environmental impact and cost required to enable such applications.

However, just as any other useful tool in the modern machine learning toolbox, we note that $\NRES$ should only be used in  appropriate scenarios. Here, we additionally discuss limitations of $\NRES$ (which are often limitations shared by a class of methods $\NRES$ belongs to) and in what scenarios it might not be the first choice to use.

\textbf{Limitations of $\NRES$ as an evolution strategies method.} Here we discuss the limitations of $\NRES$ as an ES method (both limitations below are common to all ES methods).
\begin{itemize}[leftmargin=*]
    \item \textbf{[Dependence on $d$]} As all the evolution strategies methods only have access to zeroth-order information, their variance has a linear dependence on the parameter dimension $d$ (see the first term in Equation~\eqref{eq:variance} in Theorem~\ref{thm:variance}). As a result, the most ideal use cases of ES methods (including $\NRES$) are when the parameter dimension $d$ is lower than $100,000$. As we have seen in Section \ref{sec:exps}, many interesting applications would have parameter dimensions in this range. However, even in cases when $d$ is larger than this range, if the loss surface is non-differentiable or has high local sensitivity, one might still need to use $\NRES$ because AD methods might either be unable to compute the gradient or give noninformative gradients for effective optimization. On the other hand, AD methods (specifically Reverse Mode Differentiation methods $\BPTT$ and $\TBPTT$) should be the first choice when the parameter dimension is large and the loss surface is differentiable and well-behaved, because their variances do not suffer from a linear dependence on $d$.
    
    \item \textbf{[Dependence on $\sigma$]} The variance $\sigma$ of the smoothing isotropic Gaussian distribution for $\bepsilon$ is an extra hyperparameter ES methods introduce. In contrast, stochastic forward mode method $\DODGE$ doesn't have such a hyperparameter and can instead always sample from the standard Gaussian distribution (i.e. $\sigma = 1$). Assuming the loss surface is locally quadratic, differentiable, and well-behaved, $\DODGE$ (with standard Gaussian distribution) would have similar variance properties as $\NRES$ and is a preferable choice than $\NRES$ among stochastic gradient estimation methods due to its lack of hyperparameters to tune. On the other hand, as we have seen in Figure~\ref{app_fig:lorenz_ad_es_comparison} and \ref{app_fig:lopt_ad_es_comparison_fashion_mnist}, when the loss surface are non-smooth and have many local fluctuations, $\DODGE$ would perform much worse than $\NRES$, because $\DODGE$ unbiasedly estimates the pathological loss surface's true gradient without smoothing.
\end{itemize}

\textbf{Limitations of $\NRES$ as an online method.} Next we discuss two limitations of $\NRES$ as an \textit{online} gradient estimation method for unrolled computation graphs. The limitations are not specific to $\NRES$ but apply more broadly to classes of online methods.
\begin{itemize}[leftmargin=*]
    \item \textbf{[Hysteresis]}
    Any online gradient estimation methods (including $\TBPTT$, $\DODGE$, $\UORO$, $\TES$, $\PES$, and $\NRES$) would have bias due to hysteresis (see Section~\ref{sec:background}) because the parameter $\theta$ changes across adjacent partial online unrolls in the same UCG episode. Although we do not observe much impact of this bias in the applications and hyperparamter combinations considered in this paper, it is conceivable for some scenarios it could become an issue. In those cases, one should consider the tradeoffs of using $\NRES$ versus using $\FullES$ -- $\NRES$ allows more frequent gradient updates than $\FullES$ while $\FullES$ avoids the bias from hysteresis.
    
    \item \textbf{[$T$'s potential dependence on $\theta$]} In this paper, we assume the unrolled computation graph is of length $T$, where $T$ doesn't depend on the value of the parameter $\theta$. However, for some UCGs, an episode's length might be conditioned on what value of $\theta$ is used. For example, consider a robotic control problem where the robot could physically break when some action is taken. In this case, a prespecified episode length might not be reached. In Mujoco, these types of termination are called unhealthy conditions. This is potentially problematic for antithetic online ES methods (including $\TES$, $\PES$, and $\NRES$) when only one of the two antithetic trajectories terminate early but not the other. In this case, one heuristic would be to directly reset both particles, but by doing this, one would ignore this useful information about the difference in antithetic directions. Besides, the early termination of some workers might also make the distribution of workers' truncation windows not truly uniform over the entire episode's length $[T]$. How to gracefully handle these cases (through better heuristics/principled approaches) would be an interesting question of future work to make online antithetic ES methods (especially $\NRES$) more applicable for these problems.
\end{itemize}

\end{document}